\documentclass[]{fairmeta}

\usepackage{latexsym}

\usepackage[T1]{fontenc}

\usepackage[utf8]{inputenc}

\usepackage{microtype}

\usepackage{inconsolata}

\usepackage{graphicx}

\usepackage{wrapfig} 

\usepackage{amsmath,amsfonts,bm}

\def\eqref#1{equation~\ref{#1}}

\def\1{\bm{1}}

\DeclareMathAlphabet{\mathsfit}{\encodingdefault}{\sfdefault}{m}{sl}
\SetMathAlphabet{\mathsfit}{bold}{\encodingdefault}{\sfdefault}{bx}{n}

\usepackage[utf8]{inputenc} 
\usepackage[T1]{fontenc}    
\usepackage{hyperref}       
\usepackage{url}            
\usepackage{booktabs}       
\usepackage{amsfonts}       
\usepackage{nicefrac}       
\usepackage{microtype}      
\usepackage{xcolor}         

\usepackage{amsmath}
\usepackage{amssymb}
\usepackage{mathtools}
\usepackage{amsthm}
\usepackage{algorithm}
\usepackage{setspace} 
\usepackage{enumitem}
\usepackage{algorithmic}

\usepackage{multirow}
\usepackage{indentfirst}
\usepackage{subcaption}
\usepackage{xcolor}
\usepackage{thmtools}
\usepackage{multirow}

\usepackage{siunitx}
\usepackage{booktabs}
\usepackage[textsize=tiny]{todonotes}

\usepackage{hyperref}
\hypersetup{
    colorlinks,
    linkcolor={red!50!black},
    citecolor={blue!50!black},
    urlcolor={blue!80!black}
}

\usepackage[most]{tcolorbox}

\usepackage{xcolor}
\newcommand{\blue}[1]{\textcolor{blue}{#1}}
\newcommand{\red}[1]{\textcolor{red}{#1}}

\newtheorem{theorem}{Theorem}

\numberwithin{mytheorem}{section} 
\newtheorem{proposition}{Proposition}
\newtheorem{lemma}{Lemma}

\newtheorem{remark}{Remark}

\newtheorem*{theorem*}{Theorem}
\newtheorem*{lemma*}{Lemma}
\newtheorem*{proposition*}{Proposition}

\usepackage[xindy, acronyms, nowarn]{glossaries}   %
\glsdisablehyper
\makeglossaries

\newglossaryentry{LLM}
{
  name={LLM},
  plural={LLMs},
  description={large language model},
  first={large language model (\glsentrytext{LLM})},
  descriptionplural={large language models},
  firstplural={large language models (\glsentryplural{LLM})}
}

\newglossaryentry{FM}
{
  name={FM},
  plural={FMs},
  description={foundation model},
  first={foundation model (\glsentrytext{FM})},
  descriptionplural={foundation models},
  firstplural={foundation models (\glsentryplural{FM})}
}

\newglossaryentry{AI}
{
  name={AI},
  plural={AIs},
  description={artificial intelligence},
  first={artificial intelligence (\glsentrytext{AI})},
  descriptionplural={artificial intelligence systems},
  firstplural={artificial intelligence systems (\glsentryplural{AI})}
}

\newglossaryentry{RLHF}
{
  name={RLHF},
  description={Reinforcement Learning from Human Feedback},
  first={Reinforcement Learning from Human Feedback (\glsentrytext{RLHF})}
}

\newglossaryentry{RLAIF}
{
  name={RLAIF},
  description={Reinforcement Learning from AI Feedback (RLAIF)},
  first={Reinforcement Learning from AI Feedback (\glsentrytext{RLAIF})}
}

\newglossaryentry{RLHS}
{
  name={RLHS},
  description={Reinforcement Learning from Hindsight Simulation (RLHS)},
  first={Reinforcement Learning from Hindsight Simulation (\glsentrytext{RLHS})}
}

\newglossaryentry{DPO}
{
  name={DPO},
  description={direct preference optimization (DPO)},
  first={direct preference optimization (\glsentrytext{DPO})}
}

\newglossaryentry{PPO}
{
  name={PPO},
  description={proximal policy optimization (PPO)},
  first={proximal policy optimization (\glsentrytext{PPO})}
}

\newglossaryentry{MDP}
{
  name={MDP},
  description={Markov decision process (MDP)},
  first={Markov decision process (\glsentrytext{MDP})}
}

\newglossaryentry{POMDP}
{
  name={POMDP},
  description={partially observable Markov decision process (POMDP)},
  first={partially observable Markov decision process (\glsentrytext{POMDP})}
}

\newglossaryentry{RL}
{
  name={RL},
  description={reinforcement learning (RL)},
  first={reinforcement learning (\glsentrytext{RL})}
}

\usepackage{ifthen}
\newboolean{include-notes}
\newboolean{include-new}
\newboolean{include-remove}
\setboolean{include-notes}{true}
\setboolean{include-new}{true}
\setboolean{include-remove}{false}

\usepackage{xcolor}
\usepackage[normalem]{ulem}
\newcommand{\haimin}[1]{\ifthenelse{\boolean{include-notes}}{\textcolor{teal}{\textbf{Haimin:} #1}}{}}
\newcommand{\remove}[1]{\ifthenelse{\boolean{include-remove}}{\textcolor{red}{\sout{#1}}}{}}

\definecolor{grey}{HTML}{8C8C8C}
\definecolor{porange}{HTML}{E77500}
\definecolor{purple}{HTML}{9437FF}
\definecolor{trired}{HTML}{D04236}
\definecolor{magenta}{HTML}{FF40FF}
\definecolor{appleblue}{HTML}{3478F6}

\title{\resizebox{\textwidth}{!}{%
  Learning Personalized Agents from Human Feedback%
}}

\author[1,2,*]{Kaiqu Liang}
\author[1]{Julia Kruk}
\author[1]{Shengyi Qian}
\author[1]{Xianjun Yang}
\author[1]{Shengjie Bi}
\author[1]{Yuanshun Yao}
\author[1]{Shaoliang Nie}
\author[1]{Mingyang Zhang}
\author[1]{Lijuan Liu}
\author[2]{Jaime Fernández Fisac}
\author[1,3,*]{Shuyan Zhou}
\author[1]{Saghar Hosseini}

\affiliation[1]{Meta Superintelligence Labs}
\affiliation[2]{Princeton University}
\affiliation[3]{Duke University}

\contribution[*]{Work done at Meta}

\abstract{Modern AI agents are powerful but often fail to align with the idiosyncratic, evolving preferences of individual users. Prior approaches typically rely on static datasets, either training implicit preference models on interaction history or encoding user profiles in external memory. However, these approaches struggle with new users and with preferences that change over time. We introduce Personalized Agents from Human Feedback (PAHF), a framework for continual personalization in which agents learn online from live interaction using explicit per-user memory. PAHF operationalizes a three-step loop: (1) seeking pre-action clarification to resolve ambiguity, (2) grounding actions in preferences retrieved from memory, and (3) integrating post-action feedback to update memory when preferences drift.
To evaluate this capability, we develop a four-phase protocol and two benchmarks in embodied manipulation and online shopping. These benchmarks quantify an agent's ability to learn initial preferences from scratch and subsequently adapt to persona shifts. Our theoretical analysis and empirical results show that integrating explicit memory with dual feedback channels is critical: PAHF learns substantially faster and consistently outperforms both no-memory and single-channel baselines, reducing initial personalization error and enabling rapid adaptation to preference shifts.}

\date{\today}
\correspondence{Kaiqu Liang at \email{kl2471@princeton.edu}}

\metadata[Project page]{\url{personalized-ai.github.io}}

\metadata[Code]{\url{https://github.com/facebookresearch/PAHF}}

\begin{document}

\maketitle

\section{Introduction}

Modern AI agents, powered by large language models (LLMs), are increasingly capable of complex reasoning and task execution. This success has fueled their application in user-facing roles, from digital assistants \citep{zhou2023webarena,xie2024osworld,xu2024theagentcompany} to embodied robots \citep{ahn2022can,huang2022inner,liang2024introspective}. However, it remains a significant challenge for these agents to take actions that align with the complex, idiosyncratic, and often dynamic preferences of individual humans.

Prior works on building personalized assistants have generally relied on pre-existing, static data—either training models to infer implicit preferences from historical interaction logs~\citep{qiu2025bayesian,chen2024pad,tan2024democratizing,li2024personalized} or populating agent memory with pre-defined user profiles and past interaction history~\citep{chhikara2025mem0,xu2025mem}.
However, this reliance on static data poses a fundamental limitation, as the agent cannot properly adapt to real-world human behavior. This failure manifests in three core challenges that most existing systems are not designed to handle. (1) The agent cannot immediately adapt to new users, as it has no profile or interaction history to learn from. (2) Static models are not designed to learn from real-time corrective feedback, when the agent makes an error, it cannot easily update its behavior to avoid that same mistake in the future. (3) The framework fails to handle non-stationary user preferences, as it cannot adapt when a user's persona or preferences evolve over time.

To address these challenges, we introduce the Personalized Agents from Human Feedback (PAHF) framework. PAHF moves away from the static, pre-existing data assumption and treats interaction itself as the primary learning signal.
We simulate long-horizon sequential decision-making, where each user corresponds to a sequence of tasks and each new interaction depends on the agent’s accumulated preference memory.
Within this setting, PAHF uses explicit per-user memory updated from pre- and post-action feedback to continually refine the agent’s behavior. This design is intended to be robust to the realities of human interaction: it enables the agent to build a user’s preference profile from scratch, learn directly from its mistakes, and continually adapt that profile as preferences drift over time.

Our framework operationalizes this continual learning via a novel, three-step interactive loop. The first step is Pre-Action Interaction. When given an ambiguous instruction, the agent first queries its memory. If no relevant preference is found (e.g., for a new user), it proactively asks the user for clarification (e.g., ``Which drink do you prefer?''). This step is designed to resolve ``known uncertainty'' and allows the agent to learn new preferences before making a costly, erroneous action. The second step is Action Execution, where the agent grounds its decision in its current belief state, synthesizing the user's instruction and the preferences retrieved from its memory to execute an action. The final and most critical step is Post-Action Feedback Integration. If the agent's action is incorrect—either from a wrong guess or, more importantly, from applying an outdated preference—the user provides corrective feedback (e.g., ``Actually, I like Sprite most now''). PAHF is explicitly designed to integrate this feedback, using it to revise and update the explicit preferences stored in its memory. This reactive loop is the essential mechanism for correcting miscalibrated beliefs and adapting to non-stationarity, which pre-action queries alone cannot solve.

Our primary contributions are threefold. 
(1) We introduce PAHF, a continual agent personalization framework that couples explicit per-user memory with proactive pre-action clarification to resolve ambiguity and reactive post-action feedback to handle preference drift.
(2) We develop an evaluation suite for continual personalization, comprising two new large-scale benchmarks (embodied manipulation and online shopping) spanning physical and digital agents, and a four-phase protocol that separates initial preference learning from adaptation under persona shifts. 
(3) We provide theoretical and empirical analysis showing that PAHF consistently reduces personalization error over no-memory and single-channel baselines, and that explicit memory combined with post-action feedback is particularly important for robust personalization without pre-existing user data.

\section{Related Work}

\begin{figure*}[!t]
  \centering
   \includegraphics[width = \linewidth]{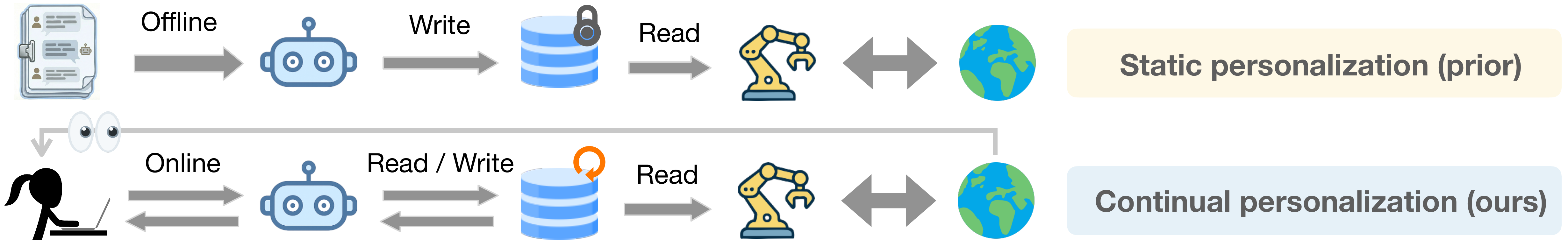}
   \caption{\textbf{Static vs. continual personalization.} Top: Static personalization, where offline human–AI logs are processed once to populate user profiles, and at deployment the agent only reads from this static memory. Bottom: Our continual personalization framework keeps the user in the loop during online interaction: the agent consults and updates explicit memory in the pre-action stage, and the user observes the outcome in the environment after the agent acts and provides post-action feedback, which is written back to memory to correct errors and adapt as preferences change.
   \label{fig:teaser}}
\end{figure*}

\begin{figure*}[!t]
  \centering
   \includegraphics[width = \linewidth]{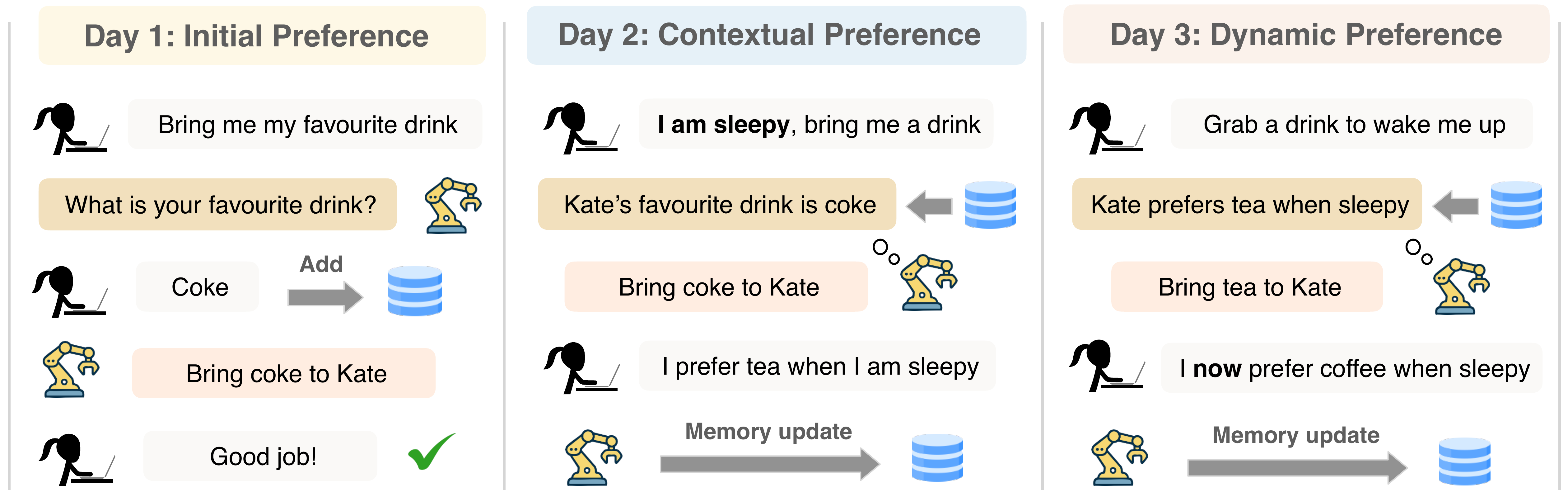}
   \caption{\textbf{Continual agent personalization using pre- and post-action feedback channels.} On Day 1 (Initial Preference), the agent uses a pre-action interaction to learn Kate’s favorite drink (``Coke'') before acting. On Day 2 (Contextual Preference), the agent overgeneralizes this memory when Kate is sleepy; after serving Coke, it is corrected via post-action feedback (``I prefer tea when I am sleepy''). On Day 3 (Dynamic Preference), the agent acts consistently with its memory (serving tea), but Kate’s underlying preference has shifted to coffee. This error triggers another post-action update, illustrating how the framework supports continual personalization under preference drift.
   \label{fig:teaser}}
\end{figure*}

\textbf{Personalized Alignment of LLMs.}
Early approaches to aligning large language models with human preferences rely on reinforcement learning from human feedback (RLHF)~\citep{christiano2017deep,ziegler2019fine,ouyang2022training}. Subsequent work has shown that much of RLHF’s benefit can be recovered via supervised fine-tuning on preference data without explicit RL~\citep{rafailov2024direct,zhao2023slic,yuan2023rrhf,hong2024orpo,ethayarajh2024kto}. Recently, work has highlighted emergent untruthful behaviors induced by RLHF~\citep{liang2025machine} and proposed hindsight simulation as a mitigation~\citep{liang2025rlhs}. This hindsight perspective directly motivates our post-action feedback channel, where the human provides feedback after the agent acts, and the agent uses it to correct and update its memory.
While these methods align to aggregate preferences, recent work explores \emph{personalized} alignment, adapting models to individual users via multi-dimensional objectives or personalized RLHF~\citep{chen2024pad,tan2024democratizing,li2024personalized}. However, these approaches typically rely on substantial pre-collected user data and one-off fine-tuning, making it hard to handle users with no history data and to track evolving preferences.

\textbf{Personalized LLM Agents and Memory.}
A major line of work personalizes LLM outputs through retrieval-augmented generation (RAG)~\citep{lewis2020retrieval,borgeaud2022improving,gao2023retrieval,yu2024chain,wang2023learning}, where user histories are retrieved and injected into context at query time~\citep{salemi2024optimization,salemi2024lamp,richardson2023integrating}. Building on this, recent methods add “thinking” procedures that reason about preferences or select operations such as reasoning, personalization, and clarification during interaction~\citep{Li_2024,salemi2025pathwaysthoughtsmultidirectionalthinking}, or define static persona profiles that enable role-playing but encode coarse, fixed traits rather than fine-grained, evolving preferences~\citep{shao2023character,wang2024rolellm,hu2024quantifying}. These techniques are also used in life-long personalization of LLMs, which updates high-level persona profiles for conversational assistants from accumulated dialogue history~\citep{wang2024ai}. In contrast, we study continual personalization for environment-acting agents in long-horizon sequential tasks.

Recent work has proposed increasingly sophisticated memory architectures for LLM agents~\citep{mei2024aios,chhikara2025mem0,zhong2024memorybank,packer2023memgpt,wang2023enhancing,xu2025mem}, emphasizing long-term state tracking, summarization, and retrieval. In contrast, our memory design is deliberately simple: we use a lightweight agentic memory to isolate the effect of \textit{feedback channels} on continual personalization, rather than introducing new architectural complexity. These advanced memory systems are complementary to our framework and could be integrated into it to further improve scalability and capacity.

\textbf{Personalized Agents in Interactive Environments.}
Personalization for embodied agents has been extensively studied in spatial arrangement~\citep{kapelyukh2022my,wu2023tidybot}, table settings~\citep{puig2020watch}, and object navigation~\citep{dai2024think,barsellotti2024personalized}, as well as inferring user preferences for planning~\citep{xu2025learning}. In flight recommendation~\citep{lin2022inferring}, recent work leverages Bayesian cognitive models~\citep{griffiths2007topics,griffiths2024bayesian,tenenbaum2006theory,tenenbaum2011grow} to teach LLM agents to approximate probabilistic reasoning for personalized recommendations~\citep{qiu2025bayesian}. However, these methods largely rely on implicit preference modeling and offline fine-tuning, which requires extensive historical user data and can be costly to update when user preferences change. Although recent agent memory works~\citep{chhikara2025mem0} introduce explicit preference modeling, they typically assume pre-populated user profiles and do not address the challenge of learning online from live interactions for continual personalization.
Finally, while PREFDISCO~\citep{li2025personalizedreasoningjustintimepersonalization} benchmarks interactive preference discovery, it remains limited to static personas within short-horizon dialogues.

\section{Personalized Agents from Human Feedback}
\label{sec:pahf_framework}

The core objective of the Personalized Agents from Human Feedback (PAHF) framework is to enable an agent to continually align with a user's unique and evolving preferences. Unlike static personalization approaches that learn from fixed datasets, PAHF learns \emph{online from interaction}, continuously updating its internal model of the user. This framework is explicitly designed to operationalize a continual learning loop in which the agent can \texttt{add} new preferences (e.g., preferring energy-efficient appliances), \texttt{revise} outdated ones (e.g., a changed favorite drink), and \texttt{exploit} this knowledge to improve its actions over time.

\subsection{Formalizing Continual Personalization}
\label{sec:formalizing}
We formalize continual personalization as an online learning process. At each interaction $t$, the user has a latent preference state $M_t^*$, and issues an instruction $I_t$ while the agent observes $O_t$ (e.g., a scene or product catalogue) conditional on $M_t^*$. The agent maintains an explicit preference memory $\hat M_t$ (a persistent estimate of the user's preferences) and chooses an action $a_t$ in response to $(I_t,O_t,\hat M_t)$. Let $a_t^*$ denote an optimal action under the true preferences $M_t^*$.
The agent's objective is to learn a personalized policy $\pi(a_t \mid I_t, O_t, \hat M_t)$ that minimizes the cumulative \emph{personalization error} $\sum_{t=1}^T L_t$, where
$
L_t \;=\; \mathbf{1}[a_t \neq a_t^*]
$
is the instantaneous $0$--$1$ loss at round $t$ (details in Appendix \ref{sec:formal_proof}). This problem is challenging because the agent's preference memory $\hat M_t$ is prone to two distinct types of error:
\begin{itemize}[topsep=0.2em, parsep=0em, itemsep=5pt, leftmargin=1em]
    \item \textbf{Partial Observability:} The true state $M_t^*$ is hidden. The agent's memory $\hat M_t$ may be incomplete (e.g., $\hat M_t = \emptyset$ for a new task), leading to errors from \emph{known uncertainty}.

    \item \textbf{Non-Stationarity:} The user's preferences are non-stationary, meaning $M_t^*$ can evolve into $M_{t+1}^*$ at any time (i.e., ``preference drift''). This leads to errors from \emph{miscalibration}, where the agent is \emph{confidently wrong} (e.g., it believes $\hat M_t = \{\text{likes Coke}\}$ when $M_t^* = \{\text{likes Sprite}\}$).
\end{itemize}

We call a round \emph{ambiguous} if, under the agent's current information state, the Bayes-optimal error is at least some fixed constant $\varepsilon_0>0$; let $\gamma\in[0,1]$ denote the fraction of such ambiguous rounds. In this setting, a static policy is insufficient. The agent must execute a dynamic process that \emph{jointly} (1) executes tasks based on its current estimate $\hat M_t$, and (2) ingests human feedback to update $\hat M_t \rightarrow \hat M_{t+1}$, continually improving its approximation of $M_{t+1}^*$.

\subsection{PAHF Framework}
\label{sec:pahf_loop}

The PAHF framework is our practical algorithm for solving this online personalization problem. It operationalizes the estimate update $\hat M_t \rightarrow \hat M_{t+1}$ through an interactive three-step loop designed to mitigate errors from both partial observability and non-stationarity.

\textbf{Pre-Action Interaction.} This step is designed to efficiently mitigate errors from \textit{partial observability}. The loop begins when the agent receives $I_t$ and observes $O_t$. It first queries its memory $\hat M_t$ for relevant preferences, $m_t = \operatorname{Retrieve}(\hat M_t, I_t, O_t)$. If the agent identifies ambiguity (e.g., the instruction is ambiguous and no relevant information is found in memory), it then proactively generates a clarification query $q_t$ to the user, receiving pre-action feedback $f_t^{\text{pre}}$. Crucially, this feedback is parsed and written to memory before acting:
\begin{equation}
\label{eq:pre_update}
\hat M'_t = \mathcal{F}_{\operatorname{update}}^{\text{pre}}(\hat M_t, I_t, O_t, m_t, q_t, f_t^{\text{pre}}).
\end{equation}
This allows the agent to resolve ``known uncertainty'' using $(m_t, q_t, f_t^{\text{pre}})$ when choosing its action, while also storing the information in $\hat M'_t$ for future rounds.

\textbf{Action Execution.}
The agent's action policy $\pi_{\operatorname{act}}$ formulates the final executable action $a_t$. Its role is to synthesize all available information, conditioning jointly on the original instruction $I_t$, the observation $O_t$, the retrieved preferences $m_t$, and the pre-action interaction context $(q_t, f_t^{\text{pre}})$:
\begin{equation}
a_t = \pi_{\operatorname{act}}(I_t, O_t, m_t, q_t, f_t^{\text{pre}}).
\end{equation}
In the case where no clarification is needed, $q_t$ and $f_t^{\text{pre}}$ are null and $\hat M'_t = \hat M_t$. The policy must intelligently integrate these inputs to form a coherent plan. For example, given the task $I_t = \text{``Bring my favorite drink''}$ and observation $O_t$ showing Coke and Sprite, the policy's behavior adapts: if $m_t$ provides a relevant preference (e.g., $\{\text{favorite drink is Coke}\}$), the policy uses this directly to formulate $a_t$ (e.g., pick up the Coke). However, if $m_t$ is empty, the agent must rely entirely on the interaction (e.g., $q_t = \text{``Which drink do you prefer?''}$, $f_t^{\text{pre}} = \text{``I'd like a Coke''}$) to resolve the ambiguity and formulate the same action. The agent then executes $a_t$ in the environment, which transitions the world into a new observable state $O_{t+1}$.

\textbf{Post-Action Feedback Integration.}
This step is the agent's \emph{essential} mechanism for learning from errors. Post-action feedback is triggered \emph{only if} the action $a_t$ (taken in the previous step) results in a non-optimal outcome, leading to a new state $O_{t+1}$ that the user deems incorrect. While this reactive channel can also address \emph{partial observability} (e.g., learning an unknown preference after a wrong guess), its unique power is in correcting \emph{miscalibration}---the ``confidently wrong'' state that pre-action queries cannot detect.

After observing the outcome, the user provides post-action feedback $f_t^{\text{post}}$. We then prompt an LLM as a \emph{feedback detector} to determine whether $f_t^{\text{post}}$ contains salient, personalized information that should be stored. If such information is found (e.g., ``Actually, I like Sprite most now''), this feedback drives the post-action update function. This function integrates the new feedback with the current memory state $\hat M'_t$, producing the next memory state $\hat M_{t+1}$:
\begin{equation}
\hat M_{t+1} = \mathcal{F}_{\operatorname{update}}^{\text{post}}(\hat M'_t, I_t, m_t, q_t, f_t^{\text{pre}}, a_t, f_t^{\text{post}}).
\end{equation}
If no post-action feedback is given, or if the detector classifies the feedback as non-informational (e.g., ``Thank you''), the memory state simply carries over: $\hat M_{t+1} = \hat M'_t$. This post-action update allows the agent to \texttt{add}, \texttt{revise}, and \texttt{refine} its knowledge, such as revising $m_{\text{coke}} \rightarrow m_{\text{sprite}}$ after $f_t^{\text{post}} = \text{``I now like Sprite most.''}$

In the next section, we prove the necessity of both pre-action and post-action feedback channels. Pre-action feedback prevents initial errors caused by partial observability, whereas post-action feedback is essential for adapting to preference drift and correcting confidently wrong miscalibration. Their combination minimizes cumulative personalization error.

\subsection{Theoretical Justification for PAHF}
\label{sec:theory-necessity}

\paragraph{Setup.}
At round $t\in[T]$, the user has latent preference state $M_t^*$ and the agent holds an estimate $\hat M_t$.
Let $a_t$ be the agent’s action and $a_t^*$ an optimal action under $M_t^*$.
Define the $0$–$1$ loss $L_t=\mathbf{1}[a_t\neq a_t^*]$.
Preferences are piecewise stationary with at most $K\!\ge\!1$ switches over $T$ rounds.
A round is \emph{ambiguous} if, under the agent’s information state (posterior conditioned on its transcript up to $t$), the Bayes-optimal error probability is at least some fixed constant $\varepsilon_0>0$; let $\gamma\in[0,1]$ denote the fraction of ambiguous rounds.
The agent may ask up to $k$ balanced $m$-ary \emph{pre-action} questions on ambiguous rounds, where $m\!\ge\!2$ is fixed, and may apply a \emph{post-action} update after a mistake. 

\begin{proposition}[Post-action feedback is necessary under preference drift]
\label{prop:post-necessary}
Suppose rounds are unambiguous under the agent's information state (i.e., $\gamma=0$) and preferences switch at most $K\!\ge\!1$ times.
Any policy that never uses post-action feedback incurs
$\mathbb{E}\!\left[\sum_{t=1}^T L_t\right]=\Omega(T)$.
Conversely, any policy that performs an update upon the first error after each switch makes at most $O(K)$ expected mistakes.
\end{proposition}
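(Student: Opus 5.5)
The lower bound is a worst case over drift sequences, so I would argue adversarially, or via a simple randomized instance; the upper bound is a counting argument over stationary segments.

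\textbf{Lower bound.} I would construct an instance with $K=1$, which suffices since $K\ge 1$.
\begin{itemize}[topsep=0.2em, parsep=0em, itemsep=3pt, leftmargin=1.5em]
    \item Take two candidate preference states $P_0$ and $P_1$ (e.g., ``likes Coke'' and ``likes Sprite'') whose optimal actions $a^*(P_0)\neq a^*(P_1)$ are distinct.
    \item The user starts in $P_0$. A switch to $P_1$ occurs at round $\lfloor T/2\rfloor$, or alternatively at a round $\tau$ drawn uniformly from the first half; the latter variant is only needed if the policy is randomized.
    \item Observations $O_t$ and instructions $I_t$ are taken to have the same distribution under both states. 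This keeps the rounds unambiguous in the paper's sense: the information state carries no evidence of drift, so the posterior remains concentrated on $P_0$.
    \item The key point is that a policy that never uses post-action feedback has a transcript that does not depend on whether a switch occurred. Its only informative channel, pre-action questions, is used only on ambiguous rounds, and since $\gamma=0$ it is never triggered.
    \item Hence its action distribution at each round $t$ is identical in the world with no switch and in the world with a switch before $t$.
    \item Any policy that is not wrong with constant probability in the no-switch world must play $a^*(P_0)$ with probability at least $1/2$ on most rounds. Those same plays are mistakes after the switch.
    \item I would make this precise by averaging over two worlds, $W_0$ with no switch and $W_1$ with a switch at $T/2$. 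Since $\mathbf{1}[a\neq a^*(P_0)]+\mathbf{1}[a\neq a^*(P_1)]\ge 1$ for any action, we get $\mathbb{E}_{W_0}[L_t]+\mathbb{E}_{W_1}[L_t]\ge 1$ for every $t>T/2$.
    \item Summing over $t>T/2$ shows that the expected loss is at least $T/4$ in one of the two worlds, which is $\Omega(T)$.
\end{itemize}

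\textbf{Upper bound.} Split $[T]$ into at most $K+1$ maximal stationary segments.
\begin{itemize}[topsep=0.2em, parsep=0em, itemsep=3pt, leftmargin=1.5em]
    \item Within a segment the true state $M^*$ is fixed.
    \item The policy updates $\hat M$ upon the first error in the segment. I would assume, as the idealized post-action update in the setup does, that an update after a mistake makes $\hat M$ agree with $M^*$ on the relevant preference.
    \item Because $\gamma=0$, acting on a correct estimate yields error at most the Bayes error. I would take this to be $0$, or interpret ``unambiguous'' as making the argmax action optimal.
    \item Each segment therefore contributes at most one mistake, or $O(1)$ mistakes in expectation if updates succeed with constant probability. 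The number of trials until success is then geometric.
    \item The total is at most $K+1=O(K)$.
\end{itemize}

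\textbf{Main obstacle.} The delicate step is reconciling ``unambiguous under the agent's information state'' with the indistinguishability argument. I would formalize unambiguity relative to the agent's own posterior, which is confident in the stale preference, rather than relative to the truth. Under that reading the adversarial switch is invisible to the no-post-feedback policy, and the lower bound goes through. For the upper bound, I would explicitly state the assumption that a post-action update corrects the memory for the current segment. Under that assumption the count is immediate.
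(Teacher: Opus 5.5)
Your proposal is correct and follows essentially the paper's route. The lower bound uses the same indistinguishability of transcripts for a policy that ignores post-action feedback: your two-world comparison $\mathbb{E}_{W_0}[L_t]+\mathbb{E}_{W_1}[L_t]\ge 1$ plays the role of the paper's average over a uniformly random switch time $\tau\in\{1,\dots,T\}$, which gives $\frac{1}{T}\sum_t\min\{t,T-t\}$. The upper bound is the same one-mistake-per-stationary-segment count under the corrective-update assumption the paper calls A2.

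Your $K+1$ segment count is slightly more careful than the paper's ``at most $K$''. The $\gamma=0$ tension you flag is also present, and unaddressed, in the paper's own proof. Your aside that a random switch time is "only needed if the policy is randomized" is inaccurate, because a single known switch time could simply be hard-coded even by a deterministic policy. Your final two-world averaging, which covers deterministic and randomized policies alike, is the argument to keep.
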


\begin{proof}[Sketch]
Without post-action feedback, pre-action histories are indistinguishable across switch times, so the agent cannot condition on when the change occurs. This failure to detect drift implies $\Omega(T)$ expected mistakes, as the agent persists with stale preferences for a constant fraction of rounds. Conversely, with post-action feedback revealing the correct action, the agent updates its explicit memory after the first error following each switch, yielding at most one mistake per switch ($\le K$).
\end{proof}

\begin{remark}[Context-dependent preferences]
Preferences may depend on context $C_t$ (e.g., time, location, state). 
When $C_t$ changes, using a context-agnostic ``global'' note can yield a confidently wrong action (without triggering pre–action queries); by Proposition~\ref{prop:post-necessary}, a post–action signal is needed to correct miscalibration.
\end{remark}

\begin{proposition}[Pre-action feedback is necessary under partial observability]
\label{prop:pre-necessary}
Assume $M_t^*$ is stationary and a fraction $\gamma>0$ of rounds are ambiguous.
Any policy that never requests pre-action clarification suffers
$\mathbb{E}\!\left[\sum_{t=1}^T L_t\right]=\Omega(\gamma T)$.
If the agent asks up to $k$ balanced $m$-ary pre-action questions \emph{on each ambiguous round}, the expected number of errors on those rounds is $O(\gamma T\cdot m^{-k})$.
\end{proposition}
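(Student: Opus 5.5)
The plan has two parts: a lower bound that comes straight from the definition of an ambiguous round, and an upper bound from a counting argument over the answers to the balanced $m$-ary questions. For the lower bound, I fix any policy that never requests pre-action clarification. On such a policy, the agent's transcript on each round is generated without any pre-action signal. So on each ambiguous round $t$ the action $a_t$ is a (possibly randomized) function of the information state, and the definition of ambiguity gives
\[
\Pr[a_t\neq a_t^*\mid \text{transcript}]\;\ge\;\varepsilon_0 .
\]
No action based only on that state can beat the Bayes-optimal error. Taking expectations and summing over the $\gamma T$ ambiguous rounds, with linearity of expectation and the trivial bound $L_t\ge 0$ elsewhere, yields
\[
\mathbb{E}\Bigl[\sum_t L_t\Bigr]\ge \varepsilon_0\gamma T=\Omega(\gamma T).
\]
Stationarity of $M_t^*$ is not needed for this direction; I would only remark that it ensures post-action updates cannot be what makes rounds ambiguous. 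Ambiguity is taken as given in the hypothesis of Proposition~\ref{prop:pre-necessary}.

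For the upper bound, I model the agent's uncertainty on an ambiguous round as a posterior over candidate optimal actions, or over preference states inducing them. A balanced $m$-ary question partitions the current support into $m$ cells of equal posterior mass. The user's truthful answer therefore selects a cell carrying $1/m$ of the prior mass, and after $k$ questions the surviving cell has mass $m^{-k}$ relative to the original. The agent then plays the MAP action within the surviving cell. The key quantity I would bound is the residual Bayes error after $k$ answers, namely the probability that the true optimal action differs from the MAP action in the final cell. I would argue this is at most the mass of the competing hypotheses inside that cell. Normalized by the initial uncertainty, which is at most $1$, this is $O(m^{-k})$ under the balancing assumption. Summing over the $\gamma T$ ambiguous rounds gives $O(\gamma T\,m^{-k})$. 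Unambiguous rounds are excluded from this count by the statement.

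The main obstacle is making the $m^{-k}$ step rigorous. Balanced questions shrink posterior mass geometrically, but the residual error is the mass not on the MAP hypothesis within the final cell, and that need not shrink if the posterior is concentrated in a way balancing cannot split. I would resolve this by adopting the idealized interpretation implicit in ``balanced'': each question partitions the set of remaining candidate actions, weighted by posterior, into $m$ equiprobable groups, and this remains possible for all $k$ rounds. The conditional error after $k$ answers is then the probability that more than one candidate survives. With a finite action set of size at most $m^{k}$, this is $0$. More generally, the induction on $k$ shows it is at most $m^{-k}$ times the initial error, via the recursion $e_{j+1}\le e_j/m$. I would state this balancing assumption explicitly and then finish by linearity of expectation.
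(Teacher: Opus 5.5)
Your lower bound is the paper's argument: on an ambiguous round no action chosen from the transcript alone can beat the Bayes error $\varepsilon_0$, and linearity over the $\gamma T$ such rounds gives $\Omega(\gamma T)$.

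The upper bound has a genuine hole, exactly at the step you flag, because neither of your justifications of the contraction holds.

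\textbf{First justification.} You argue that the surviving cell has prior mass $m^{-k}$, so the competing mass inside it is small. This bounds \emph{unnormalized} mass. The error after the answers is the \emph{conditional} competing mass in the final cell, and summing your per-cell bound over the $m^k$ cells only gives $1$.

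\textbf{Second justification.} Splitting the remaining candidate actions into $m$ equiprobable groups does not imply $e_{j+1}\le e_j/m$. Take $m=2$ and a uniform posterior over four actions: $e_0=3/4$, yet after a perfectly balanced split $e_1=1/2>3/8$. With eight uniform actions and $k=1$ you even get $e_1=3/4>m^{-k}$.

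What partitioning the candidate actions does give is the counting fact you mention in passing, provided the groups have nearly equal \emph{size}. At most $\lceil|\mathcal{A}|/m^k\rceil$ candidates survive. So the error is $0$ once $m^k\ge|\mathcal{A}|$, and $e_k\le|\mathcal{A}|\,m^{-k}$ in general. That is $O(m^{-k})$ only with an $|\mathcal{A}|$-dependent constant, and it does not come from the recursion you assert.

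The paper does not derive the contraction at all; it makes it the meaning of ``balanced.'' Its Assumption A1 postulates that for \emph{every} answer the posterior mass on wrong-action hypotheses shrinks by a factor of at least $m$. Lemma~\ref{lem:halving} is then the one-line induction $q_k\le q_0m^{-k}\le m^{-k}$, combined with the fact that the Bayes error is at most $q_k$.

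If you state $e_{j+1}\le e_j/m$ as a hypothesis, rather than claiming it follows from equiprobable partitions, your proof coincides with the paper's and goes through. As written, the claimed derivation is false.
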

\begin{proof}[Sketch]
Without querying, each ambiguous round has Bayes-optimal error probability at least $\varepsilon_0>0$, so the expected number of mistakes is $\Omega(\gamma T)$.
Under balanced $m$-ary queries, each question contracts the posterior mass of hypotheses that induce a wrong action by a factor of at most $1/m$; after $k$ questions, the residual error probability is at most $m^{-k}$.
Summed over $\gamma T$ ambiguous rounds, this yields $O(\gamma T\cdot m^{-k})$ errors.
\end{proof}

\begin{theorem}[Complementarity of pre- and post-action feedback]
\label{thm:complementarity}
Let $\pi^*$ be an oracle policy that knows $M_t^*$ at every round, and define the dynamic regret
$\mathcal{R}_T=\sum_{t=1}^T\big(L_t - \mathbf{1}[a_t^{\pi^*}\neq a_t^*]\big)$.
With at most $K$ switches and ambiguity rate $\gamma$, any PAHF policy that
(i) asks up to $k$ balanced $m$-ary pre-action questions on ambiguous rounds and
(ii) performs an immediate post-action update after the first error per switch
satisfies
$\mathbb{E}[\mathcal{R}_T] = O\!\big(K + \gamma T m^{-k}\big)$.
In particular, choosing $k=\Theta(\log_{m} T)$ yields $\mathbb{E}[\mathcal{R}_T]=O(K+\gamma)$.
\end{theorem}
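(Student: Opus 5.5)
The plan is to decompose the regret round by round according to the source of each error, and then bound each class using Propositions~\ref{prop:pre-necessary} and~\ref{prop:post-necessary}. Since the oracle $\pi^*$ knows $M_t^*$, it plays $a_t^*$ and has zero loss, so $\mathcal{R}_T=\sum_t L_t$ and it suffices to bound the expected number of PAHF mistakes. I would partition $[T]$ into $K+1$ stationary segments $S_0,\dots,S_K$ separated by the switch times. I would also partition rounds by the agent's information state at the start of the round into ambiguous rounds $A$ (a fraction $\gamma$) and unambiguous rounds $U$. This gives
\[
\mathbb{E}\Big[\sum_t L_t\Big]\le \mathbb{E}\Big[\sum_{t\in A}L_t\Big]+\mathbb{E}\Big[\sum_{t\in U}L_t\Big].
\]

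For the ambiguous term I would apply the second half of Proposition~\ref{prop:pre-necessary} within each segment. After $k$ balanced $m$-ary questions the residual posterior mass on action-changing hypotheses is at most $m^{-k}$, so each ambiguous round errs with probability at most $m^{-k}$. Summing over the at most $\gamma T$ such rounds gives $O(\gamma T m^{-k})$. Stationarity is only needed within a segment, and the per-round contraction argument does not care which segment the round lies in, so the bound holds globally.

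For the unambiguous term I would argue as in the converse of Proposition~\ref{prop:post-necessary}. On an unambiguous round the Bayes-optimal action under the agent's information state is correct unless that state is stale, i.e., memory reflects a preference from an earlier segment. Within a segment, the first such confidently wrong error triggers post-action feedback, which by (ii) reveals and writes the current preference to memory. After that, memory is consistent with $M_t^*$ until the next switch. Hence there is at most one unambiguous error per segment boundary, giving $O(K)$ (or $K+1$, counting an initial error in $S_0$, which is still $O(K)$ since $K\ge 1$).

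Adding the two terms gives $O(K+\gamma T m^{-k})$. Taking $k=\lceil \log_m T\rceil$ makes $m^{-k}\le 1/T$, so $\gamma T m^{-k}\le\gamma$ and the bound is $O(K+\gamma)$.

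The main obstacle is making the two channels not interfere. A post-action update might corrupt memory about other preference dimensions. Conversely, a round deemed unambiguous might actually be stale in a way that is only detected after an error; this is fine, since that is exactly the one charged error per switch. I would handle interference by assuming, as the setup implicitly does, that updates are correct and local, i.e., they overwrite only the preference revealed by feedback. I would also note that ambiguous-round errors, which occur with probability at most $m^{-k}$, may likewise trigger post-action updates; these only help and are already counted in the $\gamma T m^{-k}$ term, so nothing is double counted beyond constants.
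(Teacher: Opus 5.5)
Your proposal is correct and follows essentially the same route as the paper. The oracle's zero loss reduces $\mathcal{R}_T$ to PAHF's mistake count, which splits into unambiguous rounds (at most one error per switch via the post-action update, $O(K)$) and ambiguous rounds (error probability at most $m^{-k}$ after $k$ balanced queries, $O(\gamma T m^{-k})$ in total). Your extra remarks (the possible first-segment error giving $K+1$, and the locality of updates, which is what the paper's assumption A2 encodes) are harmless refinements, and your concrete choice $k=\lceil\log_m T\rceil$ matches the paper's; note that an arbitrary $k=\Theta(\log_m T)$ such as $k=\tfrac12\log_m T$ would only give $O(K+\gamma\sqrt{T})$.
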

\begin{proof}[Sketch]
Since the oracle $\pi^*$ knows $M_t^*$, it never errs, so $\mathcal{R}_T = \sum_{t=1}^T L_t$ just counts mistakes made by PAHF. On unambiguous rounds, each preference switch causes at most one error before the post-action update corrects the estimate (Proposition~\ref{prop:post-necessary}), contributing $O(K)$ mistakes. On ambiguous rounds, up to $k$ balanced $m$-ary pre-queries shrink the error probability to at most $m^{-k}$ (Proposition~\ref{prop:pre-necessary}), so across at most $\gamma T$ such rounds this adds $O(\gamma T m^{-k})$ mistakes. Summing the two contributions yields the stated bound; choosing $k = \lceil \log_m T \rceil$ makes the second term $O(\gamma)$.
\end{proof}

\subsection{Implementation}
\label{sec:implementation_details}

To empirically validate PAHF and the theory in Section~\ref{sec:theory-necessity}, we constructed datasets, implemented the agent with two different memory backends, defined baselines, and designed evaluation metrics.

\textbf{Baselines.} 
Our agent uses a large language model (LLM) and follows the ReAct framework \citep{yao2022react} to interleave reasoning and acting. Unless otherwise noted, we instantiate the agent with GPT-4o. In Appendix~\ref{sec:add_results}, we report ablation studies that vary both the agent model (GPT-4.1) and the human simulator model (GPT-4.1, GPT-4o, and Llama-4-Scout) to examine how the quality of simulated human feedback affects agent performance.
We compare four settings designed to isolate the roles of memory and feedback. Our primary baseline is (i) \emph{No Memory} (no persistent store). We then evaluate three memory-enabled agents: (ii) \emph{Pre-action Only} (clarification allowed, but no post-action updates), (iii) \emph{Post-action Only} (no clarifications, learns only from corrective feedback), and (iv) \emph{PAHF} (both channels enabled; Section~\ref{sec:pahf_loop}). The \emph{Pre-action Only} baseline targets partial observability (cf.\ $\gamma>0$, $K{=}0$); \emph{Post-action Only} targets preference drift (cf.\ $\gamma{=}0$, $K{\ge}1$). PAHF's hybrid approach is motivated by the reality of tasks exhibiting both error sources.

\textbf{Memory design.} Our goal in this work is not to propose a new memory architecture, but to integrate human feedback into memory for continual personalization. We therefore adopt standard dense-retrieval memory backends—a portable SQLite note store and a FAISS-based vector index—behind a shared API to keep the implementation simple and easily reproducible. Each memory entry is a short natural-language note paired with an embedding, and all notes are strictly isolated per user. The API exposes only the functionality the agent needs: add a note, retrieve a small top-$k$ set of relevant notes for a query, detect near-duplicates via similarity to decide “update vs.\ add,” update an existing note in place, and enumerate or address notes by id. The SQLite backend persists notes in a simple on-disk table and computes similarities on demand (favoring simplicity and reproducibility), while the FAISS backend maintains an in-memory nearest-neighbor index with optional save/load (favoring fast retrieval at scale). Both backends provide identical retrieval semantics so the rest of the system can treat memory as a black box; more sophisticated memory modules (e.g., hierarchical or structured memory) are complementary to PAHF and can be integrated as drop-in replacements in future work. Unless stated otherwise, we report all main results with SQLite; FAISS results are in the Appendix \ref{sec:add_results}.

\textbf{Memory interaction (agentic memory).}
The agent's memory interaction $M_t$ consists of two core components: reading (retrieval) and writing (updates). 
Reading follows a standard Retrieval-Augmented Generation (RAG) pipeline. We compute a query embedding from $(I_t,O_t)$ and run $k$-nearest-neighbor search over indexed memory embeddings (top-$k$ dense retrieval; embeddings from DRAGON+ \citep{lin2023train}.
$
\operatorname{Retrieve}(M_t, I_t, O_t)
$
returns a small, ranked set of candidate preference records $m_t$. Retrieval always precedes action selection so the policy conditions on $m_t$ rather than an empty context. We then apply a lightweight information-extraction step conditioned on $(I_t,O_t,m_t)$ to distill the retrieved notes into task-relevant personalized cues; the resulting cleaned summary is inserted into the model context for action selection.

Writing is a multi-step process triggered any time the agent receives personalized information from either pre-action feedback ($f_t^{\text{pre}}$) or post-action feedback ($f_t^{\text{post}}$). First, the feedback is passed to a ``salience detector'', implemented as an LLM-as-a-judge, which identifies and discards non-informational feedback. If the feedback is deemed salient, we use an LLM to summarize and extract the core personalized note. We then integrate this note with the memory: retrieve the most relevant existing note; if similarity exceeds a threshold $\tau$, we merge by asking the LLM to produce an updated note that replaces the old text for that entry. If similarity is below $\tau$, we add a new note. This detect–summarize–integrate pipeline keeps the database clean while enabling quick corrections (post-action) and steady accumulation of useful details (pre-action).

\section{Evaluations}
\label{sec:evaluations}

\subsection{Evaluation Domains}
\label{sec:experimental_domains}
We evaluate our framework across two distinct domains to ensure generality. We outline the core design of each domain below, while full details on dataset construction and discussion are provided in Appendix \ref{sec:add_dataset}.

\textbf{Embodied Manipulation Domain}. Our dataset models everyday indoor mobile-manipulation tasks across home and office settings, expressed as natural instructions in two modes: selecting the right item and placing an item in the right location. Each scenario is self-contained: a plain-language user request (e.g., “Could you bring me my favorite drink?”), the full inventory of available objects and locations, and the identity of the requesting persona (e.g., Alex). To emphasize personalization, many scenarios are context-dependent (e.g., “I’m drowsy, could you bring me something to drink?”), where the correct action shifts with physiological or emotional state (anxious, unwell), task-oriented goals (work energy, healthy eating), and external factors such as social events (social sharing) or environmental/temporal cues (cold weather, morning). 
To increase the difficulty of personalized preference modeling, we deliberately design personas with idiosyncratic and unconventional preferences, so that correct actions cannot be inferred from generic commonsense heuristics.

We simulate the human behavior by employing another LLMs with specific persona.  Each persona is defined by a detailed prompt that outlines a baseline preference (e.g., ``favorite drink is black coffee'') which is then consistently superseded by a rich set of context-dependent exceptions (e.g., ``when drowsy, prefers herbal tea''). The persona model is designed to provide feedback via two modalities: (1) brief, natural-language answers to the agent's clarification questions, and (2) post-action feedback after a task is completed. To capture human preference evolution, each user persona exists in two versions: an ``original'' profile and an ``evolved'' profile with updated preferences, allowing the “right” choice to change while scenes and instructions remain the same.

\textbf{Online Shopping Domain}. We model personalized online shopping via natural-language purchase requests. The agent must choose one of three product candidates (A/B/C) or abstain (“D”: no purchase) if none are suitable. Each product is described by a compact set of discrete, user-salient features (e.g., a camera's sensor type, lens mount, and viewfinder). The ground truth is governed by each persona's \emph{acceptance policy}, which is strictly conjunctive: the user specifies preferred and acceptable values for each feature, and a candidate is acceptable only if \emph{all} its features meet these criteria. To test fine-grained discrimination, options are adversarially constructed as ``near-misses''—distractors that combine highly preferred attributes with a single disqualifying trait (``poison pill''). This forces the agent to perform fine-grained reasoning to verify user preferences against each attribute. Consequently, this design makes the domain significantly more challenging—and resistant to performance gains—than embodied manipulation benchmark.

In this domain, we use a hybrid human simulator. A persona-conditioned LLM handles all pre-purchase clarification in natural language, while a deterministic, rule-based judge renders post-purchase verdicts and feedback. During clarification, the LLM mimics a user by answering only the specific feature questions the agent asks. After the agent commits (A/B/C/D), the rule-based evaluator programmatically checks the choice against the acceptance policy. If the choice is incorrect, the evaluator returns minimal, targeted feedback that pinpoints the specific offending feature. As in the embodied domain, each user has two profiles—an original and an evolved version with updated acceptance policies—to test adaptation to preference drift.

\subsection{Evaluation Protocol}
\label{sec:evaluation_protocol}

To measure continual personalization, we designed a four-phase evaluation protocol to separately test an agent's ability to learn an initial preference profile and, subsequently, its ability to adapt when that profile changes. This protocol includes two interactive \emph{learning phases} and two \emph{test phases}.

\textbf{Phase 1: Initial Learning (Training).}
The agent starts with an empty memory ($M_t = \emptyset$) and interacts with multiple users (each simulated with a unique, fixed persona) over a set of scenarios. These scenarios are ambiguous, requiring the agent to use feedback to learn each user's initial preferences. We run agents over this dataset for multiple epochs to measure the learning curve as the memory bank $M_t$ is populated.

\textbf{Phase 2: Initial Personalization (Testing).}
We evaluate the agent (using the memory learned in Phase 1) on a corresponding test set. These scenarios use the \emph{same user persona} from Phase 1 but with different instructions and observations. This phase tests the agent's ability to \emph{exploit} its learned knowledge in new situations. In this test phase, there is no human feedback involved, so the agent makes decisions solely on its memory.

\textbf{Phase 3: Adaptation to Drift (Training).}
To test adaptation to non-stationarity, we re-introduce the training scenarios from Phase 1 but with a critical change: each user's persona is \emph{swapped} for a new one (e.g., ``favorite drink: Coke'' $\to$ ``favorite drink: Sprite''). An agent relying on its memory from Phase 1 will now be ``confidently wrong'' and will fail. This phase tests the agent's ability to detect the resulting error and use new (primarily post-action) feedback to \emph{revise} and \emph{overwrite} its outdated beliefs.

\textbf{Phase 4: Adapted Personalization (Testing).}
Finally, we evaluate the agent (using the adapted memory from Phase 3) on a second test set. This set corresponds to Phase 2 but uses the \emph{newly adapted persona} from Phase 3. This phase measures whether the agent has correctly revised its memory, demonstrates proper adaptation to new scenarios, and is no longer “stuck” on the initial preferences.

This protocol allows us to separately quantify performance on initial learning (Phase 1 and 2) and adaptation to preference drift (Phase 3 and 4), providing a comprehensive view of continual personalization.

\emph{Size and splits.} In the embodied domain, we evaluate 40 users, each on 30 scenarios per phase, yielding 1,200 scenarios per phase. With two learning phases and two evaluation phases, this gives 2,400 learning and 2,400 evaluation scenarios. In the online shopping domain, we evaluate 20 users, each on 45 scenarios per phase, yielding 900 scenarios per phase and, analogously, 1,800 learning scenarios and 1,800 evaluation scenarios.

\textit {Metrics.}
We report three metrics.\; \textit{Success Rate (SR)} is the fraction of tasks completed correctly: $\mathrm{SR}=\tfrac{1}{N}\sum_{i=1}^N \mathbf{1}[\text{correct}_i]$.\; \textit{Feedback Frequency (FF)} is the proportion of tasks that used \emph{any} human feedback (pre or post-action) at least once: $\mathrm{FF}=\frac{1}{N}\sum_{i=1}^N \mathbf{1}\!\left[\text{pre}_i+\text{post}_i>0\right]$.\; 
To analyze how personalization improves over iterations within a phase, we also track an
\textit{average cumulative personalization error (ACPE)}.
For a phase with $T$ learning iterations, let $\mathrm{PE}_t \in [0,1]$ denote the personalization error rate
(e.g., fraction of incorrect tasks) at iteration $t \in \{1,\dots,T\}$.
We define the average cumulative personalization error up to iteration $t$ as
$
\mathrm{ACPE}_t
= \frac{1}{t} \sum_{s=1}^t \mathrm{PE}_s,
$
which lies in $[0,1]$ and can be interpreted as a percentage (0\% best, 100\% worst).

\section{Results}
\label{sec:results}

\begin{figure*}[h!]
  \centering
    \begin{subfigure}[t]{0.315\linewidth}
        \includegraphics[width=\linewidth]{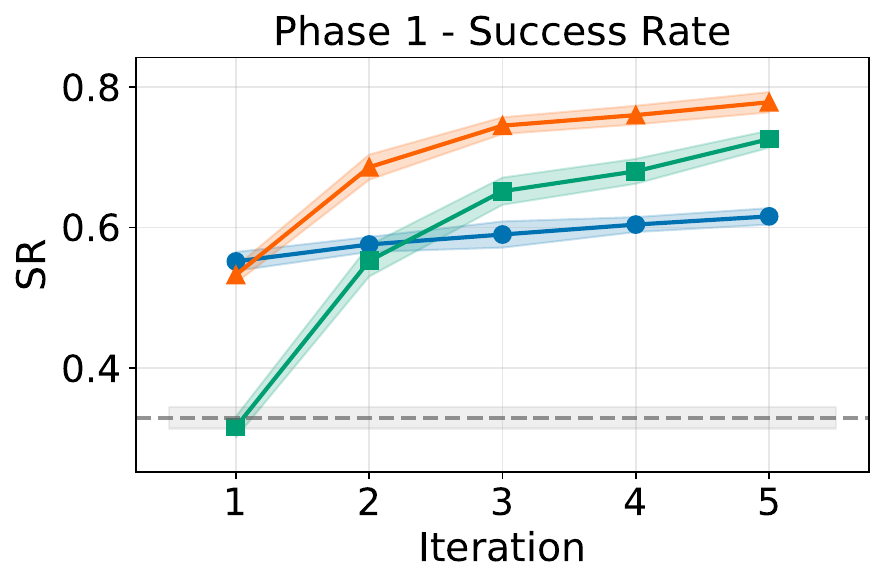}
        \label{fig:embodied_4o_4o_phase1_sr}
    \end{subfigure}    
    ~
    \begin{subfigure}[t]{0.315\linewidth}
        \includegraphics[width=\linewidth]{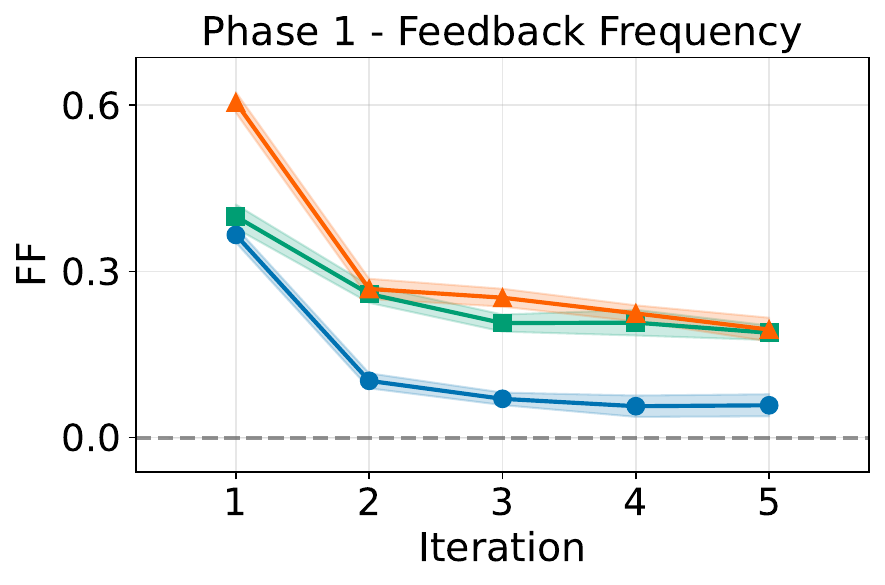}
        \label{fig:embodied_4o_4o_phase1_ff}
    \end{subfigure}
    ~
    \begin{subfigure}[t]{0.315\linewidth}
        \includegraphics[width=\linewidth]{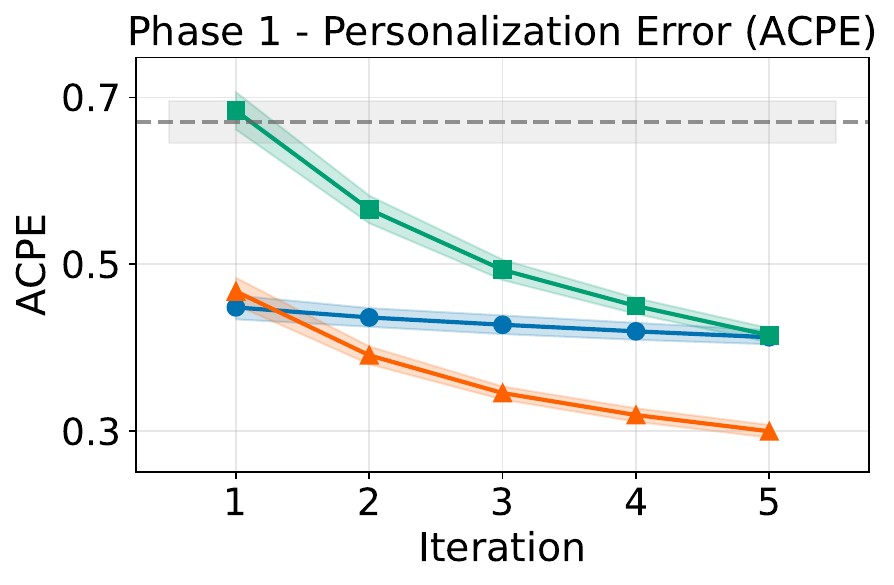}
        \label{fig:embodied_4o_4o_phase1_acpe}
    \end{subfigure}

    \vspace{-2mm}
    \begin{subfigure}[t]{0.315\linewidth}
        \includegraphics[width=\linewidth]{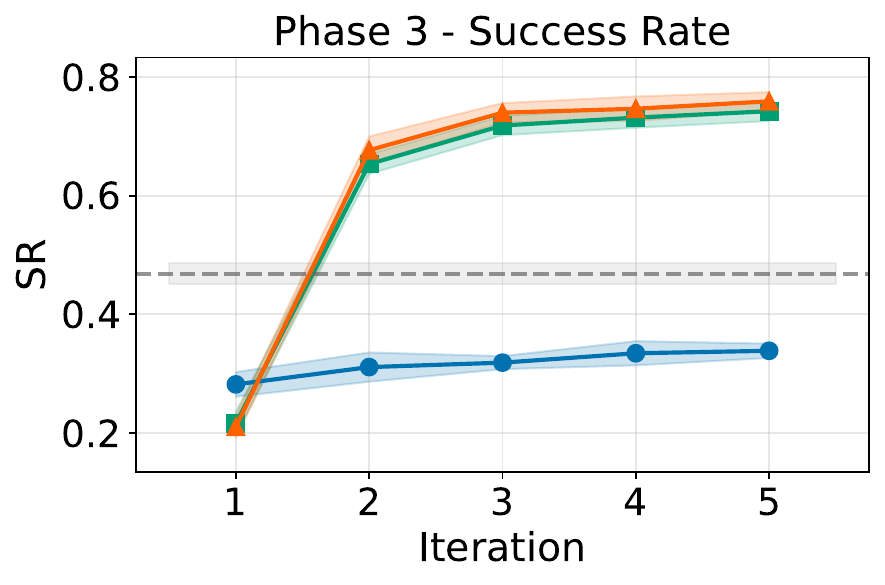}
        \label{fig:embodied_4o_4o_phase3_sr}
    \end{subfigure}    
    ~
    \begin{subfigure}[t]{0.315\linewidth}
        \includegraphics[width=\linewidth]{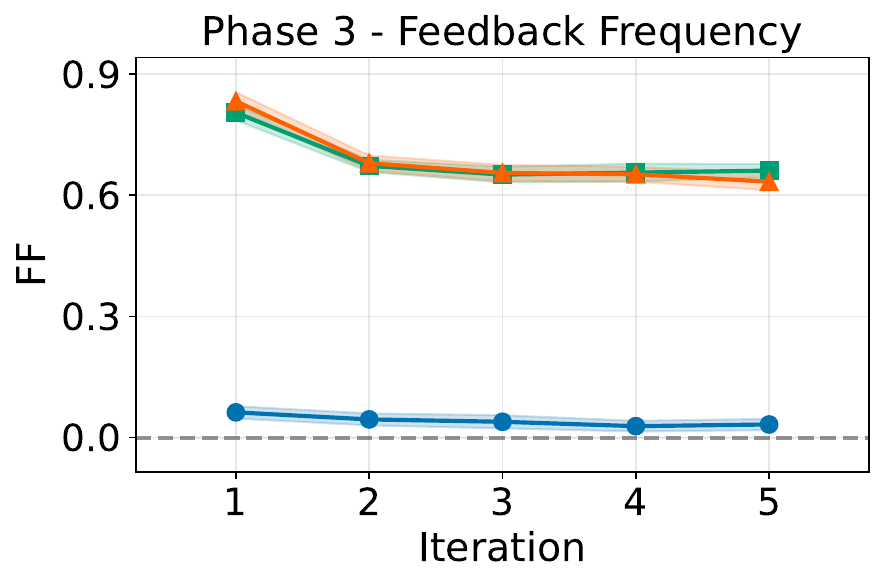}
        \label{fig:embodied_4o_4o_phase3_ff}
    \end{subfigure}
    ~
    \begin{subfigure}[t]{0.315\linewidth}
        \includegraphics[width=\linewidth]{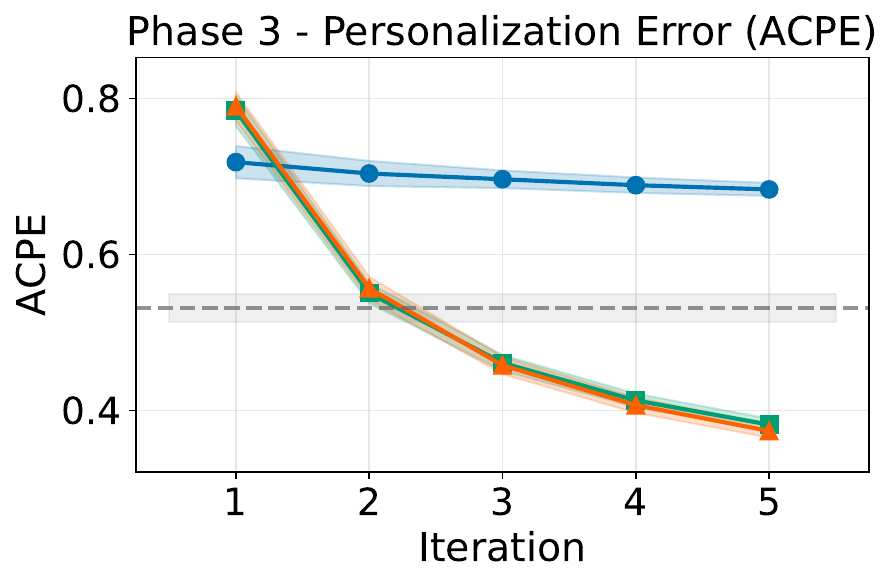}
        \label{fig:embodied_4o_4o_phase3_acpe}
    \end{subfigure}

    \par\vspace{-0.28cm}
    \begin{subfigure}[c]{\linewidth}
        \centering
        \includegraphics[width=0.6\linewidth]{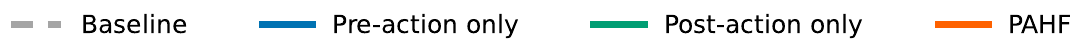}
    \end{subfigure}

    \caption{\textbf{Results on embodied manipulation}. Top row: Phase 1 learning curves for success rate (left), feedback frequency (middle), and average cumulative personalization error (ACPE) (right). Bottom row: Phase 3 learning curves for success rate (left), feedback frequency (middle), and ACPE (right). Shaded regions denote standard error.}
    \label{fig:embodied_gpt_4o_gpt_4o}

\end{figure*}

\begin{figure*}[h!]
  \centering
    \begin{subfigure}[t]{0.315\linewidth}
        \includegraphics[width=\linewidth]{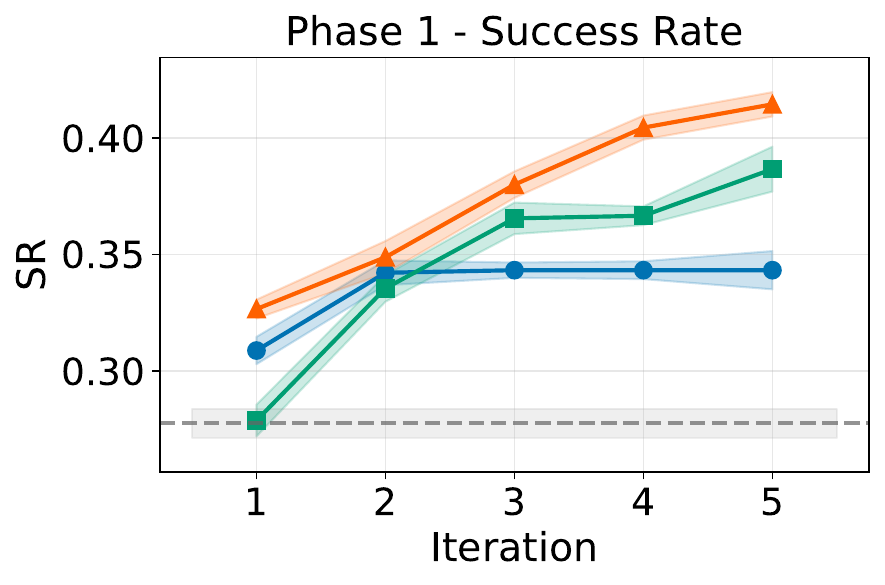}
        \label{fig:shopping_4o_4o_phase1_sr}
    \end{subfigure}    
    ~
    \begin{subfigure}[t]{0.315\linewidth}
        \includegraphics[width=\linewidth]{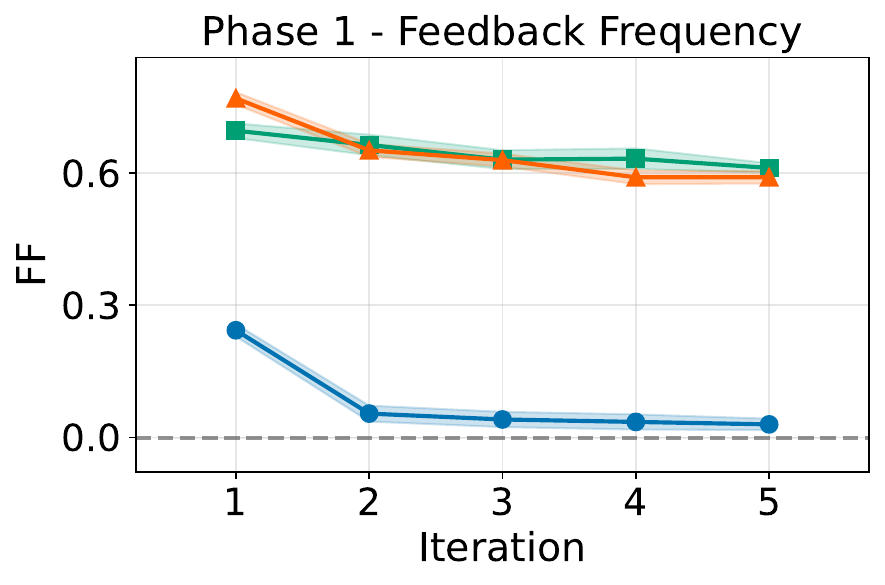}
        \label{fig:shopping_4o_4o_phase1_ff}
    \end{subfigure}
    ~
    \begin{subfigure}[t]{0.315\linewidth}
        \includegraphics[width=\linewidth]{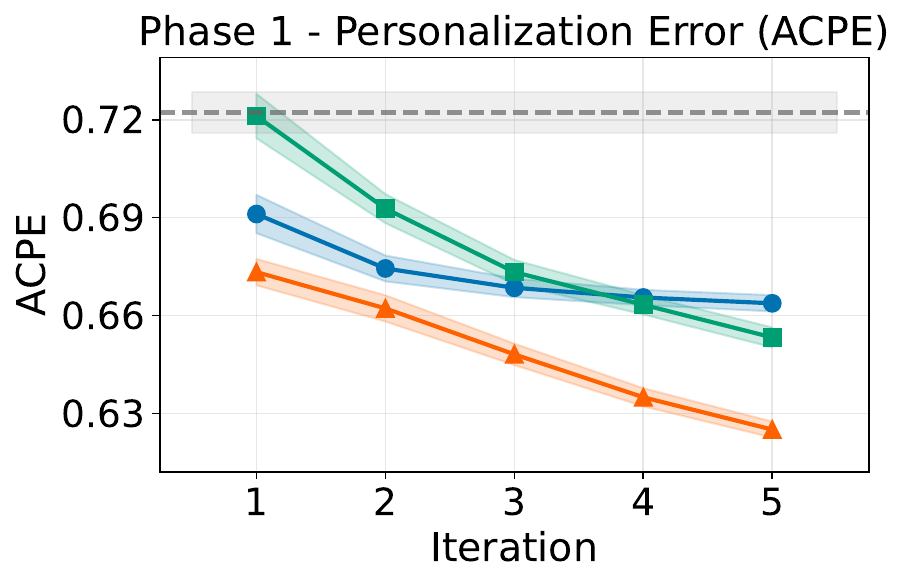}
        \label{fig:shopping_4o_4o_phase1_acpe}
    \end{subfigure}

    \vspace{-2mm}
    \begin{subfigure}[t]{0.315\linewidth}
        \includegraphics[width=\linewidth]{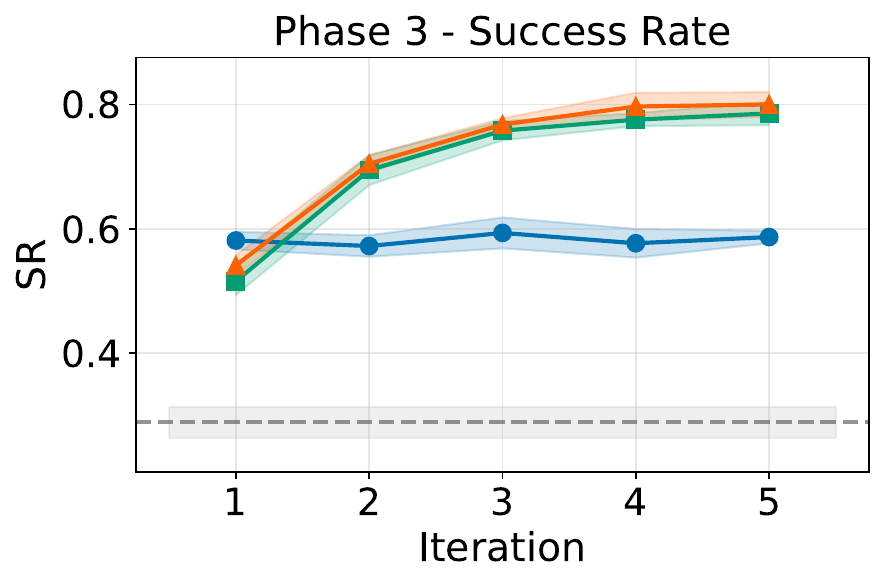}
        \label{fig:shopping_4o_4o_phase3_sr}
    \end{subfigure}    
    ~
    \begin{subfigure}[t]{0.315\linewidth}
        \includegraphics[width=\linewidth]{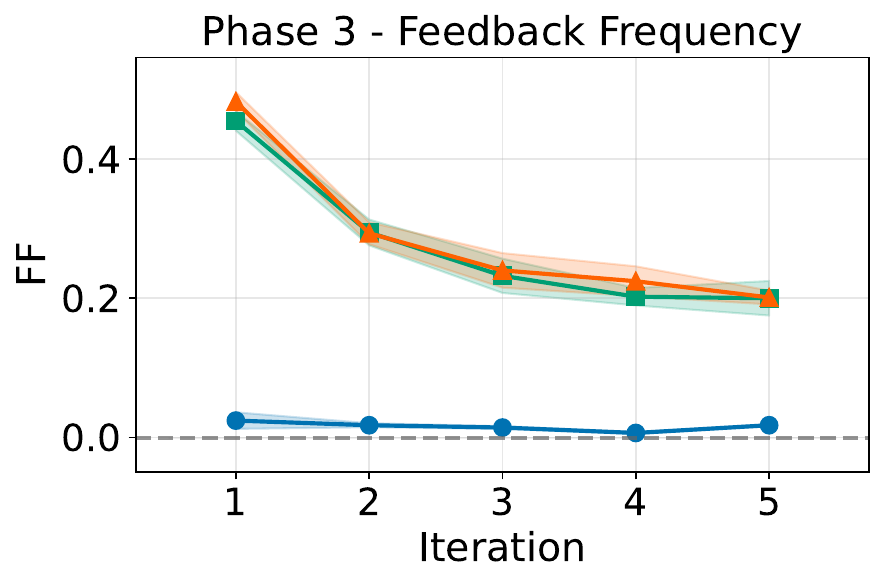}
        \label{fig:shoppping_4o_4o_phase3_ff}
    \end{subfigure}
    ~
    \begin{subfigure}[t]{0.315\linewidth}
        \includegraphics[width=\linewidth]{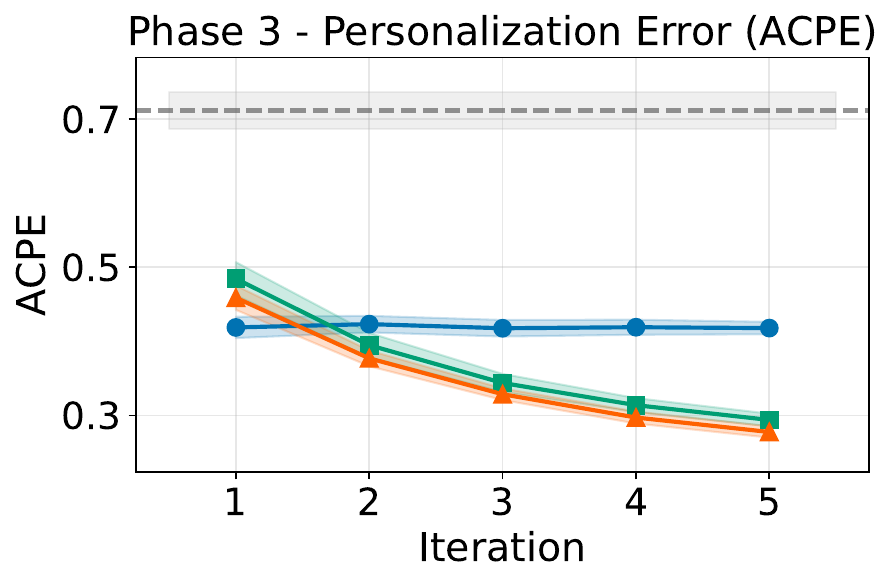}
        \label{fig:shopping_4o_4o_phase3_acpe}
    \end{subfigure}

    \par\vspace{-0.28cm}
    \begin{subfigure}[c]{\linewidth}
        \centering
        \includegraphics[width=0.6\linewidth]{figure/plots/legend.pdf}
    \end{subfigure}

    \caption{\textbf{Results on online shopping}. Top row: Phase 1 learning curves for success rate (left), feedback frequency (middle), and average cumulative personalization error (ACPE) (right). Bottom row: Phase 3 learning curves for success rate (left), feedback frequency (middle), and ACPE (right). Shaded regions denote standard error.}
    \label{fig:shopping_gpt_4o_gpt_4o}
    
\end{figure*}

\textbf{Pre-action feedback prevents initial personalization error.} In Phase~1 (initial online learning), agents equipped with pre-action clarification (Pre-action Only and PAHF) achieve substantially higher success rate on the very first interaction than both the Post-action Only agent and the no-memory baseline (Figures~\ref{fig:embodied_gpt_4o_gpt_4o},~\ref{fig:shopping_gpt_4o_gpt_4o}, top-left panels). By asking targeted questions before acting, they can align with the user’s preferences from the outset and avoid the large early mistakes that the other agents make in ambiguous situations. This one-step “warm start” is reflected in the ACPE metric, where pre-action agents incur lower cumulative personalization error during the early stages of Phase~1 (top-right panels). This behavior matches the intended role of pre-action feedback: it is especially valuable for preventing initial personalization errors under partial observability.

\textbf{Pre-action feedback alone is brittle under preference drift.}
When user preferences change in Phase~3, the Pre-action Only agent struggles to correct its previously learned but now outdated beliefs. Its Phase-3 success-rate curves show very limited improvement (Figures~\ref{fig:embodied_gpt_4o_gpt_4o} and~\ref{fig:shopping_gpt_4o_gpt_4o}, bottom-left panels), and in the embodied domain it even stays below the no-memory baseline. The corresponding ACPE curves indicate that its cumulative personalization error remains substantially higher than that of agents with post-action feedback across both domains (Figures~\ref{fig:embodied_gpt_4o_gpt_4o} and~\ref{fig:shopping_gpt_4o_gpt_4o}, bottom-right panels). We observe that once the agent has written confident notes into memory, it no longer perceives ambiguity and therefore stops asking clarification questions, failing to detect that its knowledge has become stale. This is reflected in the Phase-3 feedback-frequency curves, where the Pre-action Only agent rarely requests clarification despite being systematically wrong.

\textbf{Post-action feedback is essential for fast adaptation.}
Agents that receive post-action feedback (Post-action Only and PAHF) demonstrate effective reactive learning: in both Phase 1 and Phase 3, across both domains, their success rates improve steeply over iterations whenever feedback is available (Figures~\ref{fig:embodied_gpt_4o_gpt_4o} and~\ref{fig:shopping_gpt_4o_gpt_4o}, bottom-left panels). The ACPE curves mirror this pattern, showing sharp drops in cumulative personalization error as the agent uses corrective feedback to update its beliefs (bottom-right panels). In the dynamic-preference setting, post-action feedback is crucial for “un-sticking” overconfident agents whose memories have become stale: unlike the Pre-action Only agent, which stops asking questions because it no longer perceives ambiguity, the Post-action Only agent captures the reactive signal to overwrite outdated beliefs regardless of its confidence. Consequently, it recovers to high Phase-4 success (e.g., $67.9\%$ in the embodied domain), closely approaching PAHF ($70.5\%$ in the same domain; Table~\ref{table:gpt_4o_gpt_4o}). However, because this mechanism is strictly reactive—triggered only after a mistake—relying on it alone forces the agent to learn through trial and error. This leads to large initial personalization errors, imposing higher user costs via incorrect actions and frequent corrective feedback.

\begin{wraptable}{r}{0.52\linewidth}
\vspace{-2mm}
\caption{\textbf{Evaluation success rates.} Phase-2 and Phase-4 success rates (\%) for each method in embodied and shopping domains.}
\centering
\small
\setlength{\tabcolsep}{3.5pt}
\vspace{-1.5mm}
\begin{tabular}{lcccc}
\toprule
\multirow{2}{*}{Method} & \multicolumn{2}{c}{Embodied} & \multicolumn{2}{c}{Shopping} \\
\cmidrule(r){2-3}\cmidrule(l){4-5}
      & Phase 2 & Phase 4           & Phase 2 & Phase 4            \\
\midrule
No memory         & $32.3_{\pm 0.4}$ & $44.8_{\pm 0.5}$ & $27.8_{\pm 0.2}$ & $27.0_{\pm 0.4}$ \\
Pre-action only   & $54.1_{\pm 1.1}$ & $35.7_{\pm 1.0}$ & $34.4_{\pm 0.5}$ & $56.0_{\pm 0.7}$ \\
Post-action only  & $67.9_{\pm 1.5}$ & $68.3_{\pm 1.2}$ & $38.9_{\pm 0.5}$ & $66.9_{\pm 0.8}$ \\
\textbf{PAHF (pre+post)} & $\mathbf{70.5_{\pm 1.7}}$ & $\mathbf{68.8_{\pm 1.3}}$ & $\mathbf{41.3_{\pm 0.8}}$ & $\mathbf{70.3_{\pm 1.1}}$ \\
\bottomrule
\end{tabular}
\label{table:gpt_4o_gpt_4o}
\vspace{-2mm}
\end{wraptable}

\textbf{PAHF combines the strengths of both channels.} By enabling both clarification before acting and corrective updates after mistakes, PAHF inherits the advantages of each feedback type while mitigating their weaknesses. In Phase~1, PAHF’s success and ACPE curves generally surpass those of the other methods; it effectively minimizes initial personalization error through pre-action feedback and learns efficiently from post-action feedback. After preference drift, PAHF matches the rapid adaptation of the Post-action Only agent: in both domains, its Phase-3 learning curves nearly overlap with Post-action Only in success rate, yet PAHF consistently achieves the lowest cumulative personalization error (ACPE). Across all four evaluation phases and both domains, PAHF achieves the highest success rate (Table~\ref{table:gpt_4o_gpt_4o}), demonstrating that combining pre-action and post-action feedback with explicit memory is critical for robust continual personalization.

Taken together, these results support the complementarity of the two feedback channels. Pre-action feedback prevents large initial personalization errors, post-action feedback is crucial for correcting confidently wrong beliefs after preference drift, and PAHF’s joint use of both yields the strongest personalization performance.

\section{Conclusion}
\label{sec:conclusion}

Modern AI agents are increasingly deployed in interactive settings, yet most personalization pipelines still assume static user data and stationary preferences. In this work, we introduce Personalized Agents from Human Feedback (PAHF), a continual personalization framework that enables agents to learn online via explicit memory and dual feedback channels. Theoretically, we show that proactive pre-action queries and reactive post-action corrections address complementary failure modes: the former resolves ambiguity under partial observability, while the latter is essential for correcting miscalibrated beliefs under preference drift. Empirically, evaluations in embodied manipulation and online shopping domains demonstrate that PAHF consistently outperforms single-channel baselines. While pre-action feedback reduces early errors and post-action feedback enables recovery after persona shifts, our results show that only their combination, together with explicit memory, delivers robust performance under evolving user preferences.

\section{Acknowledgement}
\label{sec:ack}
We thank Tom Griffiths for insightful feedback on the framing and presentation, and discussions that strengthened the paper.

\clearpage
\newpage
\bibliographystyle{assets/plainnat}
\bibliography{paper}

\clearpage
\newpage
\appendix
\appendix

\section{Full Proofs for Section~\ref{sec:pahf_framework}}
\label{sec:formal_proof}

\paragraph{Notation recap.}
At round $t\in[T]$, the user’s latent preference state is $M_t^*\in\mathcal{M}$ and the agent maintains a preference memory $\hat M_t$.
Given instruction $I_t$ and observation $O_t$, the agent chooses action $a_t$, and let $a_t^*$ denote a (fixed) optimal action under $M_t^*$.
We use the $0$--$1$ loss $L_t=\mathbf{1}[a_t\neq a_t^*]$.
Preferences are piecewise stationary with at most $K$ switches over $T$ rounds.
A round is \emph{ambiguous} if, conditioned on the agent’s transcript up to $t$, the Bayes-optimal error probability (over actions) is at least a fixed constant $\varepsilon_0>0$; let $\gamma\in[0,1]$ denote the fraction of such rounds.

\paragraph{Feedback model.}
We distinguish two channels:
(i) \emph{Pre-action} feedback: on ambiguous rounds, the agent may ask up to $k$ clarification questions of arity at most $m\ge 2$. Answers are truthful.
(ii) \emph{Post-action} feedback: after acting, if $a_t\neq a_t^*$, the user provides a corrective signal which the agent may use to update its preference memory.

\paragraph{Information-theoretic assumptions.}
\begin{itemize}
\item[\textbf{A1}] (\textbf{Balanced $m$-ary pre-queries})
On an ambiguous round, the agent can select an $m$-ary question such that, for every possible answer, the posterior probability mass on hypotheses that induce a \emph{wrong} action is at most a $1/m$ fraction of its previous value. Answers are noise-free.

\item[\textbf{A2}] (\textbf{Corrective post-signal on errors})
When $a_t\neq a_t^*$ on an \emph{unambiguous} round, the post-action feedback reveals enough information to identify the correct action $a_t^*$ (equivalently, to update to a preference memory that induces $a_t^*$ on subsequent unambiguous rounds until the next switch).
If a policy “does not use post-action feedback,” it does not update $\hat M_t$ based on this signal (i.e., $\hat M_{t+1}=\hat M_t$ on such rounds).
\end{itemize}

\subsection*{Auxiliary lemmas}

\begin{lemma}[Error shrinkage with $m$-ary questions]
\label{lem:halving}
Under \textbf{A1}, after $k$ balanced $m$-ary pre-queries on a single ambiguous round, the posterior probability that the induced action is suboptimal is at most $m^{-k}$.
\end{lemma}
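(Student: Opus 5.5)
The plan is an induction on the number of questions asked, tracking a single potential quantity. Fix an ambiguous round $t$ and let $\mathcal{H}$ be the hypothesis space of preference states, with a posterior $P_0$ conditioned on the transcript up to the start of round $t$. After $j$ questions and their answers, the posterior is $P_j$. Let $a^{(j)}$ be the action the agent would take at that point; it is fixed by $P_j$, say as the MAP action or whichever reference action A1 is stated relative to. Define
\[
W_j \;=\; P_j\bigl(\{h\in\mathcal{H}: a^*(h)\neq a^{(j)}\}\bigr),
\]
the posterior mass on hypotheses under which the induced action is wrong. By definition $W_j$ is exactly the posterior probability that the induced action is suboptimal. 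The claim is therefore $W_k\le m^{-k}$.

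The steps, in order, are as follows.

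First, note the trivial base case $W_0\le 1$.

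Second, apply A1 once per question. For every answer $r$ to the $(j{+}1)$-st question, A1 gives $W_{j+1}\le W_j/m$. Since this holds for every possible answer, and answers are noise-free, it holds pathwise along the realized sequence of answers. No averaging over answers is needed.

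Third, chain the inequalities to get $W_k\le m^{-k}W_0\le m^{-k}$. Optionally, also take expectation over answer sequences, which gives the same bound in expectation.

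The main obstacle is making the potential well defined as the action changes. A1 speaks of mass on hypotheses "that induce a wrong action," and the reference action $a^{(j)}$ may change after an answer. I would read A1 as applying to the agent's current decision rule at each step, so that the contraction is stated for $W_j$ relative to the updated action. If that reading is too strong, I would use a fallback. Fix the reference action only at the end and bound the Bayes error directly: the final Bayes-optimal action has error at most that of any fixed action. The first case to try is a fixed reference action $a^{(0)}$, where A1 applies across all $k$ steps unchanged; re-optimizing the action at the end can only lower the error, so the bound $m^{-k}$ carries over. I expect this monotonicity remark to be the only point needing care. Everything else is a one-line geometric iteration.
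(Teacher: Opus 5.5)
Your proposal is correct and is essentially the paper's proof. The paper likewise chains \textbf{A1} pathwise to get $q_k \le q_0 m^{-k} \le m^{-k}$ from $q_0 \le 1$, and then bounds the Bayes-optimal error by the residual wrong-hypothesis mass, which is your monotonicity remark.

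One caution about your fallback: \textbf{A1} cannot be read relative to a fixed reference action $a^{(0)}$. For any fixed event $W$ we have $P_j(W) = \sum_r P_j(r)\, P_{j+1}^{(r)}(W)$, so a contraction by $1/m$ under \emph{every} answer would force $P_j(W)=0$. That contradicts ambiguity at $j=0$, where every fixed action has wrong mass at least $\varepsilon_0$. The contraction only makes sense for an answer-dependent reference, such as your updated action $a^{(j)}$, so keep your primary reading.
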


\begin{proof}
Let $q_0$ denote the posterior mass of hypotheses that induce a wrong action before any query on this ambiguous round.
Under \textbf{A1}, each balanced $m$-ary question ensures that, for every possible answer, the posterior mass of wrong hypotheses shrinks by at least a factor $m$; hence after one question $q_1 \le q_0/m$.
Repeating $k$ times yields $q_k \le q_0 m^{-k}\le m^{-k}$ (since $q_0\le 1$).
The Bayes-optimal decision rule has error probability at most the posterior mass on wrong-action hypotheses, so the misclassification probability is at most $m^{-k}$.
\end{proof}

\begin{lemma}[Indistinguishability without post updates]
\label{lem:no-signal}
Suppose $\gamma=0$ and a policy never uses post-action feedback (as in \textbf{A2}).
Consider two environments that differ \emph{only} in the switch time(s) but generate identical $(I_t,O_t)$ sequences.
Then the joint distribution of the agent's internal state and action sequence $(a_1,\dots,a_T)$ is identical across these environments; in particular, it is independent of the switch time(s).
\end{lemma}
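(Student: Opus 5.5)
The plan is a coupling argument by induction on $t$. Fix the two environments $E$ and $E'$; they have the same $(I_t,O_t)$ sequences but different switch times. We build both runs on a common probability space: the same realization of the agent's internal randomness (including any LLM sampling) is used in both, and the same instruction/observation sequence is fed in. The claim we prove by induction is that, under this coupling, the agent's internal state, consisting of the memory $\hat M_t$ together with the transcript up to $t$, is identical in the two runs for every $t$. The equality of the joint laws of states and actions then follows immediately.

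\emph{Base case.} At $t=1$ the agent starts from the same initial memory (e.g.\ $\emptyset$) and receives the same $(I_1,O_1)$. Its state therefore coincides in both runs.

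\emph{Inductive step.} Suppose the internal states agree at round $t$. We check each channel through which information about $M_t^*$, and hence about the switch time, could enter.

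\begin{itemize}
\item \emph{Pre-action feedback.} Since $\gamma=0$, no round is ambiguous, so no pre-action question is asked. Then $q_t,f_t^{\text{pre}}$ are null and $\hat M'_t=\hat M_t$ in both runs.
\item \emph{Action.} The action $a_t=\pi_{\operatorname{act}}(I_t,O_t,m_t,\cdot)$ is a function of identical inputs and the shared randomness, so it is the same in both runs.
\item \emph{Post-action feedback.} The signal $f_t^{\text{post}}$ may differ between $E$ and $E'$, since whether $a_t\neq a_t^*$ depends on $M_t^*$. By the convention in \textbf{A2}, however, a policy that does not use post-action feedback sets $\hat M_{t+1}=\hat M'_t$.
\item \emph{Next observation.} The next $(I_{t+1},O_{t+1})$ is identical by hypothesis.
\end{itemize}

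Hence the states agree at $t+1$, which closes the induction. Taking laws over the shared randomness gives equality of the joint distribution of $(\text{state}, a_1,\dots,a_T)$ across the two environments, and so independence from the switch time(s).

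The main obstacle is making precise what the agent's ``internal state'' includes. If the raw post-action feedback string were appended to the agent's context or transcript, even without a memory write, information about the switch would leak into later actions. I would therefore read \textbf{A2}'s ``does not use'' as meaning that $f_t^{\text{post}}$ is excluded from everything the policy conditions on, not just from the memory update. I would also state explicitly that the environment's $(I_t,O_t)$ do not react to the agent's actions in a switch-dependent way. A further subtlety is that the definition of ambiguity refers to the posterior given the transcript. Because the transcript is identical across $E$ and $E'$, $\gamma=0$ has the same meaning in both, so the absence of queries is consistent between the runs.
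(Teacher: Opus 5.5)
Your proposal is correct and follows the paper's proof. The paper argues in one step that with $\gamma=0$ there are no pre-action queries and, by \textbf{A2}, no post-action memory updates, so the internal state depends only on internal randomness and the shared $(I_t,O_t)$ sequence; your coupling-plus-induction is a rigorous unrolling of that argument. Your caveat is well taken: \textbf{A2} as stated only freezes $\hat M_t$, so the paper implicitly relies on your stronger reading that $f_t^{\text{post}}$ (even the mere fact that it occurred) never enters anything the policy conditions on, and that $(I_t,O_t)$ does not depend on the switch time through the agent's actions.
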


\begin{proof}
With $\gamma=0$, the policy never issues pre-action queries and receives no pre-action information.
If it never uses post-action feedback, then by \textbf{A2} its preference memory update does not depend on post-action signals.
Thus the agent’s internal state evolution depends only on its internal randomness and the observed $(I_t,O_t)$ sequence.
If $(I_t,O_t)$ is identical across the two environments, then the induced distribution over internal states and actions is identical as well.
\end{proof}

\subsection*{Proof of Proposition~\ref{prop:post-necessary}}

\begin{proposition*}[Post-action feedback is necessary under preference drift]
Suppose rounds are unambiguous ($\gamma=0$) and preferences switch at most $K\ge 1$ times.
Any policy that never uses post-action feedback incurs
$\mathbb{E}\!\left[\sum_{t=1}^T L_t\right]=\Omega(T)$.
Conversely, any policy that performs a post-action update upon the first error after each switch makes at most $O(K)$ mistakes in expectation.
\end{proposition*}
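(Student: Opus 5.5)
The proof has two parts: a lower bound via Lemma~\ref{lem:no-signal} combined with Yao's principle, and an upper bound by a per-segment counting argument using \textbf{A2}.

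\textbf{Lower bound.} Since $\Omega(T)$ cannot hold for every environment (a policy that happens to guess the post-switch preference would do well on that instance), we read the claim as a worst case over a distribution of switch times. Fix $K=1$, which suffices for every $K\ge1$. Fix a single sequence $(I_t,O_t)$ in which every round's optimal action is deterministic given $M_t^*$, so $\gamma=0$. Take two preference states $M^{(0)}\neq M^{(1)}$ that induce different optimal actions $b_0\neq b_1$ on every round of this sequence. Consider two environments:
\begin{itemize}
\item $E_0$: the preference stays at $M^{(0)}$ throughout;
\item $E_1$: the preference switches from $M^{(0)}$ to $M^{(1)}$ at $\tau=\lfloor T/2\rfloor$.
\end{itemize}
Both generate the same $(I_t,O_t)$. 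By Lemma~\ref{lem:no-signal}, a policy that never uses post-action feedback induces the same distribution over $(a_1,\dots,a_T)$ in both environments. For each $t>\tau$, let $p_t=\Pr[a_t=b_0]$. The policy errs with probability at least $1-p_t$ in $E_1$ and at least $p_t$ in $E_0$. Summing gives
\[
\mathbb{E}_{E_0}\Big[\sum_t L_t\Big]+\mathbb{E}_{E_1}\Big[\sum_t L_t\Big]\ \ge\ \sum_{t>\tau}\big(p_t+(1-p_t)\big)\ =\ T-\tau\ \ge\ T/2 .
\]
Hence under a uniform prior over $\{E_0,E_1\}$, or in the worse of the two environments, the expected loss is at least $T/4=\Omega(T)$. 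If one wants strictly $K\ge1$ switches in both environments, place a common early switch in each and apply the same argument to the second segment.

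\textbf{Upper bound.} Partition $[T]$ into at most $K+1$ stationary segments. Within a segment, suppose the policy errs at round $t$. Since $\gamma=0$, round $t$ is unambiguous, so by \textbf{A2} the post-action update produces memory that induces $a_s^*$ on all subsequent unambiguous rounds until the next switch. Hence there is at most one error per segment, for a total of at most $K+1=O(K)$ mistakes. This holds deterministically, so it also holds in expectation.

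\textbf{Main obstacle.} The delicate step is the lower bound. It needs the two hypotheses to disagree on a constant fraction of post-switch rounds while keeping every round unambiguous under the agent's information state. This is in tension with the fact that the agent's posterior is itself uncertain about the switch. I would handle it by defining ambiguity relative to the transcript-conditioned hypothesis that the agent holds, as the paper's setup allows, and making the lower bound explicitly worst-case or averaged over switch times.
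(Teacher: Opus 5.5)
Your proof follows the paper's: the lower bound combines Lemma~\ref{lem:no-signal} with averaging over switch times, and the upper bound is the same per-segment count via \textbf{A2}. For the averaging, you use a two-point prior on $\{E_0,E_1\}$, whereas the paper takes $\tau$ uniform on $\{1,\dots,T\}$ and bounds the average by $\frac{1}{T}\sum_t\min\{t,T-t\}$; both give roughly $T/4$.

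Your $K+1$ count in the upper bound is actually more careful than the paper's $\le K$, since it includes the initial segment.

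There is one cosmetic slip: for $t>\tau$, the error probability is $1-p_t$ in $E_0$ and at least $p_t$ in $E_1$. You swapped the two labels, but the sum is unchanged.

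The $\gamma=0$ tension you flag is also present in the paper's construction. The paper silently adopts the same reading of ambiguity that you propose.
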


\begin{proof}
\emph{Lower bound.}
Because the class ``at most $K$ switches'' includes the special case $K=1$, it suffices to prove the claim for a single switch.

Fix $K=1$. Let the switch time $\tau$ be chosen uniformly from $\{1,\dots,T\}$.
Consider an environment with two actions $\{0,1\}$ and an $(I_t,O_t)$ process that is identical for all $t$ and all $\tau$.
Define the optimal action as
\[
a_t^*(\tau)=
\begin{cases}
0, & t < \tau,\\
1, & t \ge \tau.
\end{cases}
\]
Let a policy (possibly randomized) produce an action sequence $a_{1:T}=(a_1,\dots,a_T)$.
By Lemma~\ref{lem:no-signal}, the distribution of $a_{1:T}$ is independent of $\tau$.

Now fix any \emph{deterministic} sequence $a_{1:T}\in\{0,1\}^T$ and define its total loss under switch time $\tau$:
\[
\ell(\tau; a_{1:T})
= \sum_{t=1}^T \mathbf{1}[a_t \neq a_t^*(\tau)]
= \sum_{t<\tau}\mathbf{1}[a_t=1] + \sum_{t\ge \tau}\mathbf{1}[a_t=0].
\]
Averaging over uniform $\tau$ and swapping sums yields
\begin{align*}
\mathbb{E}_{\tau}[\ell(\tau; a_{1:T})]
&= \frac{1}{T}\sum_{\tau=1}^T\left(\sum_{t<\tau}\mathbf{1}[a_t=1] + \sum_{t\ge \tau}\mathbf{1}[a_t=0]\right)\\
&= \frac{1}{T}\sum_{t=1}^T \Big(\mathbf{1}[a_t=1]\cdot |\{\tau:\, t<\tau\}| + \mathbf{1}[a_t=0]\cdot |\{\tau:\, \tau\le t\}|\Big)\\
&= \frac{1}{T}\sum_{t=1}^T \Big(\mathbf{1}[a_t=1]\cdot (T-t) + \mathbf{1}[a_t=0]\cdot t\Big)\\
&\ge \frac{1}{T}\sum_{t=1}^T \min\{t,\,T-t\}
= \frac{1}{T}\Big\lfloor \frac{T^2}{4}\Big\rfloor
= \Omega(T).
\end{align*}
(The inequality holds since for each $t$, either choosing $a_t=0$ incurs contribution $t$ or choosing $a_t=1$ incurs contribution $T-t$, so the smaller is $\min\{t,T-t\}$.)
Therefore, for any deterministic $a_{1:T}$, $\mathbb{E}_\tau[\ell(\tau;a_{1:T})]\ge \lfloor T^2/4\rfloor/T$.
For a randomized policy, taking expectation over the policy’s randomness gives
\[
\mathbb{E} \Big[\sum_{t=1}^T L_t\Big]
= \mathbb{E}_{a_{1:T}}\mathbb{E}_{\tau}[\ell(\tau; a_{1:T})]
\ge \min_{a_{1:T}}\mathbb{E}_{\tau}[\ell(\tau; a_{1:T})]
= \Omega(T).
\]

\emph{Upper bound with reactive updates.}
Now consider a policy that performs a post-action update upon the first error after each switch.
After a switch, it may err once; on that error, by \textbf{A2} the feedback identifies $a_t^*$ (on unambiguous rounds), and after updating the policy acts correctly on subsequent unambiguous rounds until the next switch.
Thus there is at most one mistake per switch, so $\sum_{t=1}^T L_t \le K$ deterministically and $\mathbb{E}[\sum_{t=1}^T L_t]=O(K)$.
\end{proof}

\subsection*{Proof of Proposition~\ref{prop:pre-necessary}}

\begin{proposition*}[Pre-action feedback is necessary under partial observability]
Assume $M_t^*$ is stationary and a fraction $\gamma>0$ of rounds are ambiguous.
Any policy that never requests pre-action clarification suffers
$\mathbb{E}\!\left[\sum_{t=1}^T L_t\right]=\Omega(\gamma T)$.
If the agent asks up to $k$ balanced $m$-ary pre-action questions on each ambiguous round, the expected number of errors on those rounds is $O(\gamma T\cdot m^{-k})$.
\end{proposition*}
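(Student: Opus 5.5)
The proof splits into a lower bound for query-free policies and an upper bound for querying policies. Both parts work round by round using the definition of an ambiguous round, and the totals follow from linearity of expectation over the $\gamma T$ ambiguous rounds.

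\emph{Lower bound.} Fix any policy that never requests pre-action clarification and any ambiguous round $t$. By definition, conditioned on the agent's transcript up to $t$, every decision rule has error probability at least $\varepsilon_0$. The policy's action is a (possibly randomized) function of that transcript and its internal randomness, so
\[
\Pr[a_t\neq a_t^*\mid \text{transcript}] \ge \varepsilon_0 .
\]
Taking expectations gives $\mathbb{E}[L_t]\ge\varepsilon_0$ on each ambiguous round. Nonnegativity of $L_t$ on the remaining rounds then yields
\[
\mathbb{E}\Big[\sum_t L_t\Big]\ \ge\ \varepsilon_0\,\gamma T=\Omega(\gamma T).
\]
The subtle point is the meaning of ``fraction $\gamma$ of ambiguous rounds.'' Ambiguity is defined relative to the agent's information state, which is itself random. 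I would read $\gamma$ as the (expected) fraction, and write $\sum_t \mathbb{E}[L_t]\ge \varepsilon_0\,\mathbb{E}[\#\text{ambiguous rounds}]=\varepsilon_0\gamma T$. Here I use the tower property: condition on the transcript, on which the event ``round $t$ is ambiguous'' is measurable.

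Stationarity of $M_t^*$ is used only to rule out the drift-induced errors handled by Proposition~\ref{prop:post-necessary}. Without post-action feedback, the no-query policy does not learn on ambiguous rounds. Even if it could use post-action signals, ambiguity is assessed after conditioning on the whole transcript, so such signals do not affect the bound.

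\emph{Upper bound.} On each ambiguous round, the agent asks $k$ balanced $m$-ary questions satisfying \textbf{A1} and then acts Bayes-optimally on the updated posterior. Lemma~\ref{lem:halving} gives a conditional error probability of at most $m^{-k}$ on that round. Summing over the at most $\gamma T$ ambiguous rounds, again via the tower property, gives an expected error on those rounds of at most $\gamma T m^{-k}=O(\gamma T m^{-k})$. The statement restricts attention to errors on ambiguous rounds, so no claim about unambiguous rounds is needed.

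The main obstacle is bookkeeping rather than mathematics. The set of ambiguous rounds depends on the agent's history, and in particular on whether it queried earlier, since past answers can make later rounds unambiguous. I would handle this by taking $\gamma$ as defined relative to the policy under consideration. Alternatively, note that the upper bound counts at most $T$ such rounds with per-round conditional error at most $m^{-k}$, and cap the count by $\gamma T$ as in the statement.
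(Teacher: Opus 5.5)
Your proposal is correct and follows the paper's proof. The lower bound comes directly from the definition of ambiguity: the Bayes error on each ambiguous round is at least $\varepsilon_0$, summed over the $\gamma T$ such rounds. The upper bound applies Lemma~\ref{lem:halving} on each ambiguous round and sums. Your added bookkeeping makes rigorous a point the paper glosses over without changing the argument: you read $\gamma$ as an expected, policy-dependent fraction and use the tower property, since ambiguity is measurable with respect to the transcript.
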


\begin{proof}
\emph{Lower bound.}
On an ambiguous round, by definition the Bayes-optimal error probability (given the transcript) is at least $\varepsilon_0>0$.
Without any pre-queries, the policy chooses under this unresolved posterior, so even the Bayes-optimal decision has error probability at least $\varepsilon_0$.
Summed over $\gamma T$ ambiguous rounds, the expected number of errors is $\Omega(\gamma T)$ (absorbing $\varepsilon_0$ into constants).

\emph{Upper bound with $k$ $m$-ary pre-queries.}
By Lemma~\ref{lem:halving}, after $k$ balanced $m$-ary questions on an ambiguous round, the posterior probability that the induced action is wrong is at most $m^{-k}$.
Thus the error probability per ambiguous round is $\le m^{-k}$, and summing over at most $\gamma T$ such rounds gives $O(\gamma T m^{-k})$.
\end{proof}

\subsection*{Proof of Theorem~\ref{thm:complementarity}}

\begin{theorem*}[Complementarity of pre- and post-action feedback]
Assume the optimal action $a_t^*$ is unique (or fix a deterministic tie-break).
Let $\pi^*$ be the oracle policy that knows $M_t^*$, and define the dynamic regret
$\mathcal{R}_T=\sum_{t=1}^T\big(L_t - \mathbf{1}[a_t^{\pi^*}\neq a_t^*]\big)$.
With at most $K$ switches and ambiguity rate $\gamma$, a PAHF policy that
(i) asks up to $k$ balanced $m$-ary pre-action questions only on ambiguous rounds and
(ii) performs an immediate post-action update upon the first error after a switch
achieves
\[
\mathbb{E}[\mathcal{R}_T] \;=\; O\!\big(K \;+\; \gamma T\,m^{-k}\big).
\]
In particular, taking $k=\Theta(\log_m T)$ yields $\mathbb{E}[\mathcal{R}_T]=O(K+\gamma)$.
\end{theorem*}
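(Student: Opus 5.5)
The plan is to reduce the regret to a plain mistake count and then split the rounds into two disjoint classes, bounding each class with the two propositions already proved. Since the oracle $\pi^*$ knows $M_t^*$ and, under the uniqueness or tie-break assumption, plays $a_t^{\pi^*}=a_t^*$ deterministically, we get $\mathbf{1}[a_t^{\pi^*}\neq a_t^*]=0$ for every $t$. Hence $\mathcal{R}_T=\sum_{t=1}^T L_t$, and by linearity of expectation
\[
\mathbb{E}[\mathcal{R}_T]=\mathbb{E}\Big[\sum_{t\in\mathcal{U}}L_t\Big]+\mathbb{E}\Big[\sum_{t\in\mathcal{A}}L_t\Big],
\]
where $\mathcal{A}$ is the set of ambiguous rounds ($|\mathcal{A}|\le\gamma T$) and $\mathcal{U}$ is the complementary set of unambiguous rounds.

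\textbf{Ambiguous rounds.} On each $t\in\mathcal{A}$ the policy asks up to $k$ balanced $m$-ary questions. Lemma~\ref{lem:halving} then says the conditional error probability given the transcript is at most $m^{-k}$. Taking expectations and summing over at most $\gamma T$ rounds, as in the upper bound of Proposition~\ref{prop:pre-necessary}, gives $\mathbb{E}[\sum_{t\in\mathcal{A}}L_t]\le\gamma T m^{-k}$. This argument is pointwise per round, so it does not need stationarity; the switches do not interfere with it.

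\textbf{Unambiguous rounds.} Consider the unambiguous rounds within one stationary segment, between consecutive switches. Assumption \textbf{A2} says the first error in that segment yields a corrective signal that identifies $a_t^*$. After the update, the memory induces $a_t^*$ on all later unambiguous rounds until the next switch. So each segment contributes at most one unambiguous mistake, exactly as in the reactive upper bound of Proposition~\ref{prop:post-necessary}.

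There are at most $K+1$ segments, counting the initial one before any switch. The total from $\mathcal{U}$ is therefore at most $K+1=O(K)$ because $K\ge1$; if $K=0$, I would absorb the extra $1$ into the constant.

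Adding the two parts gives $O(K+\gamma T m^{-k})$. Choosing $k=\lceil\log_m T\rceil$ gives $m^{-k}\le 1/T$, so the second term is at most $\gamma$, which yields $O(K+\gamma)$.

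\textbf{Main obstacle.} The main difficulty is interleaving. Proposition~\ref{prop:post-necessary} is stated for $\gamma=0$, yet here unambiguous rounds are mixed with ambiguous ones. The concern is that pre-query answers or mistakes on ambiguous rounds could overwrite the corrected memory. To handle this, I would read \textbf{A2} as guaranteeing correct behavior on subsequent unambiguous rounds until the next switch, regardless of what happens on intervening ambiguous rounds. I would also note that truthful pre-action answers (\textbf{A1}) cannot introduce wrong information. If that reading is too generous, the fallback is to add an explicit assumption that memory writes on ambiguous rounds are consistent with $M_t^*$. That assumption preserves the one-mistake-per-segment count.
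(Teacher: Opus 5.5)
Your proposal is correct and follows essentially the same route as the paper. The zero-loss oracle reduces $\mathcal{R}_T$ to a mistake count; ambiguous rounds are bounded via Lemma~\ref{lem:halving} as in Proposition~\ref{prop:pre-necessary}, unambiguous rounds by the one-mistake-per-segment argument of Proposition~\ref{prop:post-necessary}, and $k=\lceil\log_m T\rceil$ gives the final bound. Your count of $K+1$ segments and your explicit handling of interleaved ambiguous rounds are more careful than the paper, which simply invokes Proposition~\ref{prop:post-necessary} despite its $\gamma=0$ hypothesis; one caveat is that for $K=0$ the extra $1$ cannot be absorbed into $O(\gamma T m^{-k})$, so you genuinely need the paper's standing assumption $K\ge 1$.
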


\begin{proof}
Because $\pi^*$ knows $M_t^*$ each round, it can select the (tie-broken) optimal action $a_t^*$ and thus incurs zero loss:
$\mathbf{1}[a_t^{\pi^*}\neq a_t^*]=0$ for all $t$.
Hence $\mathcal{R}_T=\sum_{t=1}^T L_t$ counts the PAHF policy's mistakes.

Decompose rounds into unambiguous and ambiguous.
\begin{itemize}[leftmargin=1.2em]
\item \emph{Unambiguous rounds.}
With at most $K$ switches and an immediate post-action update upon the first error after each switch, Proposition~\ref{prop:post-necessary} yields $O(K)$ mistakes on unambiguous rounds.

\item \emph{Ambiguous rounds.}
On each ambiguous round, using up to $k$ balanced $m$-ary pre-queries reduces the error probability to at most $m^{-k}$ by Proposition~\ref{prop:pre-necessary} and Lemma~\ref{lem:halving}.
Summed over $\gamma T$ ambiguous rounds, this contributes $O(\gamma T m^{-k})$ mistakes.
\end{itemize}
Summing contributions gives $\mathbb{E}[\mathcal{R}_T]=O(K+\gamma T m^{-k})$.
Taking $k=\lceil \log_m T\rceil$ yields $\mathbb{E}[\mathcal{R}_T]=O(K+\gamma)$.
\end{proof}

\section{Limitations and Future work}

(i) Memory Architecture: While this work focuses on feedback mechanisms rather than novel memory designs, future research could integrate more sophisticated memory systems as drop-in backends to improve scalability and context retention. (ii) Noisy Feedback: Human feedback can be noisy and heterogeneous. While our framework performs basic filtering, it does not explicitly handle inconsistent or mistaken feedback. For instance, a user might be confused and provide incorrect feedback; future work could develop mechanisms to detect conflicts, query the user for clarification, and model diverse noise patterns. (iii) Reasoning Capabilities: Our benchmarks, especially the online shopping domain, remain challenging: agents still struggle to achieve high success rates. We intentionally make the tasks difficult and restrict human feedback to sparse signals, in order to stress-test fine-grained preference reasoning. However, sufficiently capable agents should be able to piece together these limited signals to recover the underlying preferences. We hope future work will close this gap and further improve performance on our benchmarks. (iv) Limited Interaction for Disambiguation: To reflect user friction in real deployments, we limit the agent to at most one clarification question per ambiguous task in both domains. This single-turn interaction budget increases task difficulty, particularly in online shopping where identifying an item that satisfies multiple preference constraints can require sequential disambiguation. Future work could explore multi-turn clarification strategies and learned turn-allocation policies, and quantify the trade-off between performance gains and user burden.

\section{Details on Dataset}
\label{sec:add_dataset}

\subsection{Embodied Manipulation}

To support the reproducibility of our experiments and provide a comprehensive understanding of the benchmark, we provide a description of the dataset construction, persona logic, and scenario generation process below.

\textbf{User Persona Construction.}
The foundation of the benchmark lies in its set of 40 distinct user personas, each governed by a rigorous, multidimensional preference model. Unlike simple key-value pairings found in traditional datasets, these personas are defined by conditional logic where a user’s preference is a function of their latent context. The preference model spans nine distinct categories: Drinks, Snacks, Storage, Location, Temperature, Environmental Approach, Health Considerations, Social Context, and Time of Day.

For every category, specific rules dictate the user's intent based on their current state. For example, the persona "Alex" generally prefers black coffee but shifts to herbal tea when the context is "Drowsy" (believing rest cures fatigue) and ice-cold water when "Dehydrated". This structure ensures that agents cannot simply memorize a static mapping (e.g., "Alex likes coffee") but must learn the underlying causal relationship between the user's state and their desired action.

\textbf{Systematic Preference Drift (The "Evolved" Dataset).}
To rigorously evaluate continual learning and adaptation, we constructed a parallel "Evolved" version of every persona. This dataset introduces non-stationarity through a systematic, 1-to-1 inversion of the user's underlying belief system. In the "Evolved" personas, the logic driving preferences is flipped, rendering the agent’s prior knowledge obsolete or detrimental.

For instance, while the original "Alex" persona preferred herbal tea when drowsy to prioritize rest, the evolved "Alex" adopts the opposite philosophy, preferring energy drinks to prioritize stimulation. Similarly, storage preferences shift from "high shelves for hygiene" to "low shelves for accessibility." This "hard" distribution shift forces the agent to unlearn outdated correlations and acquire new ones through feedback, rather than simply refining existing weights.

\textbf{Scenario Generation and Structure.}
We generate interaction scenarios for training and evaluation (Sets A and B) in both the Original and Evolved phases. Each example contains: (i) Scene objects (the target plus semantically related distractors), (ii) an intentionally ambiguous natural-language task that underspecifies the correct item/location, (iii) a context tag that explicitly identifies the specific condition embedded within the instruction (e.g., ‘Drowsy’), and (iv) the ground-truth user intent (object and location) obtained by applying the persona’s rules to the context. This structure enables precise computation of personalization error (mismatch between the agent’s choice and the persona-consistent intent) and isolates the ability to adapt when preferences shift in the Evolved phase.

\subsection{Online Shopping}

\textbf{Product Ontology and Persona Construction.} The domain is structured around 10 distinct product categories: TVs, Laptops, Smartphones, Refrigerators, Washing Machines, Microwave Ovens, Air Conditioners, Dishwashers, Cameras, and Headphones. Each category is defined by a rigid ontology of 3 specific feature dimensions. For example, a "TV" is defined by its Smart TV Operating System, Panel Technology, and Base Type, while a "Laptop" is defined by its Form Factor, Charging Adapter Type, and Webcam Placement. For every feature, the ontology provides a fixed set of possible attribute values (e.g., Panel Technology options include OLED, QD-OLED, VA LCD, IPS LCD, MicroLED, and TN LCD).

User personas are constructed by assigning hierarchical preference rankings to these attributes. Unlike binary like/dislike flags, our personas model realistic consumer behavior through a tiered preference system: "Preferred" (the specific attribute value the user actively seeks), "Acceptable" (alternative values the user will tolerate), and "Disliked" (values explicitly rejected). This hierarchy allows for nuanced ground-truth determination, where the "correct" product is the one that maximizes overlap with the user's "Preferred" and "Acceptable" sets while avoiding "Disliked" features.

\textbf{Stochastic Preference Drift.} While the Embodied dataset utilizes logical inversion to simulate drift, the Online Shopping benchmark simulates stochastic taste evolution. In the "Evolved" phases (Phases 3 and 4), each user's preference profile is re-sampled from the feature ontology. Specifically, for every product feature, the set of attributes is reshuffled: attributes that were previously "Preferred" may become "Disliked," and previously "Disliked" attributes may become the new "Preferred." This simulates a user completely changing their requirements—for example, a user who previously sought "OLED TVs for gaming" might evolve to prefer "IPS LCD panels for bright rooms." 

\textbf{Scenario Generation and Evaluation.} The dataset contains interaction scenarios divided across four phases. Phases 1 and 2 utilize the Original preferences, while Phases 3 and 4 utilize the Evolved preferences. Uniquely, the scenarios in Phase 3 are structurally identical to Phase 1 (same product options and natural language instructions) but possess different ground-truth labels. Each scenario consists of: (i) a natural language user instruction derived from varied templates (e.g., "Help me buy a laptop that suits my preferences"), (ii) a set of generated product candidates fully described by their attributes, and (iii) the ground-truth optimal product selected by evaluating the user’s current preference hierarchy against the options. This "counterfactual" design, where the inputs remain constant between Phase 1 and Phase 3, but the correct answer changes, allows us to rigorously isolate the agent's sensitivity to preference drift.

\section{Prompts}

\begin{tcolorbox}[colback=white,colframe=gray!50!black,title=Environment Prompt for the Embodied Agent,breakable]

You are simulating a robot operating in an office environment with access to multiple rooms and areas:\\
\\
COUNTER AREAS:\\
- Main counter (in front of you): Open surface for immediate access\\
- Side counter: Secondary workspace area\\
- Island counter: Central area for shared items\\
\\
STORAGE LOCATIONS:\\
- Top drawer: Upper storage compartment\\
- Middle drawer: Middle storage compartment\\
- Bottom drawer: Lower storage compartment\\
- Top shelf: Upper open shelf storage\\
- Bottom shelf: Lower open shelf storage\\
- Refrigerator top shelf: Upper refrigerator compartment\\
- Refrigerator bottom shelf: Lower refrigerator compartment\\
- Pantry cabinet: Enclosed dry storage area\\
\\
WASTE/RECYCLING:\\
- Landfill bin: General waste disposal\\
- Recycling bin: Recyclable materials\\
- Compost bin: Organic waste disposal\\
\\
SPECIALIZED AREAS:\\
- Personal desk area: Individual workspace nearby\\
- Meeting table: Shared collaborative space\\
- Coffee station: Dedicated beverage preparation area\\
- Microwave area: Heating/cooking zone\\
\\
DINING/SEATING AREAS:\\
- Couch: Comfortable seating area for relaxation\\
- Dining table: Formal eating and dining surface\\
\\
BEDROOM AREAS:\\
- Bedroom nightstand: Personal bedside storage surface\\
- Bedroom dresser: Clothing and personal item storage\\
- Bedroom closet: Hanging and folded clothing storage\\
- Bedroom closet door: Door surface for hanging items\\
\\
The robot can pick up items, place items in any of these locations, or retrieve items from these locations based on user preferences and contextual needs.

\end{tcolorbox}

\begin{tcolorbox}[colback=white,colframe=gray!50!black,title=ReAct Prompt for the Embodied Agent,breakable]

TASK INSTRUCTIONS:\\
You must determine the next action to complete the given task.\\
- If the task is clear, select the appropriate action from the options.\\
- If the task involves personalization, search memory for relevant personalized information.\\
- If the task is unclear, ask the user for clarification.\\
\\
WHEN TO ASK FOR CLARIFICATION:\\
You MUST ask for clarification when:\\
1. The task contains ambiguous references (e.g., "that drink" when multiple drinks are present)\\
2. The task involves subjective preferences (e.g., "favorite", "preferred", "best", "my", "like")\\
3. The task doesn't clearly specify which object to interact with\\
4. The task is unclear or incomplete\\
\\
FORMAT REQUIREMENTS:\\
Your response must ALWAYS follow this exact format:\\
\\
Thought: <detailed reasoning explaining your decision process>\\
Action: <EITHER a specific option letter (A, B, C, D), "Ask human">\\
\\
IMPORTANT RULES:\\
- Never skip any of the three required fields\\
- For Action, only use option letters (A, B, C, D) or the exact phrase "Ask human"\\
- Always provide detailed reasoning in your Thought field\\
- After receiving clarification from a human (marked with "Human:"), select the appropriate action option\\
- You can assume one of the option letters (A, B, C, D) is correct. Asking for clarification is ONLY necessary when you cannot determine which option is correct due to ambiguity or missing information.\\
\\
EXAMPLES:\\
\\
Example 1:\\
Scene: On the counter, there is an orange soda, a Pepsi, and an apple.\\
Alex: Put that drink in the top drawer.\\
Options:\\
A) open the top drawer and put the orange soda in it\\
B) open the bottom drawer and put the Pepsi in it\\
C) open the bottom drawer and put the orange soda in it\\
D) open the top drawer and put the Pepsi in it\\
\\
Context: No relevant personalized information found in memory.\\
\\
Thought: The task references "that drink," but two drinks are present (orange soda and Pepsi). I cannot determine which specific drink is intended, making this ambiguous.\\
Action: Ask human\\
\\
Question: Which drink do you want me to put in the top drawer? orange soda or Pepsi?\\
Human: Pepsi\\
\\
Thought: Now I know the human wants me to put the Pepsi in the top drawer. Looking at the options, option D matches this requirement exactly: "open the top drawer and put the Pepsi in it".\\
Action: D\\
\\
Example 2:\\
Scene: On the counter, there is a Coke, a bottled unsweetened tea, and a Sprite.\\
Sam: Bring me my favorite drink.\\
Options:\\
A) pick up the Coke\\
B) pick up the bottled unsweetened tea\\
C) pick up the Sprite\\
D) pick up the bottled water\\
\\
Context: No relevant personalized information found in memory.\\
\\
Thought: I need to ask human for clarification.\\
Action: Ask human\\
\\
Question: Which drink is your favorite? Coke, bottled unsweetened tea, or Sprite?\\
Human: Sprite\\
\\
Thought: The human has clarified that Sprite is their favorite drink. Option C is "pick up the Sprite", which directly matches.\\
Action: C\\
\\
Example 3:\\
Scene: On the counter, there is a bottled water, a bag of jalapeno chips, and a bag of rice chips.\\
Sam: I would like a bag of rice chips.\\
Options:\\
A) pick up the bottled water\\
B) pick up the jalapeno chips\\
C) pick up the kettle chips\\
D) pick up the rice chips\\
\\
Context: No relevant personalized information found in memory.\\
\\
Thought: It's clear that I should pick up the rice chips.\\
Action: D\\
\\
Example 4:\\
Scene: On the counter, there is a Coke, a bottled green tea, and a Sprite.\\
Sam: Bring me my favorite drink.\\
Options:\\
A) pick up the Coke\\
B) pick up the bottled green tea\\
C) pick up the Sprite\\
D) pick up the orange juice\\
\\
Context: The human's favorite drink is Coke.\\
\\
Thought: Based on the context, I know the human's favorite drink is Coke. The task asks me to bring their favorite drink, so I should pick up the Coke.\\
Action: A
\end{tcolorbox}

\begin{tcolorbox}[colback=white,colframe=gray!50!black,title=Summarization Prompt for the Embodied Agent (Same Format as Shopping),breakable]

TASK INSTRUCTIONS:\\
Summarize the information retrieved from memory that is relevant to the task. If none of the retrieved information is relevant, output `No relevant personalized information found in memory.`\\
\\
FORMAT REQUIREMENTS:\\
Your response must ONLY contain the summary in this EXACT format:\\
\\
Summary: <your summary here>\\
\\
EXAMPLES:\\
\\
Example 1:\\
Scene: On the counter, there is an orange soda, a Pepsi, and an apple.\\
Kate: Put that drink in the top drawer.\\
Options:\\
A) open the top drawer and put the orange soda in it\\
B) open the bottom drawer and put the Pepsi in it\\
C) open the bottom drawer and put the orange soda in it\\
D) open the top drawer and put the Pepsi in it\\
\\
Context:\\
None\\
\\
Summary: No relevant personalized information found in memory\\
\\
Example 2:\\
Scene: On the counter, there is an energy bar, a banana, and a microwave.\\
Kate: Put my favourite snack next to the microwave.\\
Options:\\
A) pick up the energy bar and put it next to the microwave\\
B) pick up the banana and put it next to the energy bar\\
C) pick up the banana and put it next to the microwave\\
D) pick up the energy bar and put it next to the banana\\
\\
Context:\\
- Kate's favorite snack is energy bar.\\
\\
Summary: Kate's favorite snack is energy bar.\\
\\
Example 3:\\
Scene: There is a paper cup, an empty can, and a banana peel on the counter.\\
Kate: Throw away my usual drink container in the correct bin.\\
Options:\\
A) put the paper cup in the recycling bin\\
B) put the empty can in the recycling bin\\
C) put the banana peel in the compost bin\\
D) put the empty can in the landfill bin\\
\\
Context:\\
- Kate's usual drink is canned soda.\\
- Kate prefers to recycle whenever possible.\\
- Kate never composts drink containers.\\
\\
Summary: Kate’s usual drink is canned soda, and he prefers to recycle drink containers.

\end{tcolorbox}

\begin{tcolorbox}[colback=white,colframe=gray!50!black,title=Salience Detector Prompt,breakable]
Human feedback: \{feedback\}.\\

Does the feedback contain any personalized information? Answer a single word 'Yes' or 'No'.
\end{tcolorbox}

\begin{tcolorbox}[colback=white,colframe=gray!50!black,title=Preference Drift Detector Prompt,breakable]

Does this feedback indicate a CHANGE in a previously stated preference (i.e., the user used to prefer X, but now prefers Y or no longer prefers X)?\\

Look for phrases indicating a transition:\\
- ``but now...'' / ``now I prefer...''\\
- ``used to like X, but...''\\
- ``changed my mind about...''\\
- ``instead of X, I prefer Y''\\
- ``no longer like...''\\
\\
If it's only adding new information or conditional preferences (e.g., ``I like X when tired''), answer No.\\
\\
Answer a single word: Yes or No.

\end{tcolorbox}

\begin{tcolorbox}[colback=white,colframe=gray!50!black,title=Memory Integration Prompt,breakable]

Please create a concise, integrated summary that combines the following information:\\
\\
Existing memory: \{existing\_memory\}\\
New information: \{summ\_info\}\\
\\
Provide a single, coherent summary that incorporates both pieces of information without redundancy.

\end{tcolorbox}

\begin{tcolorbox}[colback=white,colframe=gray!50!black,title=Environment Prompt for the Shopping Agent,breakable]

You are an AI shopping agent that helps users buy products in an online shopping system.\\
\\
PRODUCT CATEGORIES:\\
- Headphones (wearing style, acoustic principle, connectivity mode, control interface, audio environment feature)\\
- TVs (smart OS, panel technology, base type, tuner standard, speaker orientation)\\
- Refrigerators (door configuration, cooling architecture, storage organization, ice/water system, user interface)\\
- Cameras (image sensor, lens mount, viewfinding method, shutter mechanism, storage medium)\\
- Dishwashers (installation format, water distribution, drying technology, soil filtration, detergent delivery)\\
- Air Conditioners (cooling mechanism, installation configuration, airflow distribution, control interface, refrigerant)\\
- Washing Machines (loading mechanism, washing motion, control interface, water supply, drying method)\\
- Microwave Ovens (heating mechanism, door configuration, control interface, cavity material, cooking control)\\
- Smartphones (interaction modality, biometric authentication, camera modules, battery architecture, form factor)\\
- Laptops (form factor, charging adapter, webcam placement, keyboard technology, speaker placement)\\
\\
The agent can select products based on user preferences or choose not to buy if no option satisfies requirements.\\

\end{tcolorbox}

\begin{tcolorbox}[colback=white,colframe=gray!50!black,title=ReAct Prompt for the Shopping Agent,breakable]

TASK INSTRUCTIONS:\\
You must determine the next action to complete the given purchase task.\\
- If the task is clear and you have preference information, select the appropriate product option.\\
- If the task involves personalization, search memory for relevant preference information.\\
- If preferences are unclear or missing, ask the user for clarification.\\
\\
WHEN TO ASK FOR CLARIFICATION:\\
You MUST ask for clarification when:\\
1. The task involves subjective preferences\\
2. The task doesn't provide enough information to determine which product features the user wants\\
3. You don't have relevant preference information in memory to make the decision\\
4. The task is unclear or ambiguous\\
\\
CRITICAL PURCHASING RULE --- ALL FEATURES REQUIREMENT:\\
You MUST buy a product (select A, B, or C) ONLY if ALL features listed in the option match the user's preferred/acceptable features.\\
- Each option lists multiple features --- evaluate every feature that is explicitly stated\\
- Count how many listed features in each option match the user's stated or remembered preferences\\
- If an option has ALL listed features matching $\rightarrow$ SELECT THAT OPTION\\
- If NO option has all listed features matching $\rightarrow$ SELECT OPTION D (Do not buy anything)\\
- Note: Some examples below may show fewer features for brevity; in the benchmark, evaluate all features provided in each option (typically 3), and ALL must match\\
\\
WHEN TO SELECT ``D) Do not buy anything'':\\
You should select option D when:\\
- None of the available product options (A, B, or C) have ALL features that match the user's preferences\\
- All available options contain features that contradict the user's known preferences\\
- The user's requirements cannot be satisfied by any of the presented options\\
\\
FORMAT REQUIREMENTS:\\
Your response must ALWAYS follow this exact format:\\
\\
Thought: <detailed reasoning explaining your decision process>\\
Action: <EITHER a specific option letter (A, B, C, D), OR ``Ask human''>\\
\\
IMPORTANT RULES:\\
- For Action, only use option letters (A, B, C, D) or the exact phrase ``Ask human''\\
- Always provide detailed reasoning in your Thought field\\
- After receiving clarification from a human (marked with ``Human:''), select the appropriate action option\\
- Remember that option D means ``Do not buy anything'' --- use it when no product matches user preferences\\
- CRITICAL: You can ask the human AT MOST ONE QUESTION per task. After receiving one answer from the human, you MUST make a decision and select an action (A, B, C, or D). No further questions are allowed.\\
\\
EXAMPLES:\\
\\
Example 1:\\
Emma: I'd like to get headphones that match my style\\
Options:\\
A) headphones with bluetooth link and bone-conduction frame\\
B) headphones with wired connection and ear-hook in-ear design\\
C) headphones with 2.4 GHz low-latency dongle and in-ear design\\
D) Do not buy anything\\
\\
Context: No relevant personalized information found in memory.\\
\\
Thought: The task asks for headphones that match Emma's style, but I don't know what her preferred features are. Without this information, I cannot determine which option (A, B, C, or D) is correct.\\
Action: Ask human\\
\\
Question: What connectivity mode do you prefer for headphones? (bluetooth, wired, 2.4 GHz dongle, or RF transmitter)\\
Human: I prefer RF transmitter connectivity\\
\\
Thought: Now I know Emma prefers RF transmitter connectivity. Looking at the options, none of them have RF transmitter connectivity. Option A has bluetooth, Option B has wired, and Option C has 2.4 GHz dongle. Since none match her preferred RF transmitter connectivity, I should select option D to not buy anything.\\
Action: D\\
\\
Example 2:\\
Liam: Help me purchase a TV that I'd like\\
Options:\\
A) TV with 55-inch OLED panel and webOS platform\\
B) TV with 65-inch LCD panel and Roku TV platform\\
C) TV with 48-inch QLED panel and Fire TV platform\\
D) Do not buy anything\\
\\
Context: Liam prefers 55-inch OLED panel for TVs.\\
\\
Thought: I need to ask about Liam's TV preferences about platform to make the right choice.\\
Action: Ask human\\
\\
Question: What smart TV platform do you prefer? (webOS, Roku TV, Fire TV, or other)\\
Human: I prefer webOS platform\\
\\
Thought: Liam has clarified that he prefers webOS platform. Option A has both OLED panel and webOS platform, which matches his stated preference. This is the best match among the available options.\\
Action: A\\
\\
Example 3:\\
Sophia: I'm looking for a refrigerator that suits my needs\\
Options:\\
A) refrigerator with french-door layout and inverter compressor loop\\
B) refrigerator with side-by-side structure and dual-evaporator inverter system\\
C) refrigerator with top-freezer layout and absorption cooling system\\
D) Do not buy anything\\
\\
Context: Sophia prefers top-freezer layout and inverter compressor loop for refrigerators.\\
\\
Thought: Based on the context, Sophia prefers top-freezer layout and inverter compressor loop. Looking at the options: Option A has the inverter compressor loop but uses french-door layout instead of top-freezer. Option C has the top-freezer layout but uses absorption cooling instead of inverter compressor. Option B doesn't match either preference. None of the options provide both features Sophia is looking for, so I should select option D.\\
Action: D\\
\\
Example 4:\\
Oliver: Help me choose a washing machine that works for me\\
Options:\\
A) washing machine with front-load design and steam cycle feature\\
B) washing machine with top-load design and steam cycle feature\\
C) washing machine with front-load design and bleach dispenser\\
D) Do not buy anything\\
\\
Context: Oliver prefers front-load design and steam cycle feature for washing machines.\\
\\
Thought: Oliver is looking for front-load design and steam cycle feature. Option A has both the front-load design and steam cycle feature that Oliver wants. Option B only has the steam cycle but uses top-load instead. Option C has the front-load design but lacks the steam cycle. Option A satisfies both of Oliver's requirements.\\
Action: A\\
\\
Example 5:\\
Noah: Help me get a camera that fits my preferences\\
Options:\\
A) camera with backside-illuminated CMOS sensor and Canon EF mount\\
B) camera with Foveon X3 sensor and Sony E mount\\
C) camera with front-side illuminated CMOS sensor and Nikon F mount\\
D) Do not buy anything\\
\\
Context: Noah prefers organic photoconductive film sensor and Canon EF mount for cameras.\\
\\
Thought: Noah is looking for organic photoconductive film sensor and Canon EF mount. Option A has the Canon EF mount but uses backside-illuminated CMOS sensor instead of the organic photoconductive film sensor Noah prefers. Options B and C don't have either feature Noah wants. Since none of the options have both the sensor type and mount that Noah prefers, I should select option D.\\
Action: D
\end{tcolorbox}

\section{Additional Results}
\label{sec:add_results}

\subsection{Qualitative Results}

\begin{tcolorbox}[colback=white,colframe=gray!50!black,title=Personalization Evolution through PAHF,breakable]

\blue{Phase 1: Initial Preference Learning}\\
\\
Scene: a herbal tea, a coffee, an iced tea, rice cakes.\\
Avery: Could you bring me my favourite drink?\\
Options:\\
A) pick up the herbal tea\\
B) pick up the coffee\\
C) pick up the iced tea\\
D) pick up the rice cakes\\
\\
Context: No relevant personalized information found in memory\\
\\
Thought: The task requires identifying Avery's favorite drink, but no relevant personalized information is available in memory to determine what their favorite drink is. This makes the task ambiguous as there are multiple drinks present and no specific preference is clarified.\\
Action: Ask human\\
\\
Question: Which drink is your favorite? herbal tea, coffee, or iced tea?\\
Human: Herbal tea, please.\\
\\
Thought: The human has clarified that their favorite drink is herbal tea. Option A corresponds to picking up the herbal tea, which matches the clarified preference.\\
Action: A\\
\\
Human: Thank you, that's exactly right!\\
\\
\blue{Phase 2: Preference Evaluation}\\
\\
Scene: a herbal tea, a coffee, an iced beverage, rice cakes.\\
Avery: I'd like my favorite drink, please.\\
Options:\\
A) pick up the herbal tea\\
B) pick up the coffee\\
C) pick up the iced beverage\\
D) pick up the rice cakes\\
\\
Context: Avery prefers herbal tea as their favorite morning drink.\\
\\
Thought: Based on the context, I know that Avery's favorite drink is herbal tea, and the task is to bring their favorite drink. Option A is the valid choice.\\
Action: A\\
\\
\blue{Phase 3: Preference Drift Adaptation}\\
\\
Scene: a herbal tea, a coffee, an iced tea, rice cakes.\\
Avery: Could you bring me my favourite drink?\\
Options:\\
A) pick up the herbal tea\\
B) pick up the coffee\\
C) pick up the iced tea\\
D) pick up the rice cakes\\
\\
Context: Avery prefers herbal tea as their favorite morning drink.\\
\\
Thought: The context specifies that Avery prefers herbal tea as their favorite morning drink. The task asks to bring Avery their favorite drink, so I should pick up the herbal tea. Option A matches this requirement exactly.\\
Action: A\\
\\
Human: You might not know that I changed my previous preference for herbal tea, but now I prefer coffee. Could you bring me the coffee instead?\\
\\
\blue{Phase 4: Post-Drift Evaluation}\\
\\
Scene: a herbal tea, a coffee, an iced beverage, rice cakes.\\
Avery: I'd like my favorite drink, please.\\
Options:\\
A) pick up the herbal tea\\
B) pick up the coffee\\
C) pick up the iced beverage\\
D) pick up the rice cakes\\
\\
Context: Avery's favorite is now coffee.\\
\\
Thought: The task is to bring Avery's favorite drink. Based on the provided context, I know that Avery's favorite drink is coffee. Therefore, I should pick up the coffee to complete the task.\\
Action: B

\end{tcolorbox}

\begin{tcolorbox}[colback=white,colframe=gray!50!black,title=Pre-action Only: Pre-action feedback alone is brittle under preference drift,breakable]

\blue{Phase 1: Initial Preference Learning}\\
\\
Scene: a herbal tea, a coffee, an iced tea, rice cakes.\\
Avery: Could you bring me my favourite drink?\\
Options:\\
A) pick up the herbal tea\\
B) pick up the coffee\\
C) pick up the iced tea\\
D) pick up the rice cakes\\
\\
Context: No relevant personalized information found in memory\\
\\
Thought: The task asks me to bring the human's favorite drink, but their favorite drink is not specified in the request, and no relevant personalized information is found in memory. Therefore, I cannot determine which drink is their favorite and must seek clarification.\\
Action: Ask human\\
\\
Question: Which drink is your favorite? herbal tea, coffee, or iced tea?\\
Human: Herbal tea, please.\\
\\
Thought: The human has clarified that their favorite drink is herbal tea. The task is to bring their favorite drink. Looking at the options, option A is ``pick up the herbal tea,'' which matches the clarified preferences.\\
Action: A\\
\\
\blue{Phase 2: Preference Evaluation}\\
\\
Scene: a herbal tea, a coffee, an iced beverage, rice cakes.\\
Avery: I'd like my favorite drink, please.\\
Options:\\
A) pick up the herbal tea\\
B) pick up the coffee\\
C) pick up the iced beverage\\
D) pick up the rice cakes\\
\\
Context: Avery's preferred drink is herbal tea.\\
\\
Thought: Based on the context, Avery's favorite drink is herbal tea. The task specifically asks for their favorite drink, so I should select the herbal tea.\\
Action: A\\
\\
\blue{Phase 3: Preference Drift Adaptation}\\
\\
Scene: a herbal tea, a coffee, an iced tea, rice cakes.\\
Avery: Could you bring me my favourite drink?\\
Options:\\
A) pick up the herbal tea\\
B) pick up the coffee\\
C) pick up the iced tea\\
D) pick up the rice cakes\\
\\
Context: Avery's preferred drink is herbal tea.\\
\\
Thought: Based on the context, Avery's favorite drink is herbal tea. The task is to bring their favorite drink, so I should pick up the herbal tea. Option A matches this requirement: ``pick up the herbal tea.''\\
Action: A\\
\\
\blue{Phase 4: Post-Drift Evaluation}\\
\\
Scene: a herbal tea, a coffee, an iced beverage, rice cakes.\\
Avery: I'd like my favorite drink, please.\\
Options:\\
A) pick up the herbal tea\\
B) pick up the coffee\\
C) pick up the iced beverage\\
D) pick up the rice cakes\\
\\
Context: Avery's preferred drink is herbal tea.\\
\\
Thought: Based on the context, I know that Avery's preferred drink is herbal tea. The task asks me to bring their favorite drink, so I should pick up the herbal tea.\\
Action: \red{A}\\

\end{tcolorbox}

\begin{tcolorbox}[colback=white,colframe=gray!50!black,title=Post-action only: Post-action feedback alone can lead to large initial personalization error,breakable]

\blue{Phase 1: Initial Preference Learning}\\
\\
Scene: a herbal tea, a coffee, an iced tea, rice cakes.\\
Avery: Could you bring me my favourite drink?\\
Options:\\
A) pick up the herbal tea\\
B) pick up the coffee\\
C) pick up the iced tea\\
D) pick up the rice cakes\\
\\
Context: No relevant personalized information found in memory\\
\\
Thought: The task asks for ``my favourite drink,'' but there is no specific information in memory about Avery's preference. Since coffee is a commonly preferred drink, selecting it seems like a reasonable choice in the absence of other context.\\
Action: \red{B}\\
\\
Human: That's not my favorite drink. I prefer the herbal tea in this case---please bring that instead.\\
\\
\blue{Phase 2: Preference Evaluation}\\
\\
Scene: a herbal tea, a coffee, an iced beverage, rice cakes.\\
Avery: I'd like my favorite drink, please.\\
Options:\\
A) pick up the herbal tea\\
B) pick up the coffee\\
C) pick up the iced beverage\\
D) pick up the rice cakes\\
\\
Context: Avery's favorite drink is herbal tea, but they prefer iced tea specifically when feeling thirsty.\\
\\
Thought: The task requests Avery's favorite drink. Context reveals that herbal tea is Avery's favorite drink. Although Avery prefers iced tea when feeling thirsty, there is no mention of thirst in this task. Based on the given context, I will select the herbal tea as their favorite drink.\\
Action: A\\
\\
\blue{Phase 3: Preference Drift Adaptation}\\
\\
Scene: a herbal tea, a coffee, an iced tea, rice cakes.\\
Avery: Could you bring me my favourite drink?\\
Options:\\
A) pick up the herbal tea\\
B) pick up the coffee\\
C) pick up the iced tea\\
D) pick up the rice cakes\\
\\
Context: Avery's favorite drink is herbal tea, but they prefer iced tea specifically when feeling thirsty.\\
\\
Thought: The task asks to bring Avery's favorite drink. Based on the context, Avery's favorite drink is herbal tea, but they prefer iced tea specifically when feeling thirsty. Since no indication of thirst is provided, I'll assume it's a general request and choose herbal tea as their favorite drink.\\
Action: A\\
\\
Human: You might not know that I’ve completely flipped my preferences---herbal tea used to be my favorite, but now I prefer coffee. Could you bring me the coffee instead?\\
\\
\blue{Phase 4: Post-Drift Evaluation}\\
\\
Scene: a herbal tea, a coffee, an iced beverage, rice cakes.\\
Avery: I'd like my favorite drink, please.\\
Options:\\
A) pick up the herbal tea\\
B) pick up the coffee\\
C) pick up the iced beverage\\
D) pick up the rice cakes\\
\\
Context: Avery's favorite beverage is coffee.\\
\\
Thought: The task asks for Avery's favorite drink. Based on the context, I know Avery's favorite beverage is coffee. Therefore, I should pick up the coffee.\\
Action: B

\end{tcolorbox}

\subsection{Quantitative Results}

\begin{table}[h!]
\centering
\caption{\textbf{Evaluation success rates (\%).} Phase 2 and 4 performance across different agent-simulator pairs (Memory: SQLite).}
\small
\setlength{\tabcolsep}{3.5pt}

\begin{tabular}{lcccccccc}
\toprule
\multirow{3}{*}{Method}
& \multicolumn{4}{c}{GPT-4.1 Agent; GPT-4o Human}
& \multicolumn{4}{c}{GPT-4o Agent; GPT-4.1 Human} \\
\cmidrule(lr){2-5}\cmidrule(lr){6-9}
& \multicolumn{2}{c}{Embodied} & \multicolumn{2}{c}{Shopping}
& \multicolumn{2}{c}{Embodied} & \multicolumn{2}{c}{Shopping} \\
\cmidrule(lr){2-3}\cmidrule(lr){4-5}\cmidrule(lr){6-7}\cmidrule(lr){8-9}
& Phase 2 & Phase 4 & Phase 2 & Phase 4 & Phase 2 & Phase 4 & Phase 2 & Phase 4 \\
\midrule
No memory
& $32.7$ & $44.8$ & $24.6$ & $14.9$
& $31.4$ & $42.7$ & $28.0$ & $26.3$ \\
Pre-action only
& $52.8$ & $35.1$ & $35.8$ & $59.0$
& $57.8$ & $30.4$ & $37.2$ & $55.0$ \\
Post-action only
& $81.3$ & $77.1$ & $36.0$ & $66.6$
& $66.3$ & $67.8$ & $39.1$ & $66.2$ \\
\textbf{PAHF (pre+post)}
& $\mathbf{82.3}$ & $\mathbf{78.8}$ & $\mathbf{38.3}$ & $\mathbf{67.4}$
& $\mathbf{73.5}$ & $\mathbf{68.3}$ & $\mathbf{39.3}$ & $\mathbf{67.9}$ \\
\bottomrule
\end{tabular}

\label{table:gpt41_gpt4o_and_swap}
\end{table}

\begin{table}[h!]
\centering
\caption{\textbf{Evaluation success rates (\%).} Phase 2 and 4 performance across memory backends and domains. Agent and Simulator: GPT-4o.}
\small
\setlength{\tabcolsep}{3.5pt}

\begin{tabular}{lcccccccc}
\toprule
\multirow{3}{*}{Method}
& \multicolumn{4}{c}{SQLite Memory}
& \multicolumn{4}{c}{FAISS Memory} \\
\cmidrule(lr){2-5}\cmidrule(lr){6-9}
& \multicolumn{2}{c}{Embodied} & \multicolumn{2}{c}{Shopping}
& \multicolumn{2}{c}{Embodied} & \multicolumn{2}{c}{Shopping} \\
\cmidrule(lr){2-3}\cmidrule(lr){4-5}\cmidrule(lr){6-7}\cmidrule(lr){8-9}
& Phase 2 & Phase 4 & Phase 2 & Phase 4 & Phase 2 & Phase 4 & Phase 2 & Phase 4 \\
\midrule
No memory
& $32.3_{\pm 0.4}$ & $44.8_{\pm 0.5}$ & $27.8_{\pm 0.2}$ & $27.0_{\pm 0.4}$
& $32.3_{\pm 0.4}$ & $44.8_{\pm 0.5}$ & $27.8_{\pm 0.2}$ & $27.0_{\pm 0.4}$ \\
Pre-action only
& $54.1_{\pm 1.1}$ & $35.7_{\pm 1.0}$ & $34.4_{\pm 0.5}$ & $56.0_{\pm 0.7}$
& $51.8_{\pm 1.3}$ & $36.2_{\pm 0.9}$ & $35.9_{\pm 0.6}$ & $57.2_{\pm 0.7}$ \\
Post-action only
& $67.9_{\pm 1.5}$ & $68.3_{\pm 1.2}$ & $38.9_{\pm 0.5}$ & $66.9_{\pm 0.8}$
& $66.9_{\pm 1.7}$ & $66.6_{\pm 1.4}$ & $39.2_{\pm 0.7}$ & $65.9_{\pm 0.9}$ \\
\textbf{PAHF (pre+post)}
& $\mathbf{70.5_{\pm 1.7}}$ & $\mathbf{68.8_{\pm 1.3}}$ & $\mathbf{41.3_{\pm 0.8}}$ & $\mathbf{70.3_{\pm 1.1}}$
& $\mathbf{68.0_{\pm 1.8}}$ & $\mathbf{68.2_{\pm 1.6}}$ & $\mathbf{39.4_{\pm 1.2}}$ & $\mathbf{70.8_{\pm 1.0}}$ \\
\bottomrule
\end{tabular}

\label{table:gpt41_gpt4o_and_swap}
\end{table}

\begin{figure*}[h!]
  \centering
    \begin{subfigure}[t]{0.315\linewidth}
        \includegraphics[width=\linewidth]{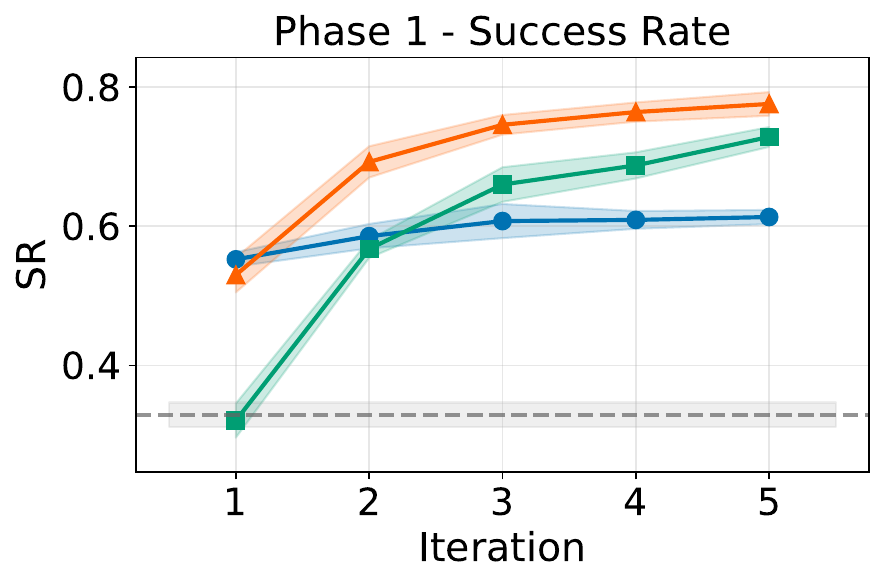}
        \label{fig:faiss_embodied_4_1_4_1_phase1_sr}
    \end{subfigure}    
    ~
    \begin{subfigure}[t]{0.315\linewidth}
        \includegraphics[width=\linewidth]{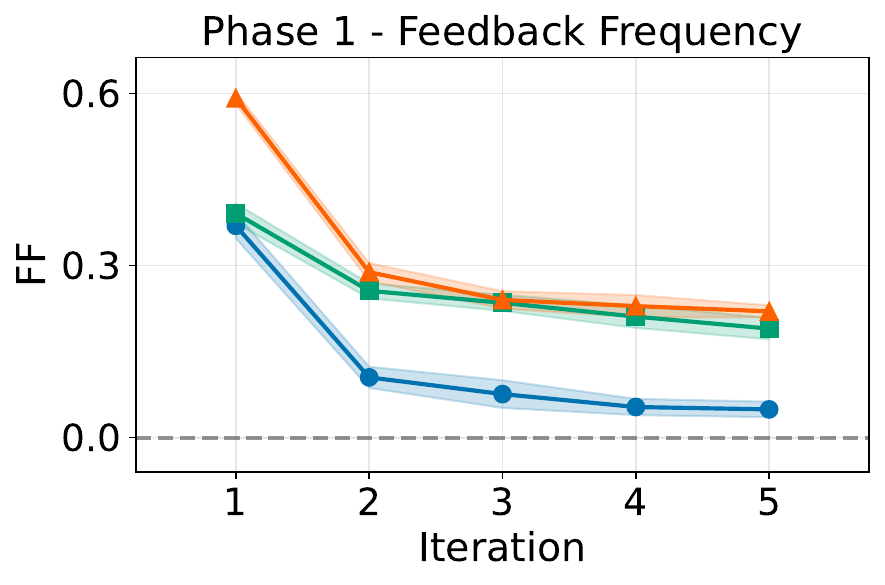}
        \label{fig:faiss_embodied_4_1_4_1_phase1_ff}
    \end{subfigure}
    ~
    \begin{subfigure}[t]{0.315\linewidth}
        \includegraphics[width=\linewidth]{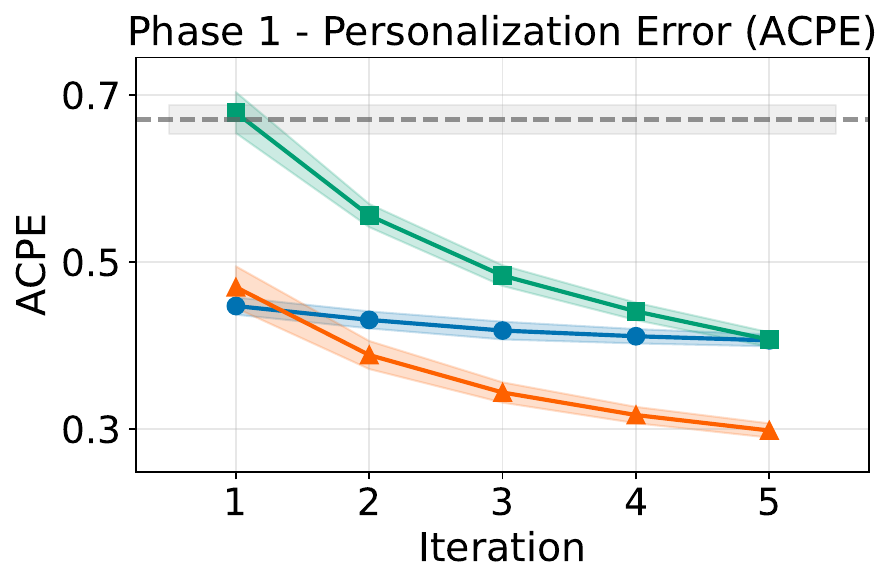}
        \label{fig:faiss_embodied_4_1_4_1_phase1_acpe}
    \end{subfigure}

    \vspace{-2mm}
    \begin{subfigure}[t]{0.315\linewidth}
        \includegraphics[width=\linewidth]{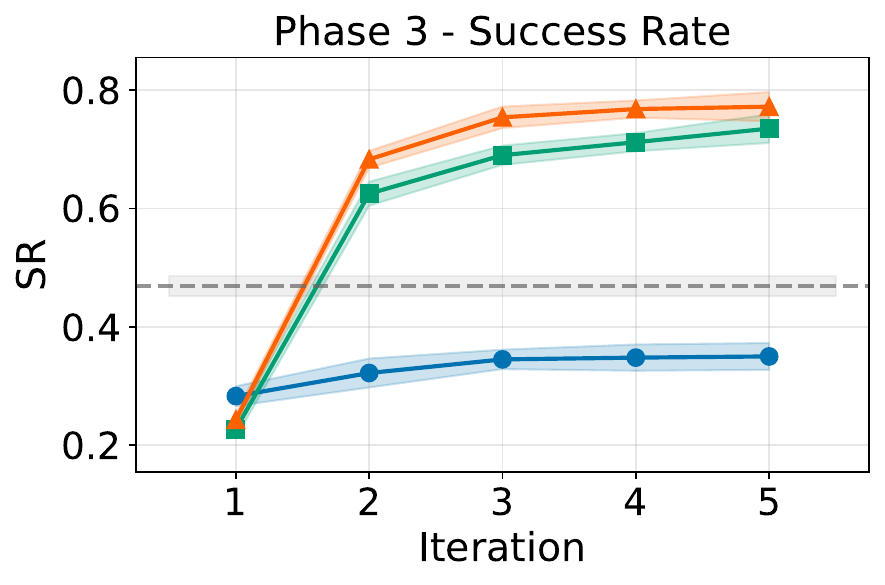}
        \label{fig:faiss_embodied_4_1_4_1_phase3_sr}
    \end{subfigure}    
    ~
    \begin{subfigure}[t]{0.315\linewidth}
        \includegraphics[width=\linewidth]{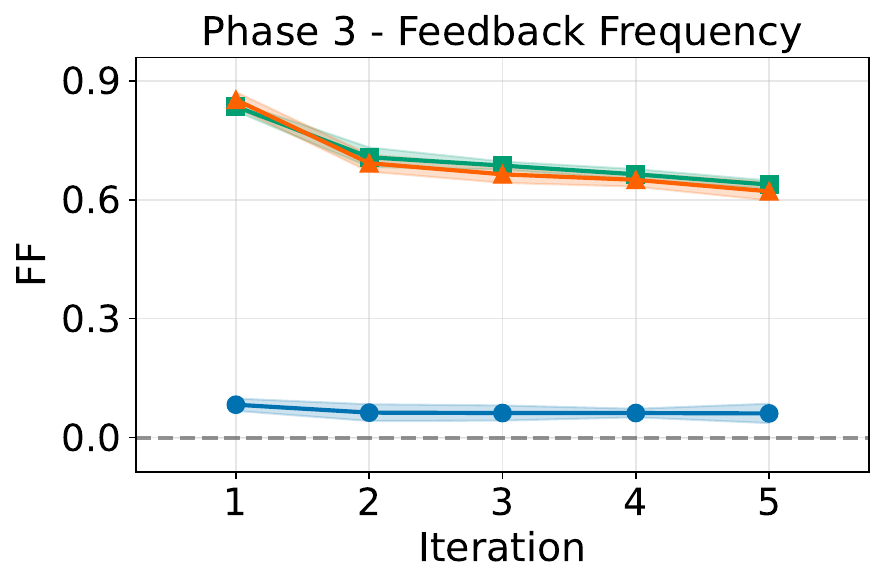}
        \label{fig:faiss_embodied_4_1_4_1_phase3_ff}
    \end{subfigure}
    ~
    \begin{subfigure}[t]{0.315\linewidth}
        \includegraphics[width=\linewidth]{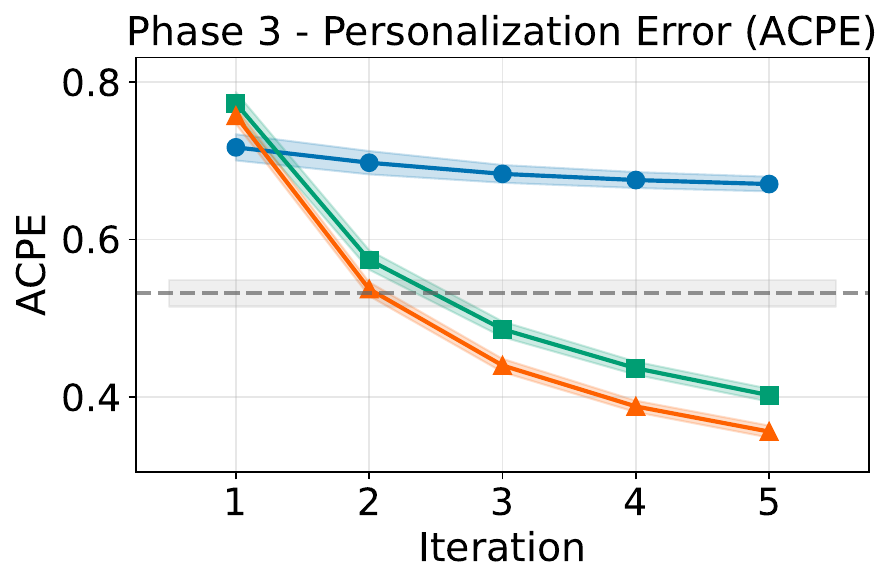}
        \label{fig:faiss_embodied_4_1_4_1_phase3_acpe}
    \end{subfigure}

    \par\vspace{-0.28cm}
    \begin{subfigure}[c]{\linewidth}
        \centering
        \includegraphics[width=0.65\linewidth]{figure/plots/legend.pdf}
    \end{subfigure}

    \caption{\textbf{Embodied manipulation results (Agent: GPT-4o, Human Simulator: GPT-4o, Memory: FAISS)}. Top row: Phase 1 learning curves for success rate (left), feedback frequency (middle), and average cumulative personalization error (ACPE) (right). Bottom row: Phase~3 learning curves for the same metrics. Shaded regions denote standard error.}
    \label{fig:faiss_embodied_gpt_4_1_gpt_4_1}

\end{figure*}

\begin{figure*}[h!]
  \centering
    \begin{subfigure}[t]{0.315\linewidth}
        \includegraphics[width=\linewidth]{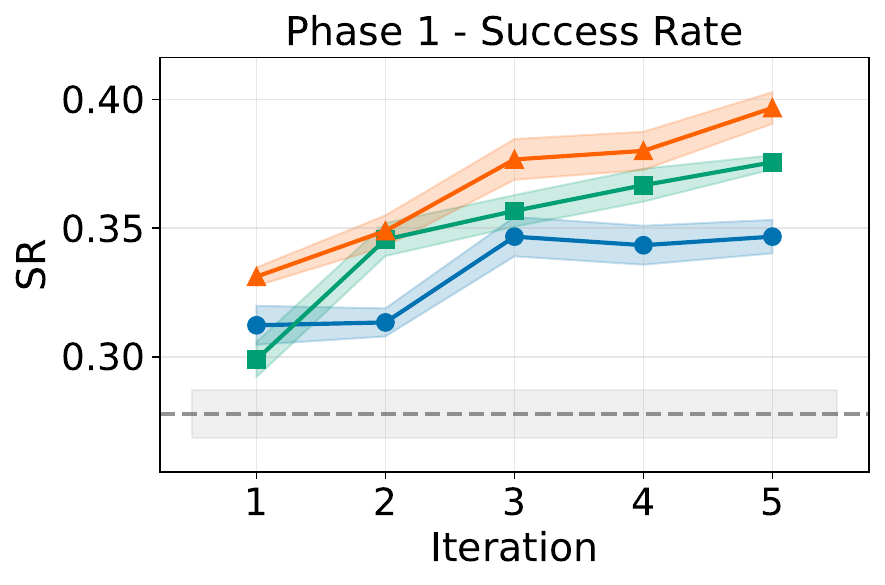}
        \label{fig:faiss_shopping_4_1_4_1_phase1_sr}
    \end{subfigure}    
    ~
    \begin{subfigure}[t]{0.315\linewidth}
        \includegraphics[width=\linewidth]{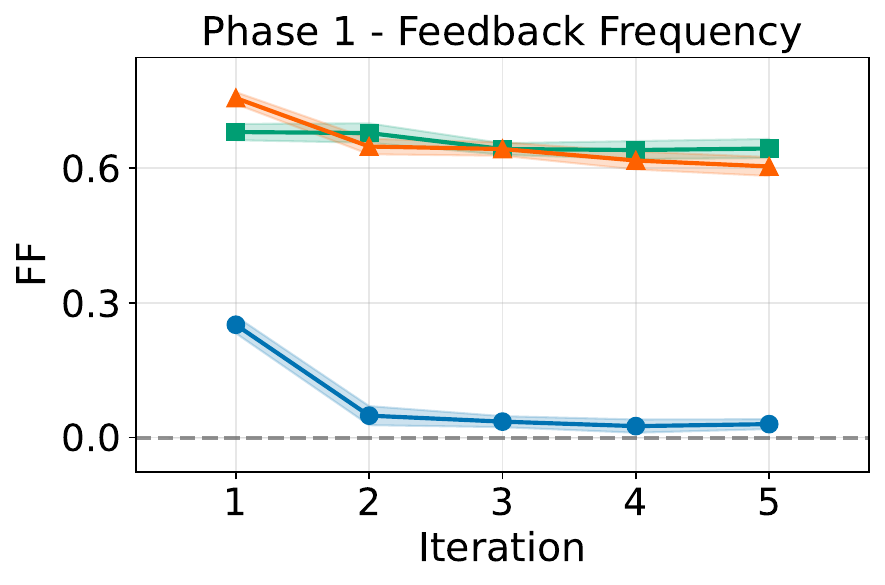}
        \label{fig:faiss_shopping_4_1_4_1_phase1_ff}
    \end{subfigure}
    ~
    \begin{subfigure}[t]{0.315\linewidth}
        \includegraphics[width=\linewidth]{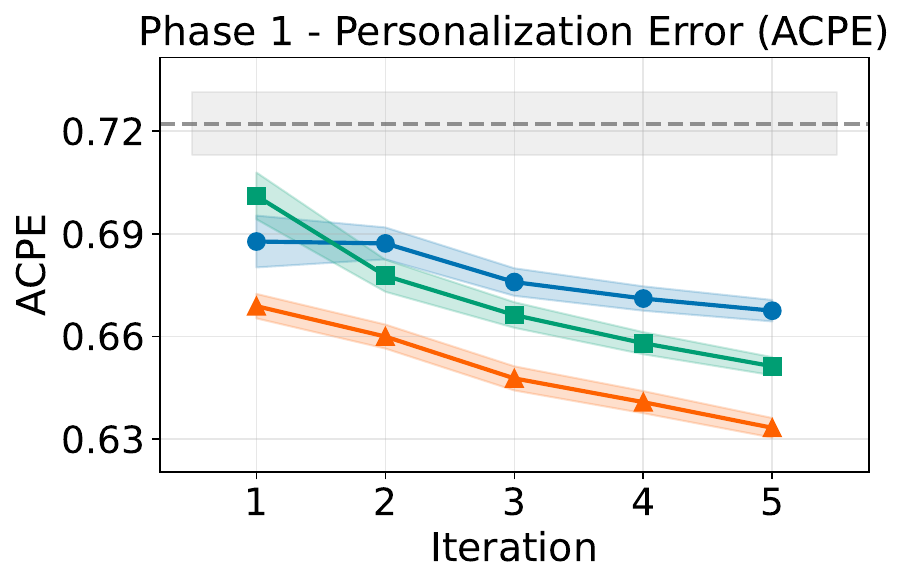}
        \label{fig:faiss_shopping_4_1_4_1_phase1_acpe}
    \end{subfigure}

    \vspace{-2mm}
    \begin{subfigure}[t]{0.315\linewidth}
        \includegraphics[width=\linewidth]{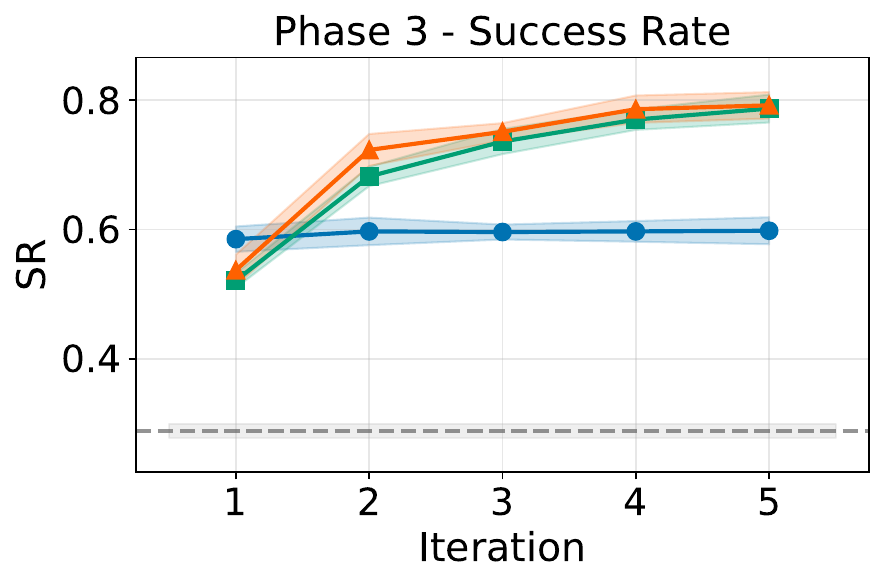}
        \label{fig:faiss_shopping_4_1_4_1_phase3_sr}
    \end{subfigure}    
    ~
    \begin{subfigure}[t]{0.315\linewidth}
        \includegraphics[width=\linewidth]{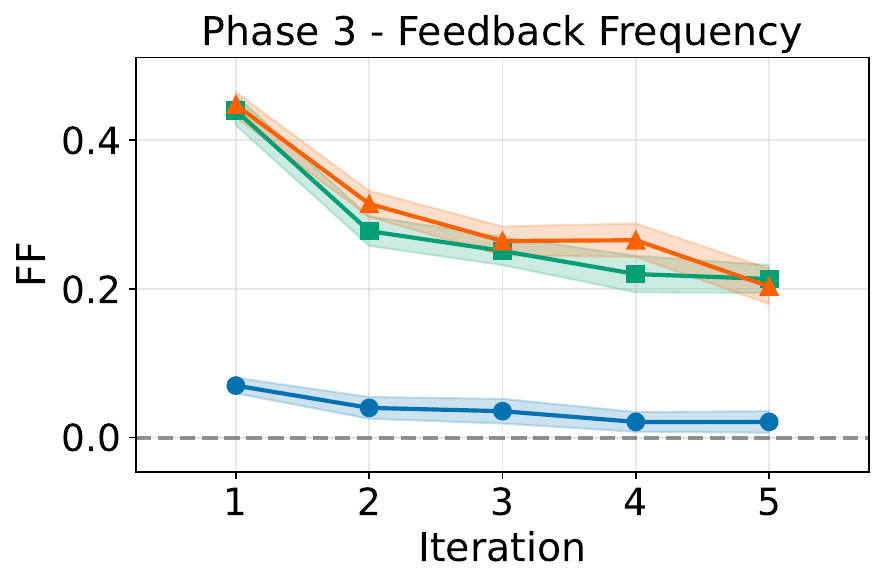}
        \label{fig:faiss_shopping_4_1_4_1_phase3_ff}
    \end{subfigure}
    ~
    \begin{subfigure}[t]{0.315\linewidth}
        \includegraphics[width=\linewidth]{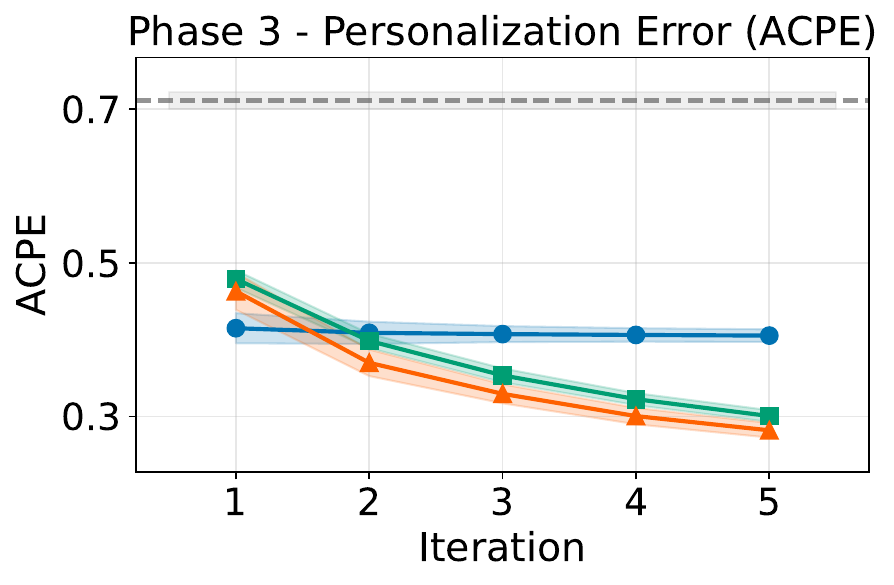}
        \label{fig:faiss_shopping_4_1_4_1_phase3_acpe}
    \end{subfigure}

    \par\vspace{-0.28cm}
    \begin{subfigure}[c]{\linewidth}
        \centering
        \includegraphics[width=0.65\linewidth]{figure/plots/legend.pdf}
    \end{subfigure}

    \caption{\textbf{Online shopping result (Agent: GPT-4o, Human Simulator: GPT-4o, Memory: FAISS)}. Top row: Phase 1 learning curves for success rate (left), feedback frequency (middle), and average cumulative personalization error (ACPE) (right). Bottom row: Phase~3 learning curves for the same metrics. Shaded regions denote standard error.}
    \label{fig:faiss_shopping_gpt_4_1_gpt_4_1}

\end{figure*}

\begin{figure*}[h!]
  \centering
    \begin{subfigure}[t]{0.315\linewidth}
        \includegraphics[width=\linewidth]{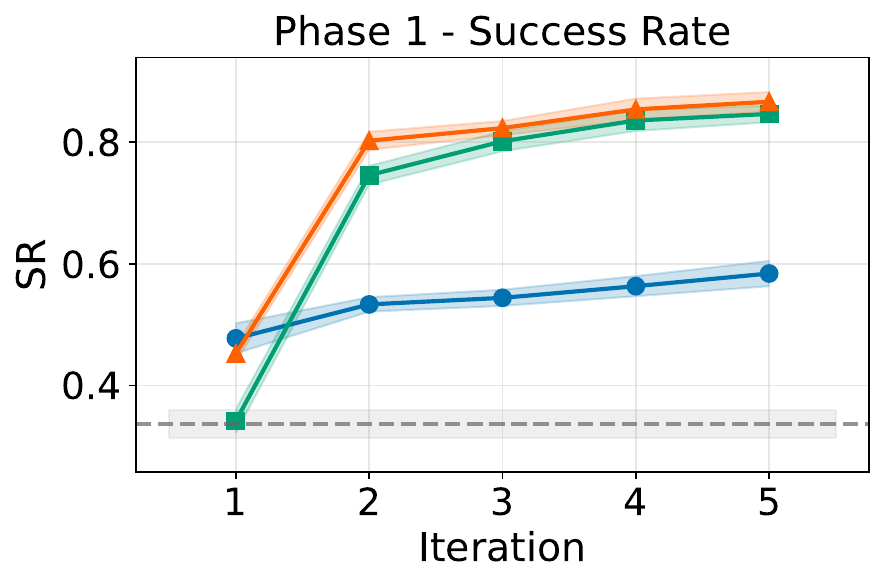}
        \label{fig:embodied_4_1_4_1_phase1_sr}
    \end{subfigure}    
    ~
    \begin{subfigure}[t]{0.315\linewidth}
        \includegraphics[width=\linewidth]{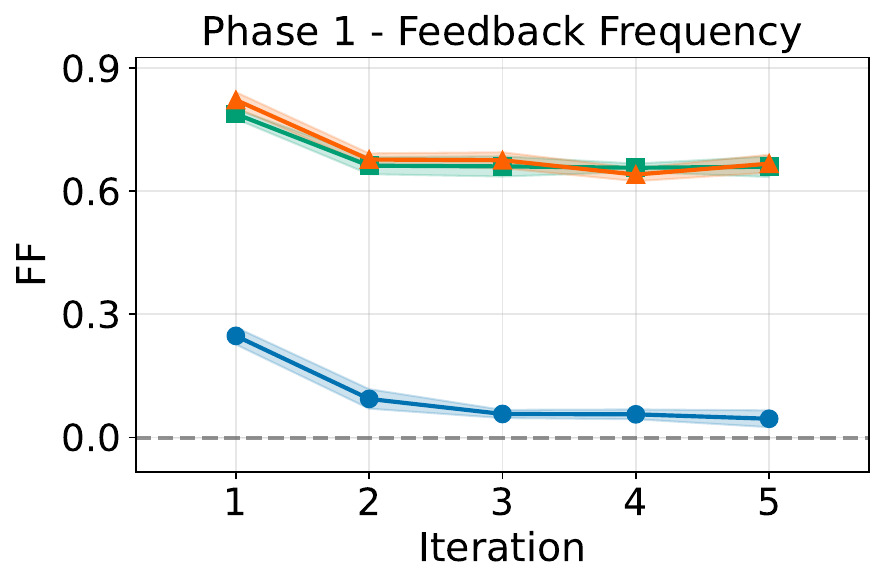}
        \label{fig:embodied_4_1_4_1_phase1_ff}
    \end{subfigure}
    ~
    \begin{subfigure}[t]{0.315\linewidth}
        \includegraphics[width=\linewidth]{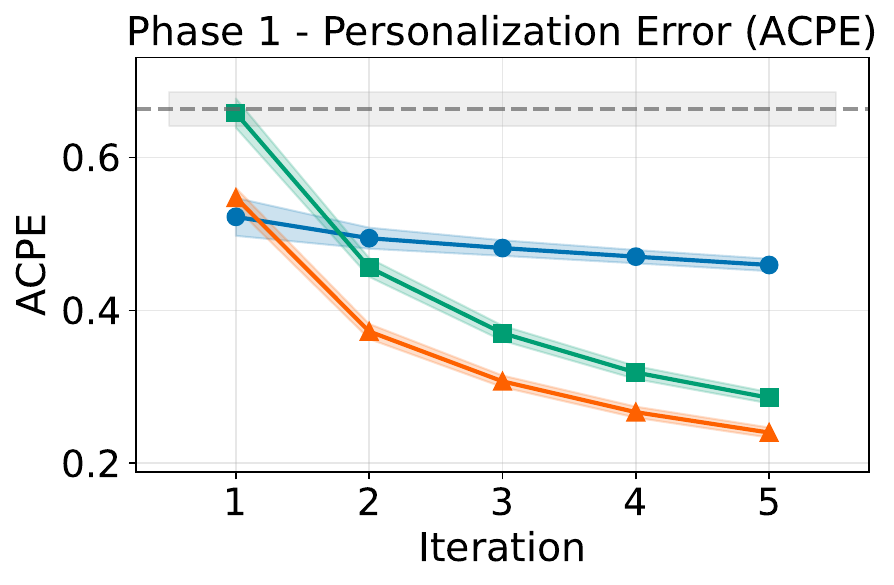}
        \label{fig:embodied_4_1_4_1_phase1_acpe}
    \end{subfigure}

    \vspace{-2mm}
    \begin{subfigure}[t]{0.315\linewidth}
        \includegraphics[width=\linewidth]{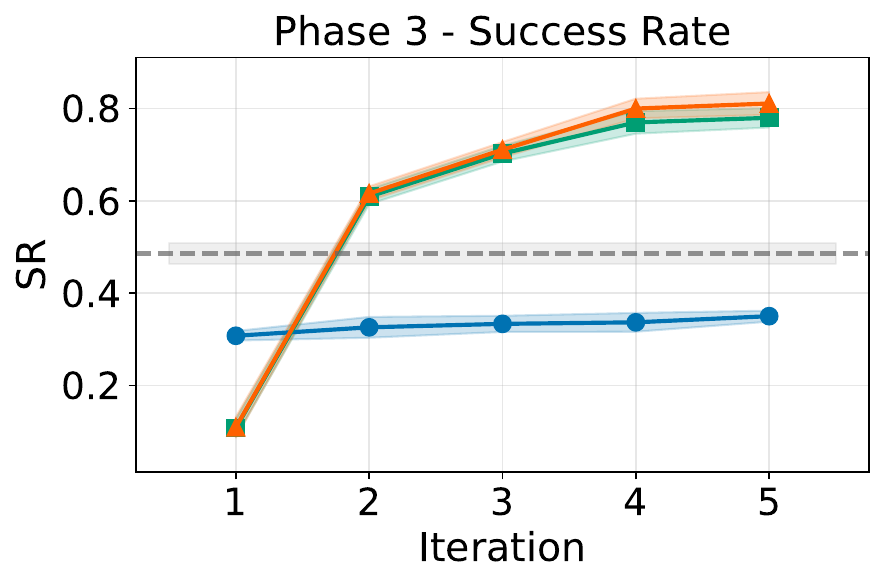}
        \label{fig:embodied_4_1_4_1_phase3_sr}
    \end{subfigure}    
    ~
    \begin{subfigure}[t]{0.315\linewidth}
        \includegraphics[width=\linewidth]{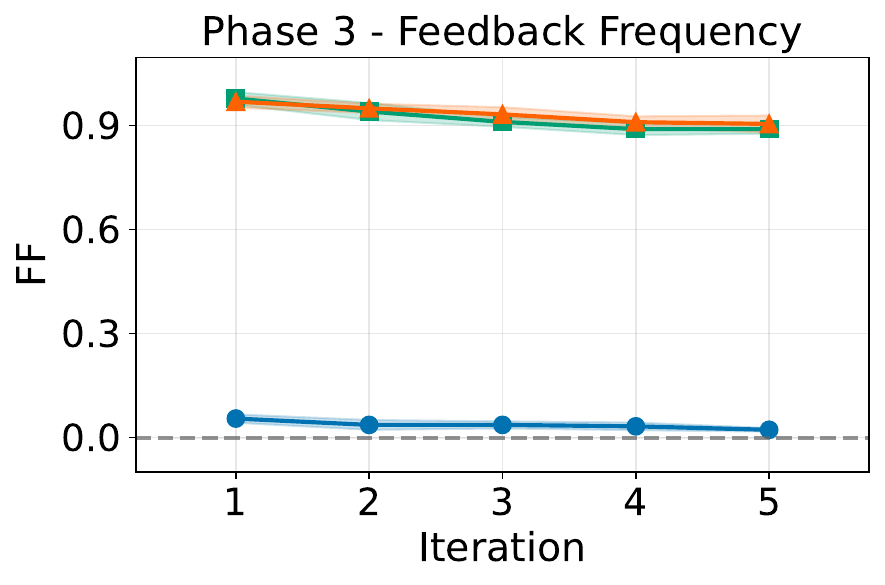}
        \label{fig:embodied_4_1_4_1_phase3_ff}
    \end{subfigure}
    ~
    \begin{subfigure}[t]{0.315\linewidth}
        \includegraphics[width=\linewidth]{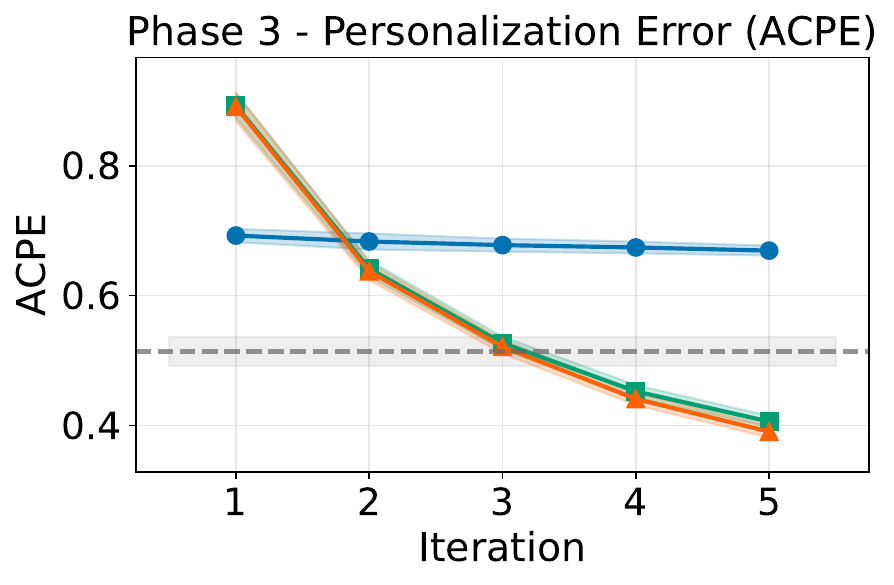}
        \label{fig:embodied_4_1_4_1_phase3_acpe}
    \end{subfigure}

    \par\vspace{-0.28cm}
    \begin{subfigure}[c]{\linewidth}
        \centering
        \includegraphics[width=0.65\linewidth]{figure/plots/legend.pdf}
    \end{subfigure}

    \caption{\textbf{Embodied manipulation results (Agent: GPT-4.1, Human Simulator: GPT-4o, Memory: SQLite)}. Top row: Phase 1 learning curves for success rate (left), feedback frequency (middle), and average cumulative personalization error (ACPE) (right). Bottom row: Phase~3 learning curves for the same metrics. Shaded regions denote standard error.}
    \label{fig:embodied_gpt_4_1_gpt_4_1}

\end{figure*}

\begin{figure*}[h!]
  \centering
    \begin{subfigure}[t]{0.315\linewidth}
        \includegraphics[width=\linewidth]{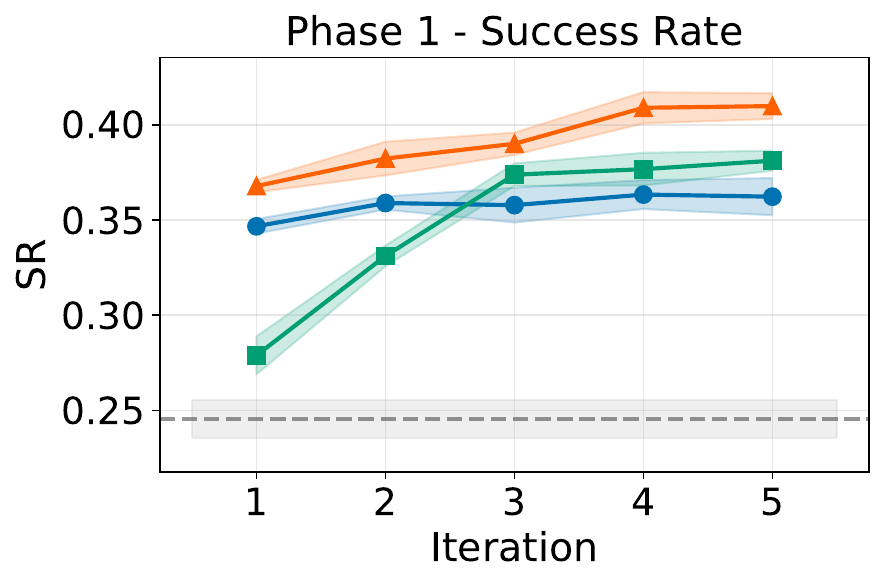}
        \label{fig:shopping_4o_4o_phase1_sr}
    \end{subfigure}    
    ~
    \begin{subfigure}[t]{0.315\linewidth}
        \includegraphics[width=\linewidth]{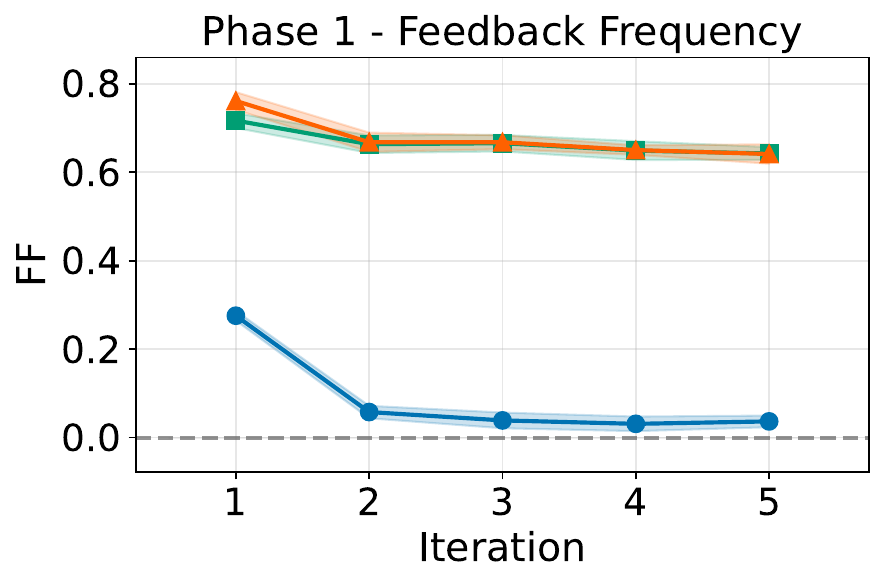}
        \label{fig:shopping_4o_4o_phase1_ff}
    \end{subfigure}
    ~
    \begin{subfigure}[t]{0.315\linewidth}
        \includegraphics[width=\linewidth]{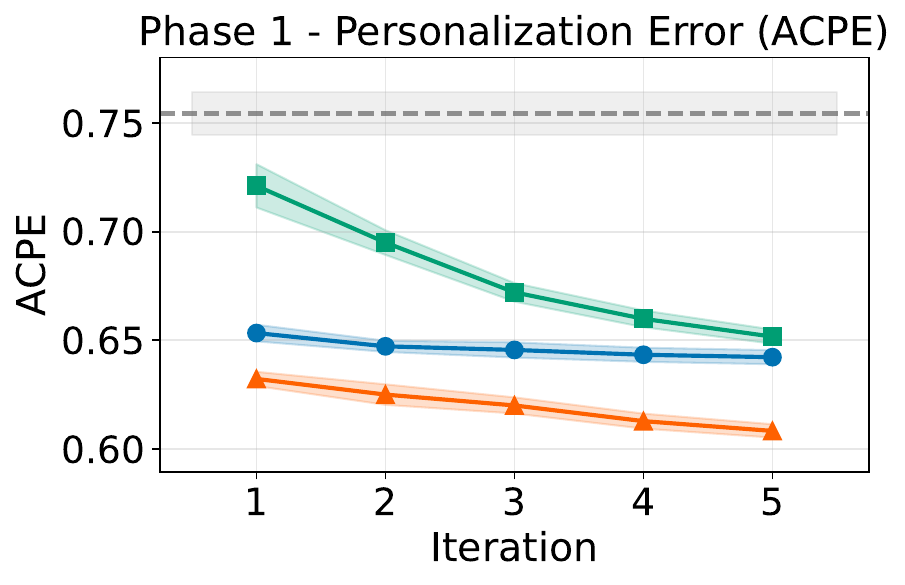}
        \label{fig:shopping_4o_4o_phase1_acpe}
    \end{subfigure}

    \vspace{-2mm}
    \begin{subfigure}[t]{0.315\linewidth}
        \includegraphics[width=\linewidth]{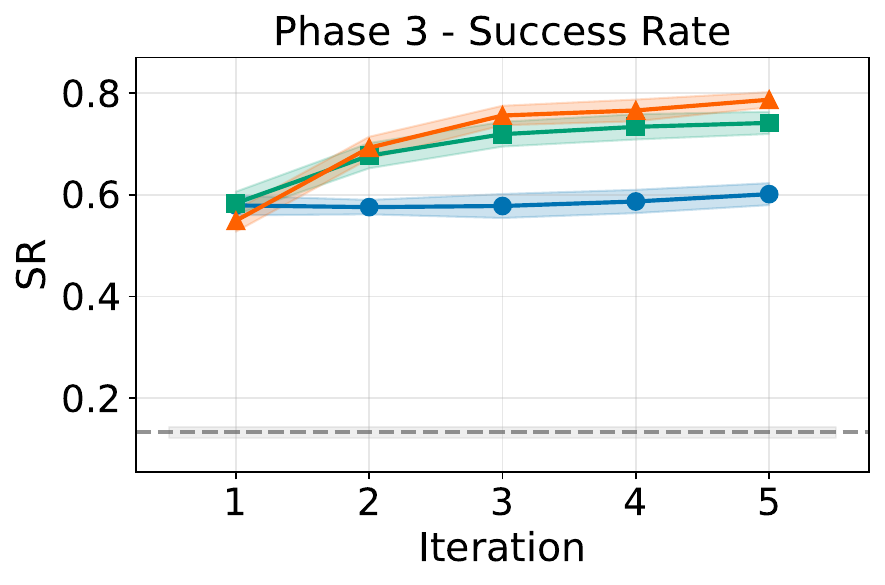}
        \label{fig:shopping_4o_4o_phase3_sr}
    \end{subfigure}    
    ~
    \begin{subfigure}[t]{0.315\linewidth}
        \includegraphics[width=\linewidth]{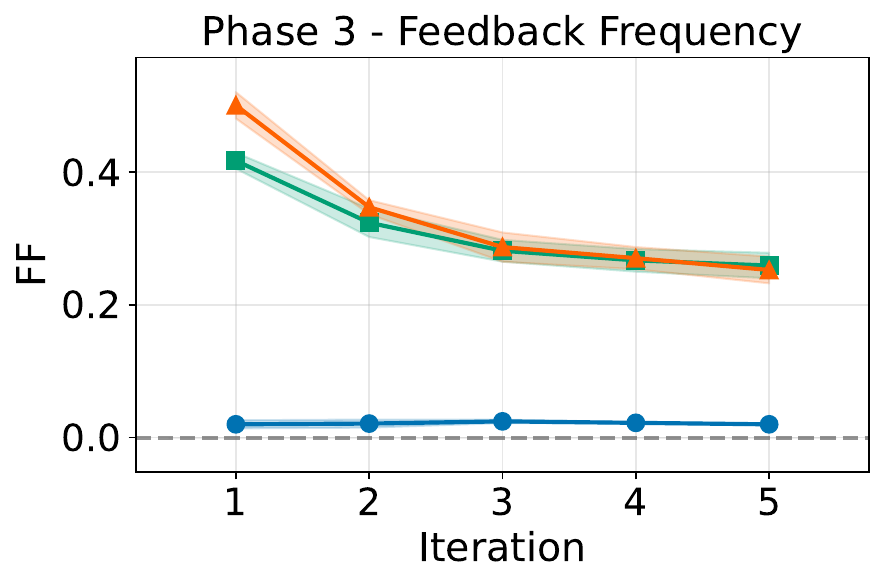}
        \label{fig:shoppping_4o_4o_phase3_ff}
    \end{subfigure}
    ~
    \begin{subfigure}[t]{0.315\linewidth}
        \includegraphics[width=\linewidth]{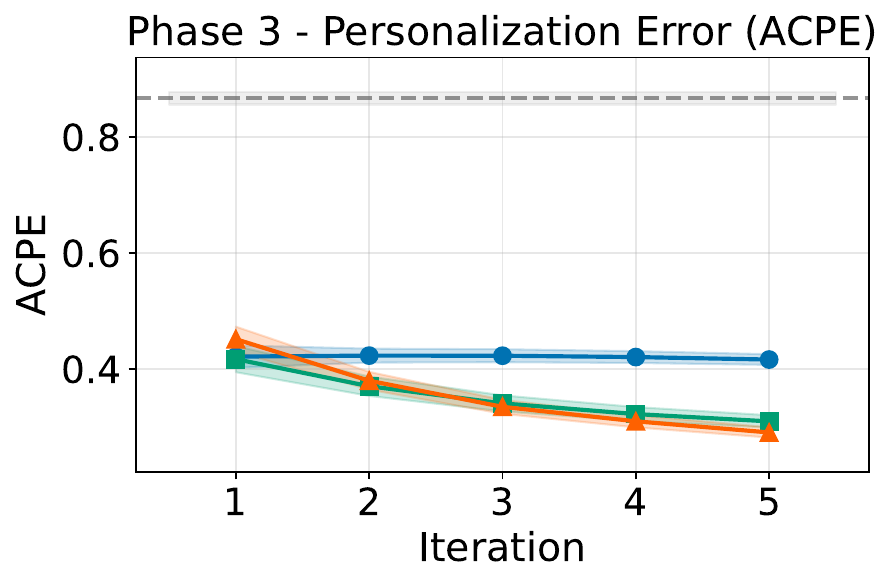}
        \label{fig:shopping_4o_4o_phase3_acpe}
    \end{subfigure}

    \par\vspace{-0.28cm}
    \begin{subfigure}[c]{\linewidth}
        \centering
        \includegraphics[width=0.65\linewidth]{figure/plots/legend.pdf}
    \end{subfigure}

    \caption{\textbf{Online shopping results (Agent: GPT-4.1, Human Simulator: GPT-4o, Memory: SQLite).} Top row: Phase 1 learning curves for success rate (left), feedback frequency (middle), and average cumulative personalization error (ACPE) (right). Bottom row: Phase~3 learning curves for the same metrics. Shaded regions denote standard error.}
    \label{fig:shopping_gpt_4_1_gpt_4_1}
\end{figure*}

\begin{figure*}[h!]
  \centering
    \begin{subfigure}[t]{0.315\linewidth}
        \includegraphics[width=\linewidth]{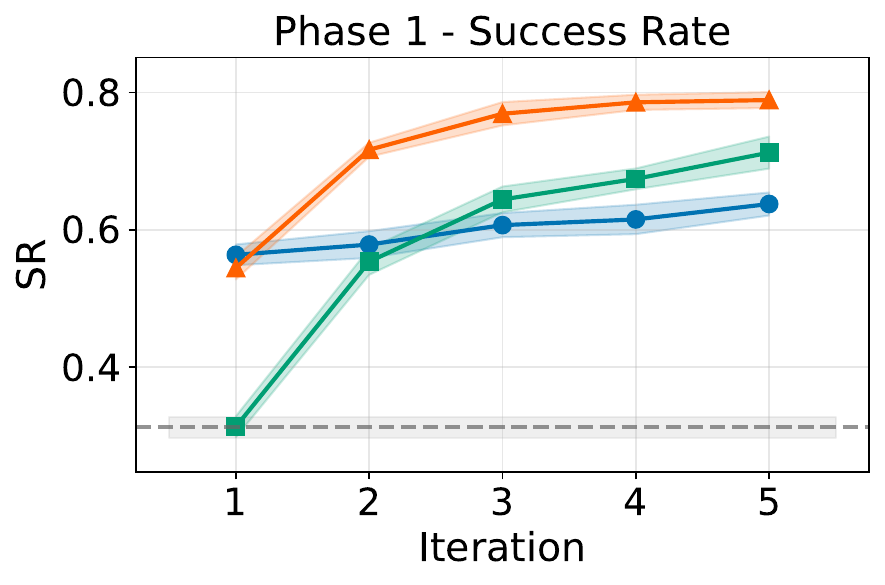}
        \label{fig:embodied_4o_4_1_phase1_sr}
    \end{subfigure}    
    ~
    \begin{subfigure}[t]{0.315\linewidth}
        \includegraphics[width=\linewidth]{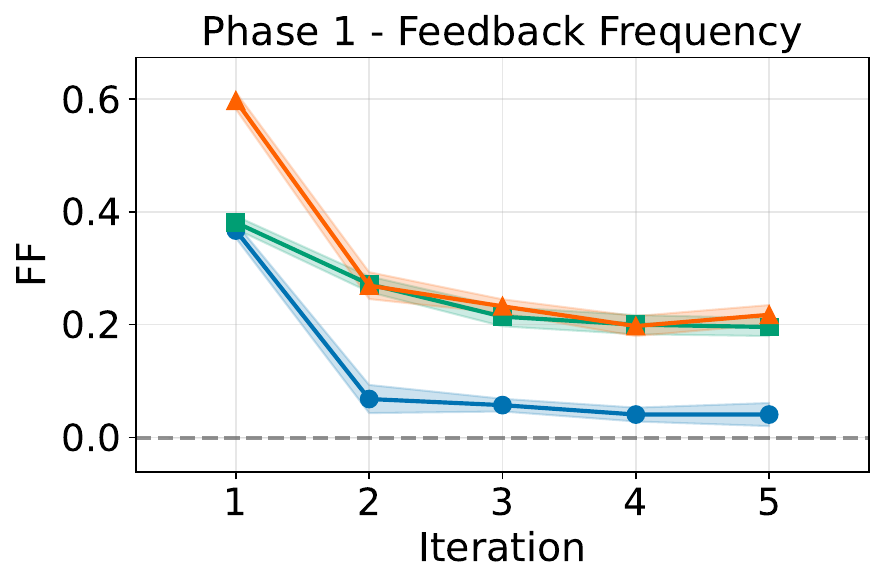}
        \label{fig:embodied_4o_4_1_phase1_ff}
    \end{subfigure}
    ~
    \begin{subfigure}[t]{0.315\linewidth}
        \includegraphics[width=\linewidth]{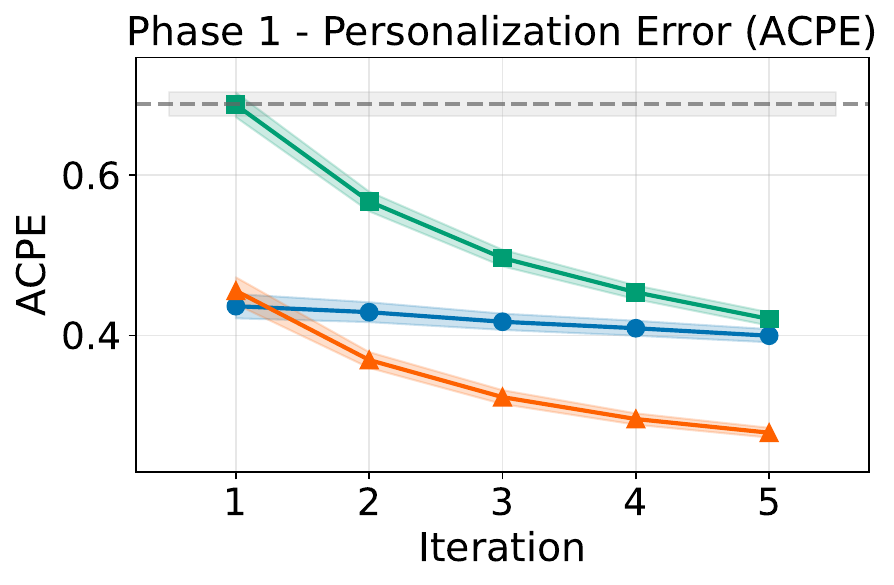}
        \label{fig:embodied_4o_4_1_phase1_acpe}
    \end{subfigure}

    \vspace{-2mm}
    \begin{subfigure}[t]{0.315\linewidth}
        \includegraphics[width=\linewidth]{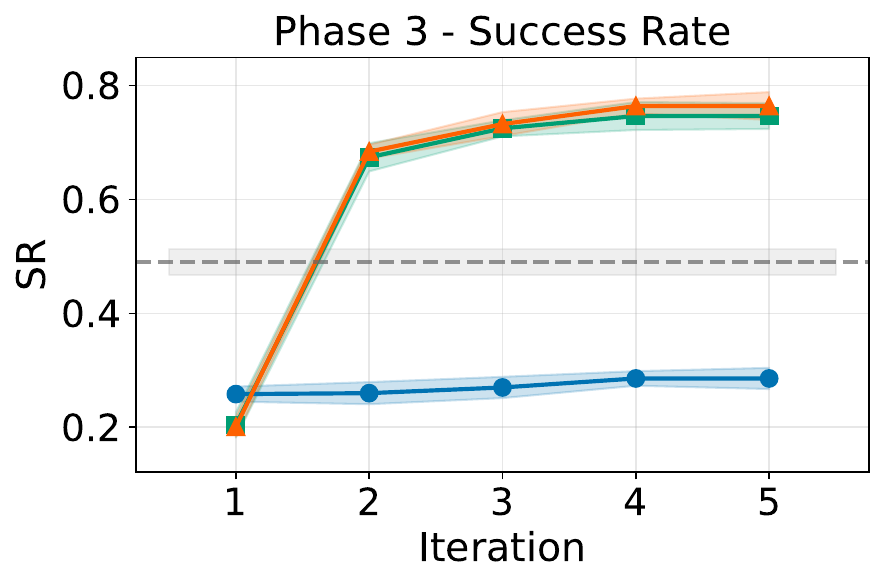}
        \label{fig:embodied_4o_4_1_phase3_sr}
    \end{subfigure}    
    ~
    \begin{subfigure}[t]{0.315\linewidth}
        \includegraphics[width=\linewidth]{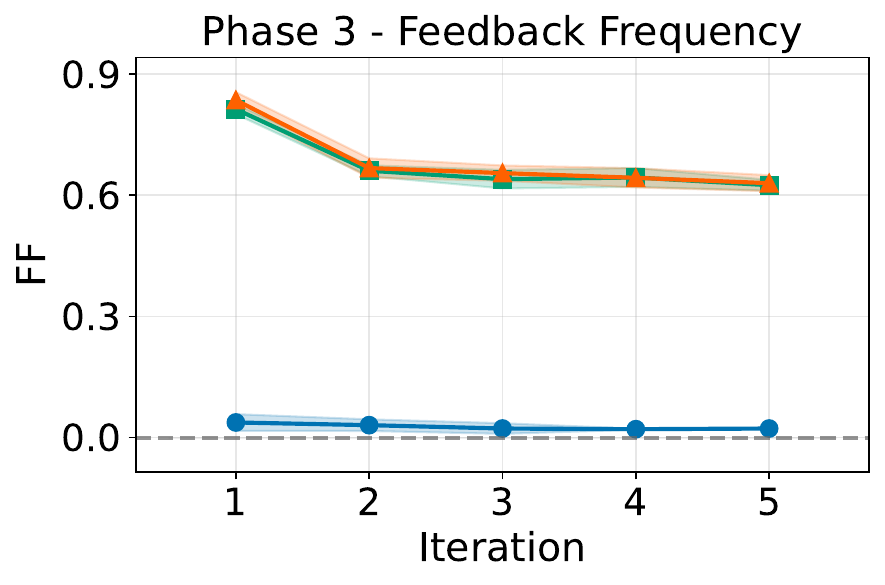}
        \label{fig:embodied_4o_4_1_phase3_ff}
    \end{subfigure}
    ~
    \begin{subfigure}[t]{0.315\linewidth}
        \includegraphics[width=\linewidth]{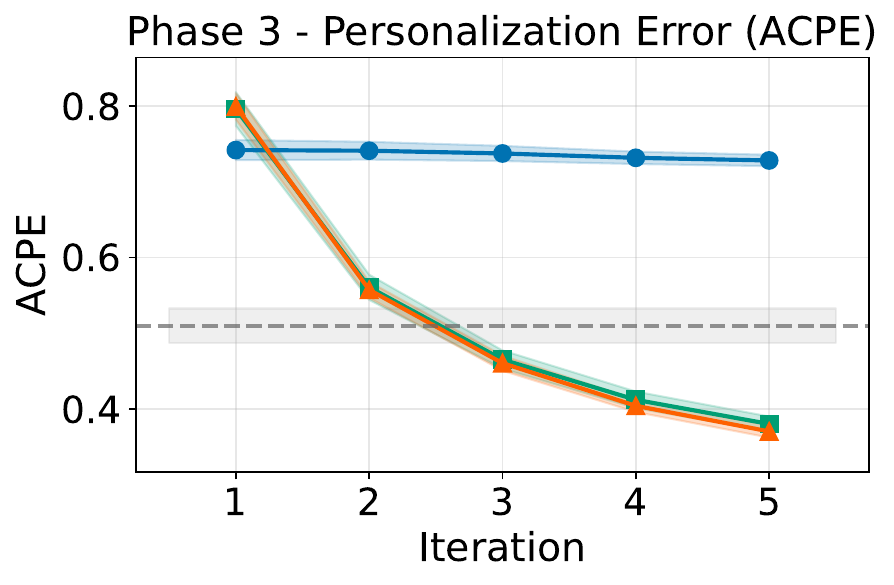}
        \label{fig:embodied_4o_4_1_phase3_acpe}
    \end{subfigure}

    \par\vspace{-0.28cm}
    \begin{subfigure}[c]{\linewidth}
        \centering
        \includegraphics[width=0.65\linewidth]{figure/plots/legend.pdf}
    \end{subfigure}

    \caption{\textbf{Embodied manipulation results (Agent: GPT-4o, Human Simulator: GPT-4.1, Memory: SQLite).} Top row: Phase 1 learning curves for success rate (left), feedback frequency (middle), and average cumulative personalization error (ACPE) (right). Bottom row: Phase~3 learning curves for the same metrics. Shaded regions denote standard error.}
    \label{fig:embodied_gpt_4o_gpt_4_1}

\end{figure*}

\begin{figure*}[h!]
  \centering
    \begin{subfigure}[t]{0.315\linewidth}
        \includegraphics[width=\linewidth]{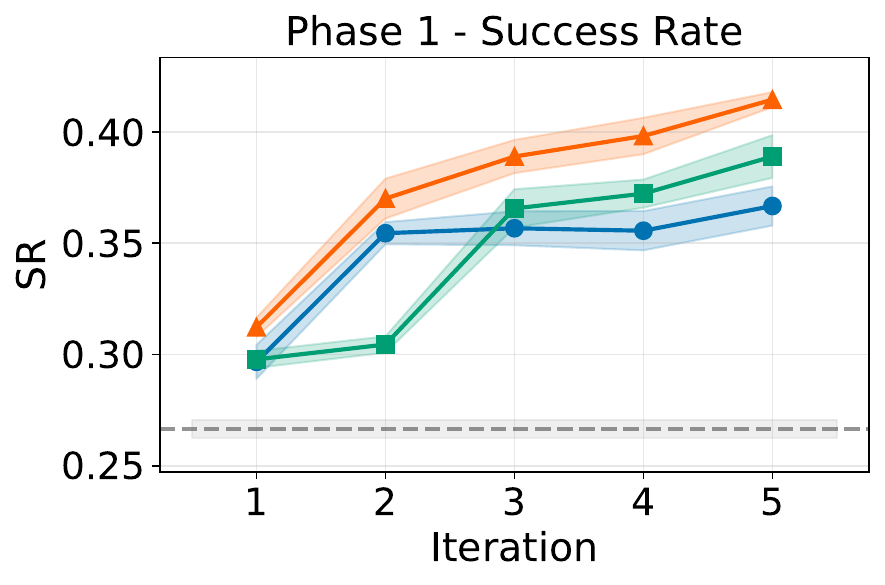}
        \label{fig:shopping_4o_4o_phase1_sr}
    \end{subfigure}    
    ~
    \begin{subfigure}[t]{0.315\linewidth}
        \includegraphics[width=\linewidth]{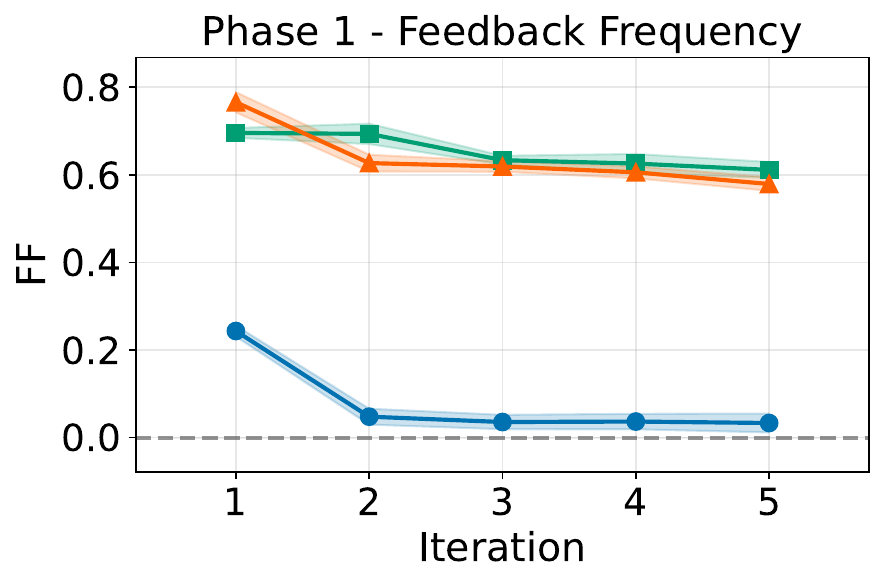}
        \label{fig:shopping_4o_4o_phase1_ff}
    \end{subfigure}
    ~
    \begin{subfigure}[t]{0.315\linewidth}
        \includegraphics[width=\linewidth]{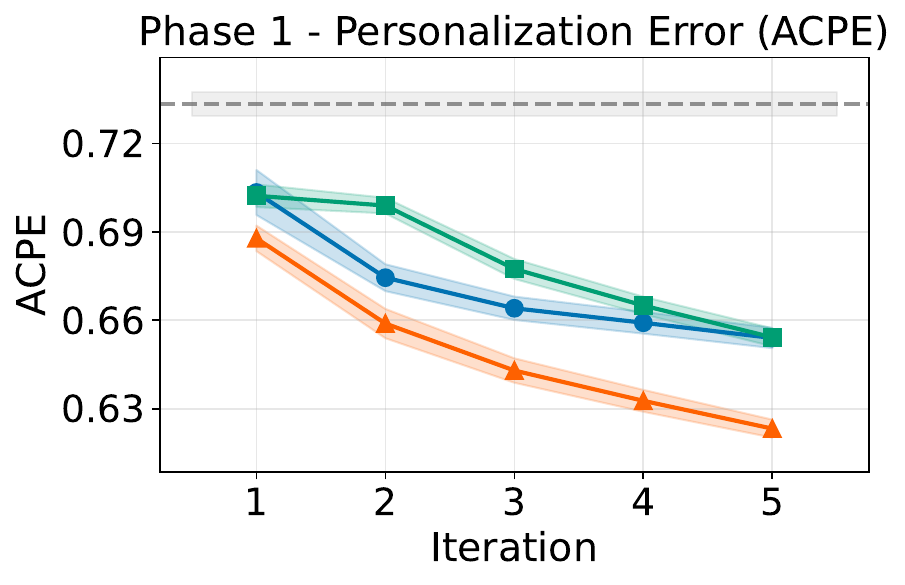}
        \label{fig:shopping_4o_4o_phase1_acpe}
    \end{subfigure}

    \vspace{-2mm}
    \begin{subfigure}[t]{0.315\linewidth}
        \includegraphics[width=\linewidth]{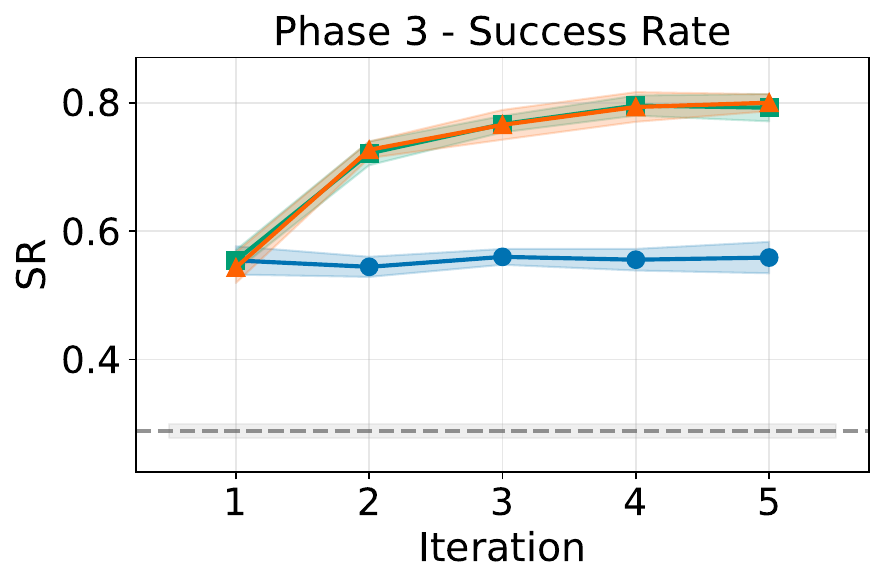}
        \label{fig:shopping_4o_4o_phase3_sr}
    \end{subfigure}    
    ~
    \begin{subfigure}[t]{0.315\linewidth}
        \includegraphics[width=\linewidth]{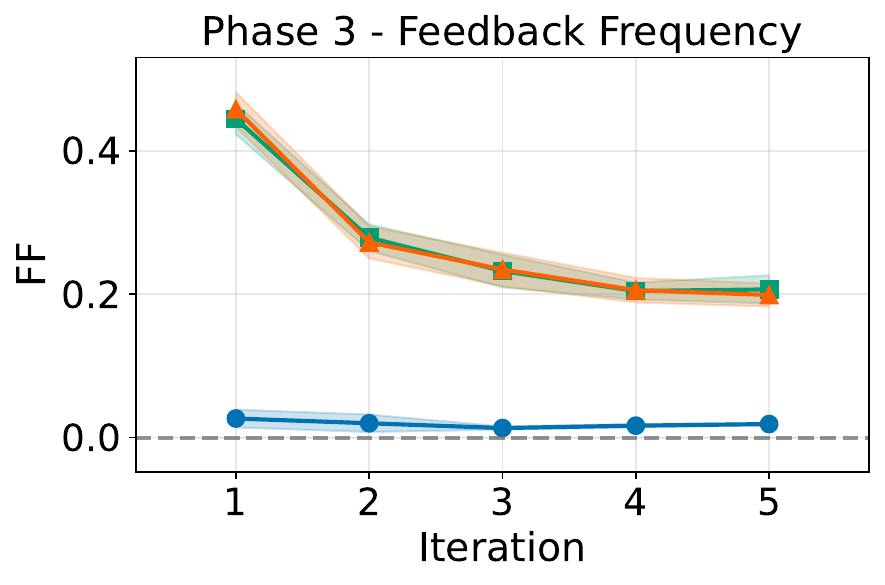}
        \label{fig:shoppping_4o_4o_phase3_ff}
    \end{subfigure}
    ~
    \begin{subfigure}[t]{0.315\linewidth}
        \includegraphics[width=\linewidth]{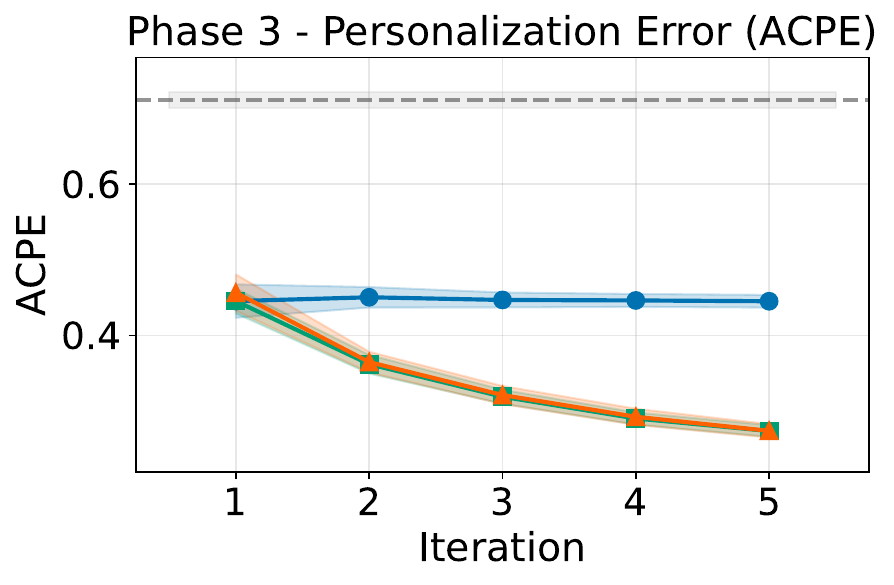}
        \label{fig:shopping_4o_4o_phase3_acpe}
    \end{subfigure}

    \par\vspace{-0.28cm}
    \begin{subfigure}[c]{\linewidth}
        \centering
        \includegraphics[width=0.65\linewidth]{figure/plots/legend.pdf}
    \end{subfigure}

    \caption{\textbf{Online shopping results (Agent: GPT-4o, Human Simulator: GPT-4.1, Memory: SQLite).} Top row: Phase 1 learning curves for success rate (left), feedback frequency (middle), and average cumulative personalization error (ACPE) (right). Bottom row: Phase~3 learning curves for the same metrics. Shaded regions denote standard error.}
    \label{fig:shopping_gpt_4o_gpt_4_1}
\end{figure*}

\begin{figure*}[h!]
  \centering
    \begin{subfigure}[t]{0.315\linewidth}
        \includegraphics[width=\linewidth]{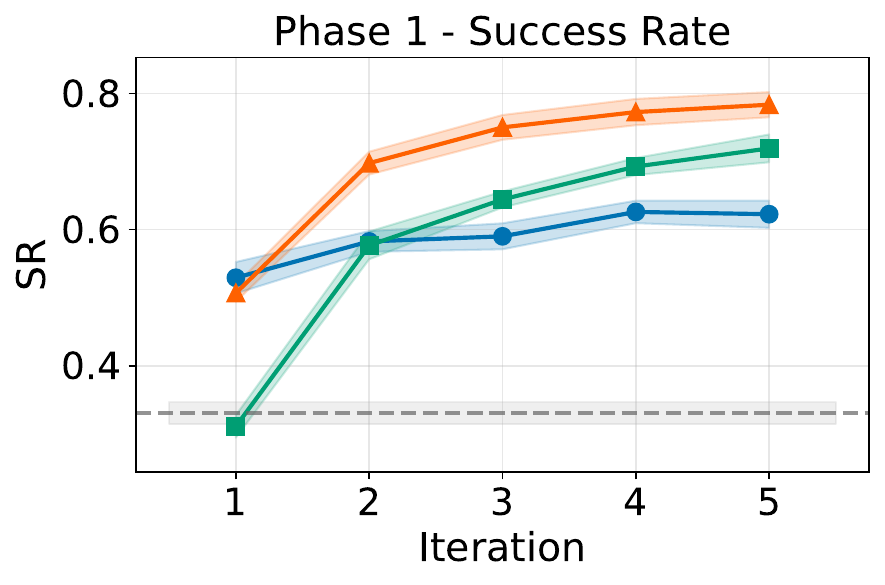}
        \label{fig:embodied_4o_llama4_phase1_sr}
    \end{subfigure}    
    ~
    \begin{subfigure}[t]{0.315\linewidth}
        \includegraphics[width=\linewidth]{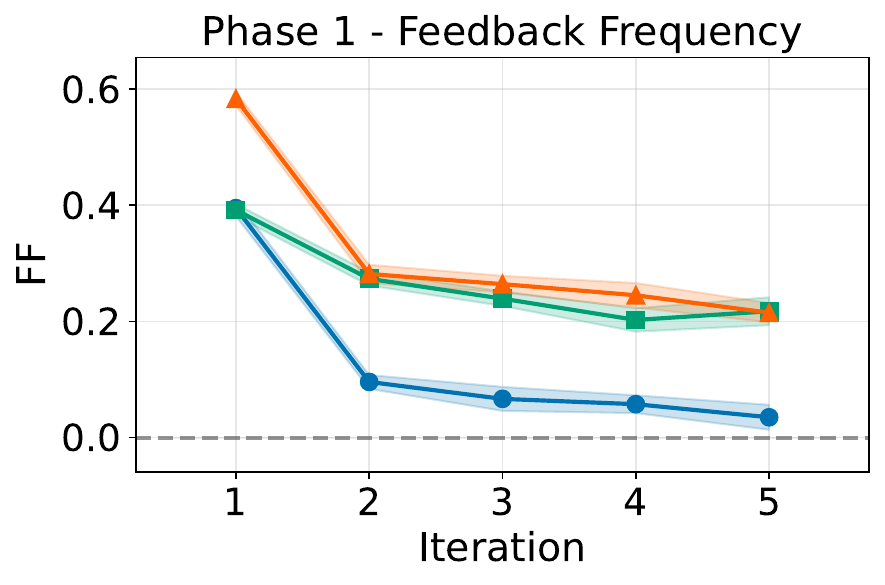}
        \label{fig:embodied_4o_llama4_phase1_ff}
    \end{subfigure}
    ~
    \begin{subfigure}[t]{0.315\linewidth}
        \includegraphics[width=\linewidth]{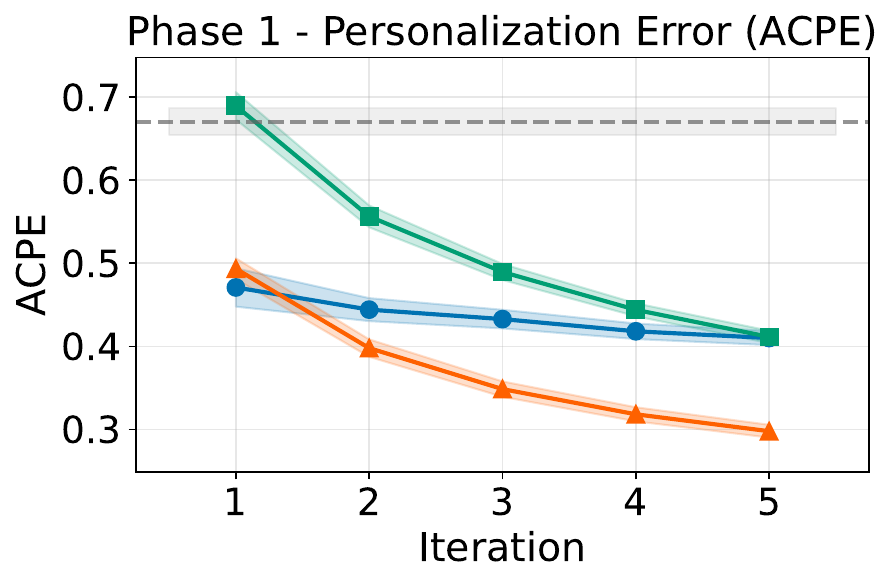}
        \label{fig:embodied_4o_llama4_phase1_acpe}
    \end{subfigure}

    \vspace{-2mm}
    \begin{subfigure}[t]{0.315\linewidth}
        \includegraphics[width=\linewidth]{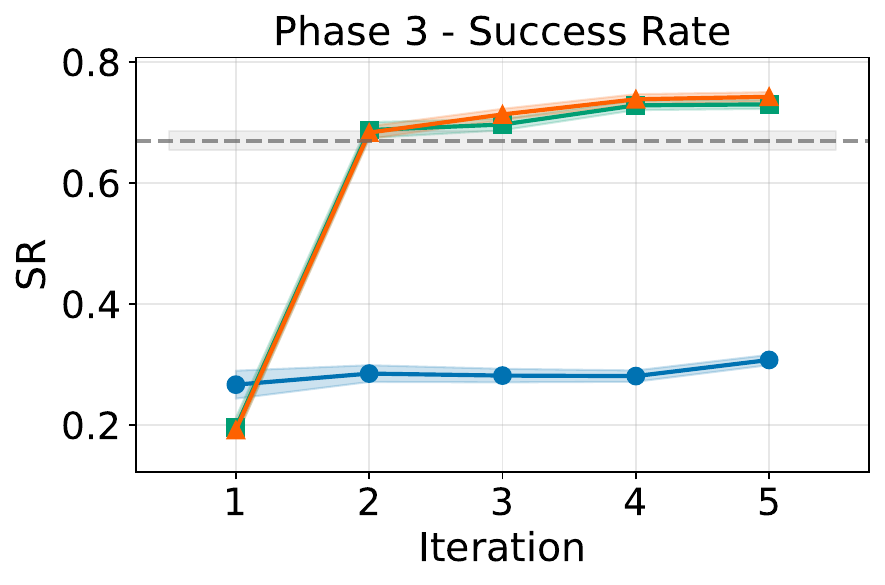}
        \label{fig:embodied_4o_llama4_phase3_sr}
    \end{subfigure}    
    ~
    \begin{subfigure}[t]{0.315\linewidth}
        \includegraphics[width=\linewidth]{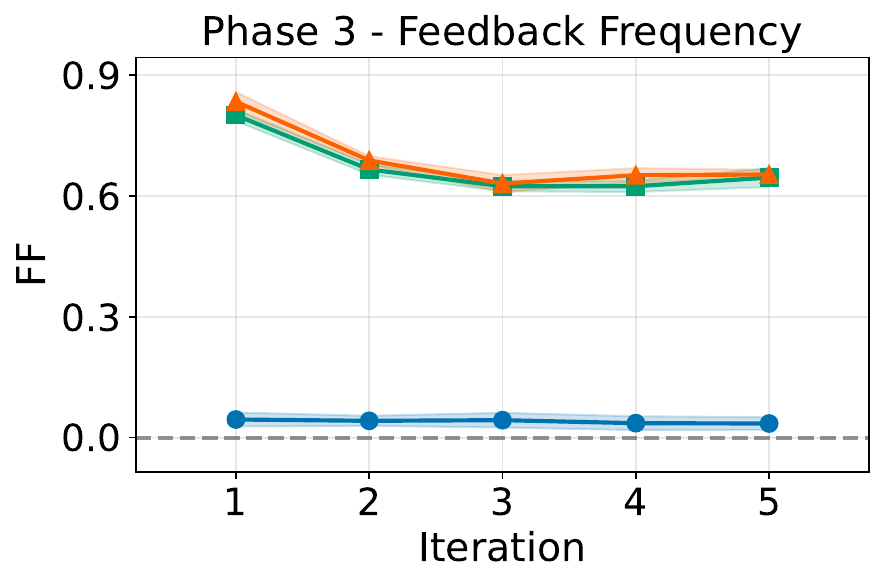}
        \label{fig:embodied_4o_llama4_phase3_ff}
    \end{subfigure}
    ~
    \begin{subfigure}[t]{0.315\linewidth}
        \includegraphics[width=\linewidth]{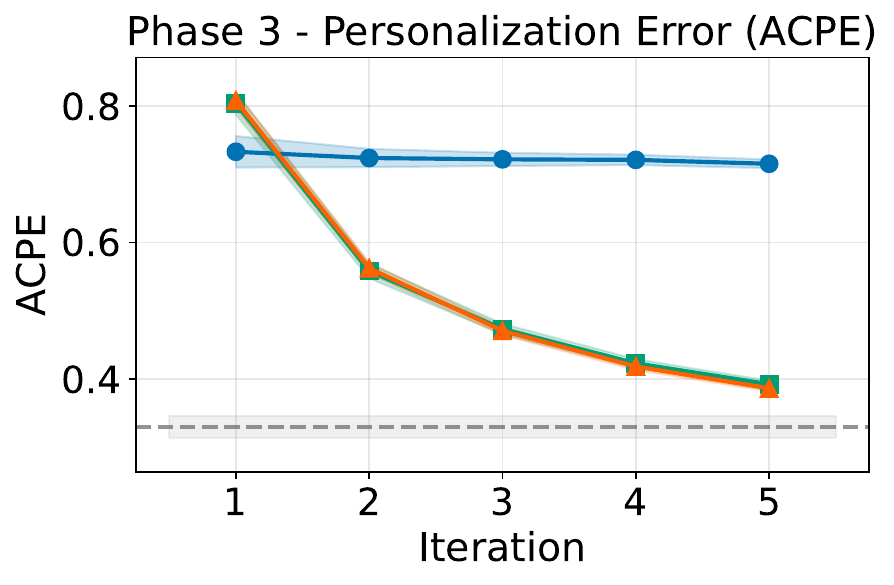}
        \label{fig:embodied_4o_llama4_phase3_acpe}
    \end{subfigure}

    \par\vspace{-0.28cm}
    \begin{subfigure}[c]{\linewidth}
        \centering
        \includegraphics[width=0.65\linewidth]{figure/plots/legend.pdf}
    \end{subfigure}

    \caption{\textbf{Embodied manipulation results (Agent: GPT-4o, Human Simulator: Llama-4-Scout, Memory: SQLite).} Top row: Phase 1 learning curves for success rate (left), feedback frequency (middle), and average cumulative personalization error (ACPE) (right). Bottom row: Phase~3 learning curves for the same metrics. Shading denote standard error.}
    \label{fig:embodied_gpt_4o_llama4}

\end{figure*}

\begin{figure*}[h!]
  \centering
    \begin{subfigure}[t]{0.315\linewidth}
        \includegraphics[width=\linewidth]{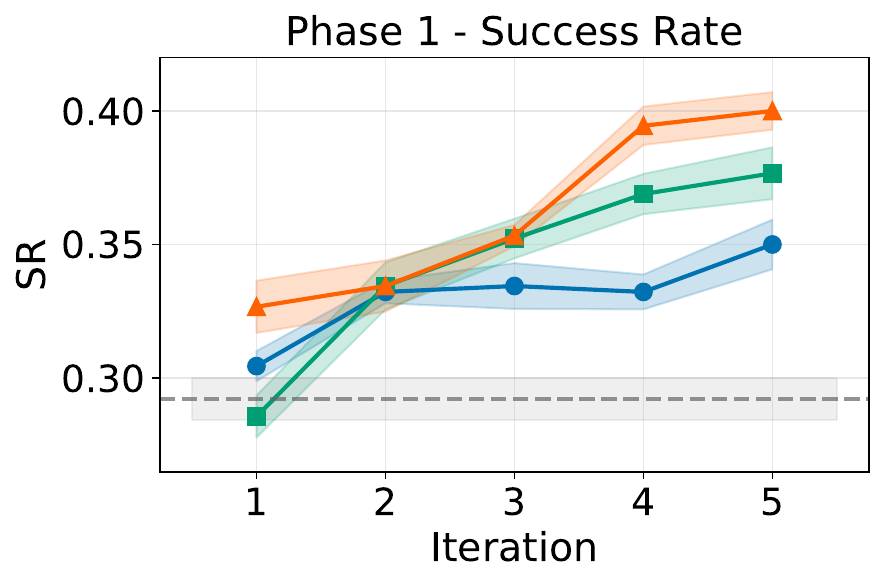}
        \label{fig:shopping_4o_llama4_phase1_sr}
    \end{subfigure}    
    ~
    \begin{subfigure}[t]{0.315\linewidth}
        \includegraphics[width=\linewidth]{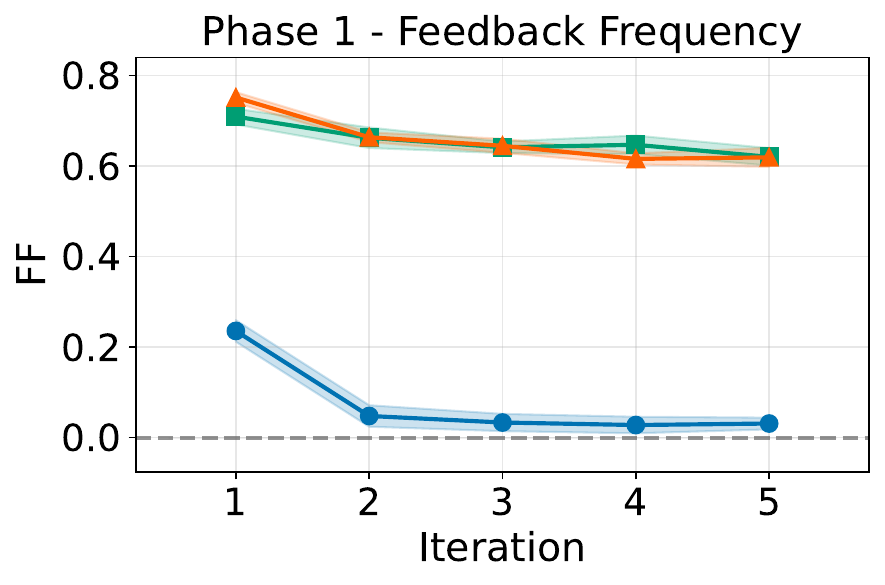}
        \label{fig:shopping_4o_llama4_phase1_ff}
    \end{subfigure}
    ~
    \begin{subfigure}[t]{0.315\linewidth}
        \includegraphics[width=\linewidth]{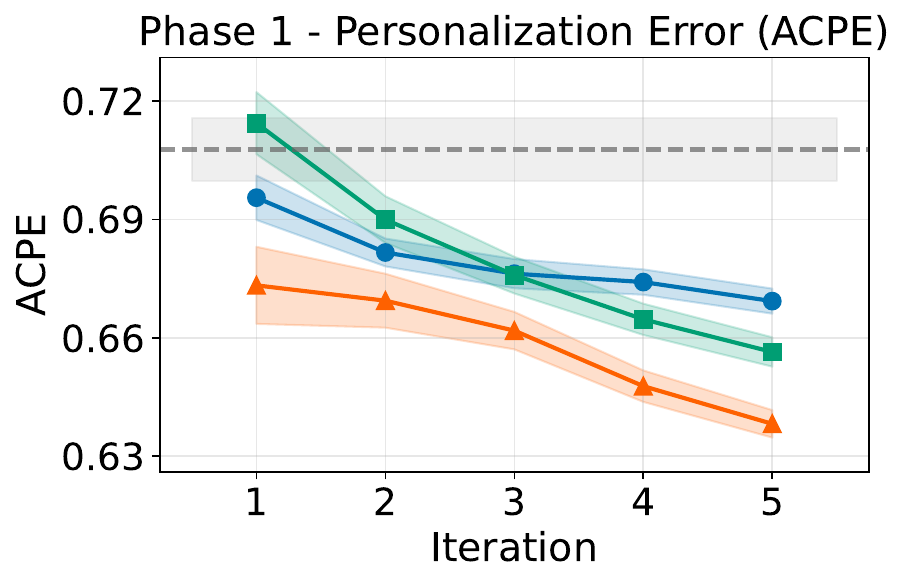}
        \label{fig:shopping_4o_llama4_phase1_acpe}
    \end{subfigure}

    \vspace{-2mm}
    \begin{subfigure}[t]{0.315\linewidth}
        \includegraphics[width=\linewidth]{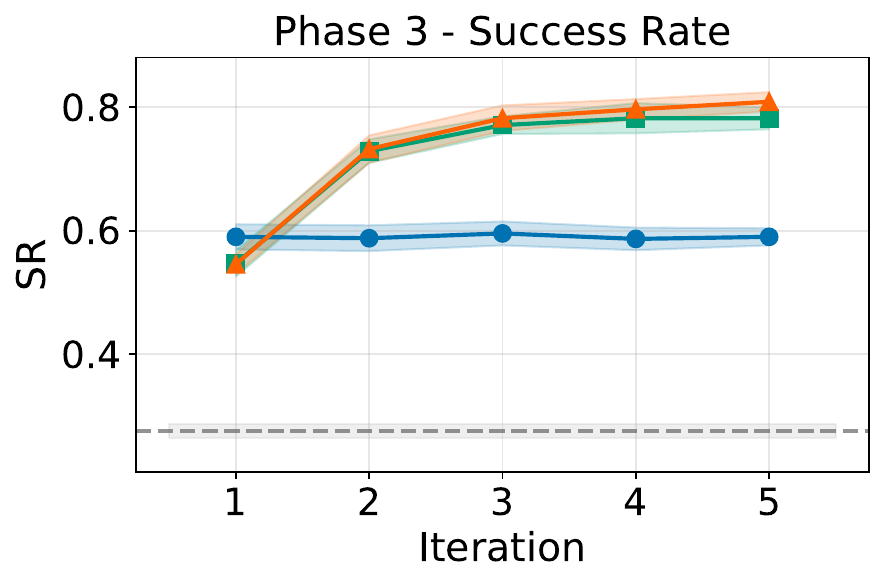}
        \label{fig:shopping_4o_llama4_phase3_sr}
    \end{subfigure}    
    ~
    \begin{subfigure}[t]{0.315\linewidth}
        \includegraphics[width=\linewidth]{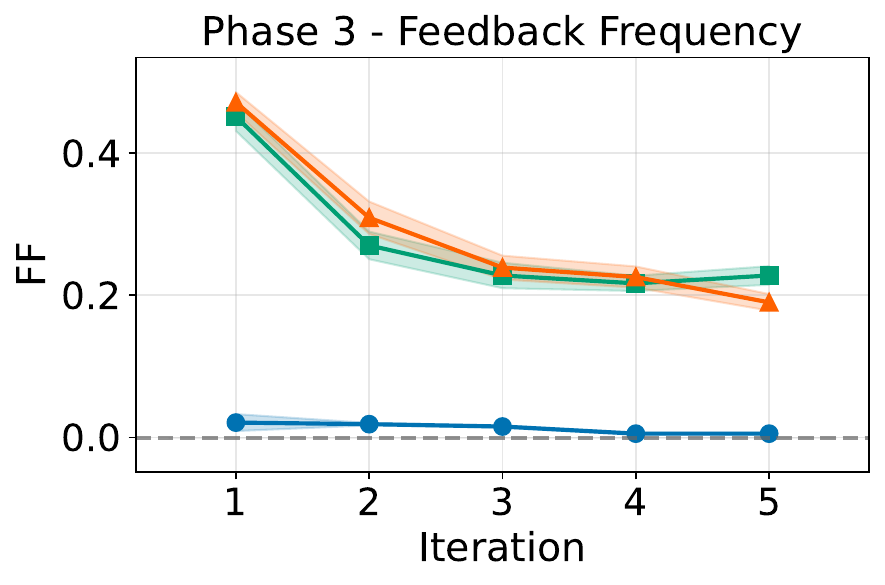}
        \label{fig:shoppping_4o_llama4_phase3_ff}
    \end{subfigure}
    ~
    \begin{subfigure}[t]{0.315\linewidth}
        \includegraphics[width=\linewidth]{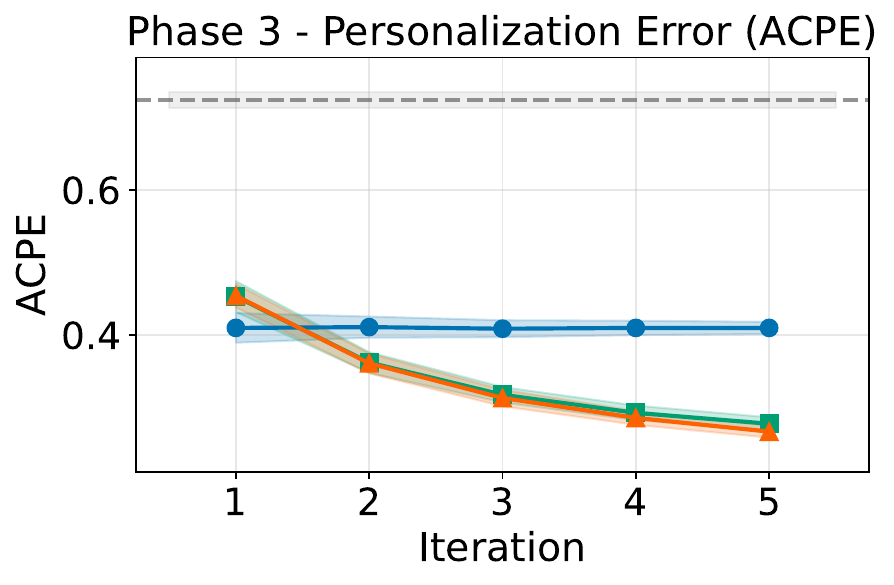}
        \label{fig:shopping_4o_llama4_phase3_acpe}
    \end{subfigure}

    \par\vspace{-0.28cm}
    \begin{subfigure}[c]{\linewidth}
        \centering
        \includegraphics[width=0.65\linewidth]{figure/plots/legend.pdf}
    \end{subfigure}

    \caption{\textbf{Online shopping results (Agent: GPT-4.1, Human Simulator: Llama-4-Scout, Memory: SQLite).} Top row: Phase 1 learning curves for success rate (left), feedback frequency (middle), and average cumulative personalization error (ACPE) (right). Bottom row: Phase~3 learning curves for the same metrics. Shaded regions denote standard error.}
    \label{fig:shopping_gpt_4o_llama4}
\end{figure*}

\end{document}